\newcommand*{\citet}[1]{\AtNextCite{\AtEachCitekey{\defcounter{maxnames}{2}}} \textcite{#1}}
\newcommand*{\citep}[1]{\cite{#1}}
\begin{document}
\title{On Private and Robust Bandits}
\author {
  Yulian Wu \thanks{Equal contribution.}  \thanks{King Abdullah University of Science and Technology, Thuwal, Saudi Arabia. Email: \texttt{yulian.wu@kaust.edu.sa}} \quad
     Xingyu Zhou \footnotemark[1]  \thanks{Wayne State University, Detroit, USA.  Email: \texttt{xingyu.zhou@wayne.edu}}\quad   
     Youming Tao \thanks{Shandong University, Qingdao, China. Email: \texttt{ym.tao99@mail.sdu.edu.cn }} \quad
     Di Wang \thanks{King Abdullah University of Science and Technology, Thuwal, Saudi Arabia. Email: \texttt{di.wang@kaust.edu.sa}} 
}

\date{}

\maketitle

\begin{abstract}
We study private and robust multi-armed bandits (MABs), where the agent receives Huber's contaminated heavy-tailed rewards and meanwhile needs to ensure differential privacy. We first present its minimax lower bound, characterizing the information-theoretic limit of regret with respect to privacy budget, contamination level and heavy-tailedness. Then, we propose a meta-algorithm that builds on a private and robust mean estimation sub-routine \texttt{PRM} that essentially relies on reward truncation and the Laplace mechanism only.  For two different heavy-tailed settings, we give specific schemes of \texttt{PRM}, which enable us to achieve nearly-optimal regret. As by-products of our main results, we also give the first minimax lower bound for private heavy-tailed MABs (i.e., without contamination). Moreover, our two proposed truncation-based \texttt{PRM} achieve the optimal trade-off between estimation accuracy, privacy and robustness. Finally, we support our theoretical results with experimental studies.
\end{abstract}

\newpage
\tableofcontents
\newpage

\section{Introduction}
The multi-armed bandit (MAB)~\cite{berry1985bandit} problem provides a fundamental framework for sequential decision-making under uncertainty with bandit feedback, which has drawn a wide range of applications  in medicine \cite{gutierrez2017multi}, finance \cite{hoffman2011portfolio,shen2015portfolio}, recommendation system \cite{caron2013mixing}, and online advertising \cite{schwartz2017customer}, to name a few. Consider a 
portfolio selection  in finance as an example. 
At each decision round $t \in [T]$, the learning agent selects an action $a_t \in [K]$ (i.e., a particular choice of assets to user $t$) and receives a reward $r_{t}$ (e.g., the corresponding payoff) that is i.i.d. drawn  from an unknown probability distribution associated with the portfolio choice.  The goal is to learn to maximize its cumulative payoff.

In practice, applying the celebrated MAB formulation to real-life applications (e.g., the above finance example) needs to deal with both robustness and privacy issues. On the one hand, it is known that finical data is often heavy-tailed (rather than sub-Gaussian)~\cite{rachev2003handbook,hull2012risk}. Moreover, the received payoff data in finance often contains outliers~\cite{adams2019identifying} due to data contamination. On the other hand, privacy concern in finance is growing \cite{lei2020privacy,chen2022privacy,chen2022differential}. For instance, even if the adversary does not have direct access to the dataset, they
are still able to reconstruct other customers’ personal information by interacting with the pricing platform and observing  its decisions~\cite{fredrikson2014privacy}.

Motivated by this, a line of work on MABs has focused on designing robust algorithms with respect to heavy-tailed rewards~\cite{bubeck2013bandits}, adversary contamination \cite{lykouris2018stochastic,kapoor2019corruption}, or both \cite{basu2022bandits}. Another line of recent work has studied privacy protection in MABs via different trust models of differential privacy (DP) such as central DP~\cite{sajed2019optimal,azize2022privacy}, local DP~\cite{ren2020multi,tao2021optimal} and distributed DP~\cite{tenenbaum2021differentially,chowdhury2022distributed}. Moreover,  there have also been recent advances in understanding  the close relationship between robustness and privacy for the mean estimation problem (e.g., robustness induces privacy \cite{hopkins2022robustness} and vice versa \cite{georgiev2022privacy}). In light of this, a fundamental question we are interested in this paper is:
\begin{center}
   \emph{Is there a simple algorithm that can tackle privacy and robustness in MABs simultaneously?} 
\end{center}
\textbf{Our contributions.} We give an affirmative answer to it  by showing that a simple truncation-based algorithm could achieve a nearly optimal trade-off between regret, privacy, and robustness for MABs. The key intuition is that reward truncation not only helps to reduce outliers (due to both heavy tails and contamination), but bound also its sensitivity, which is necessary for DP. To make our intuition rigorous, we take the following principled approaches.

\textbf{(i)} We first establish the minimax regret lower bound for private and robust MABs, i.e., heavy-tailed MABs with both privacy constraints and Huber's contamination~\cite{huber1964robust} (see~\cref{sec:lower}). This characterizes the information-theoretic limit of regret with respect to privacy budget, contamination level and  heavy-tailedness. As a byproduct, our result also implies  the first minimax lower bound for private heavy-tailed MABs (i.e., without contamination), hence resolving an open problem in~\citet{tao2021optimal}.

\textbf{(ii)} To match the lower bound, we first propose a meta-algorithm (see~\cref{sec:alg}), which builds upon the idea of batched successive elimination and relies on a generic private and robust mean estimation sub-routine denoted by \texttt{PRM}. Then, for two different settings of (heavy-tailed) reward distributions (i.e., finite raw or central moments), we propose corresponding schemes for the sub-routine \texttt{PRM}, both of which only require truncation and the Laplace mechanism to guarantee robustness and privacy, simultaneously. Armed with these, our meta-algorithm can  enjoy nearly matching regret upper bounds (see~\cref{sec:upper}). Experimental studies also corroborate our theoretical results. 

\textbf{(iii)} Along the way, several results could be of independent interest. In particular, our proposed \texttt{PRM} shows that truncation is sufficient to help achieve the optimal high-probability concentration for private and robust mean estimation in the one-dimension case. Moreover, without contamination, our regret upper bounds not only match the optimal one for private heavy-tailed MABs with finite raw moments, but also provide the first results for the case with finite central moments, hence a complete study for private bandits.

Due to space limit, technical lemmas and all proofs  are included in Appendix. 

\section{Related Work}

\textbf{Robust MABs.} The studies on robust bandits can be largely categorized into two groups. The first group of work mainly focuses on the setting where the total contamination is bounded, i.e., the cumulative difference between observed reward and true reward is bounded by some constant~\cite{lykouris2018stochastic}. The second group considers Huber's $\alpha$-contamination model~\cite{huber1964robust} (which is also the focus of our paper) or a similar $\alpha$-fraction model. In these cases,  the reward for each round can be contaminated by an arbitrary distribution with probability $\alpha \in [0,1]$~\cite{awasthi2020online,kapoor2019corruption,mukherjee2021mean}, or at most $\alpha$-fraction of the rewards are arbitrarily contaminated~\cite{niss2020you}. The existing work in this group has mainly focused on  the light-tailed setting where the true inlier distribution is Gaussian or sub-Gaussian and uses a robust median or trimmed-mean estimator. A very recent work~\cite{basu2022bandits} studies the setting where the inlier distribution only has finite variance and uses Huber's estimator to establish problem-dependent bounds. In contrast, we take the perspective of minimax regret, i.e., problem-independent bounds, and also account for privacy.


\noindent \textbf{Private MABs.} In addition to the above mentioned results on private MABs with light-tailed rewards,~\citet{tao2021optimal} study private heavy-tailed MABs with finite raw moments under both central and local models of DP.  However, the optimal minimax regret for this case is still unknown and how to design private algorithms for heavy-tailed distributions with finite \emph{central} moments is unclear. In this paper, as byproducts of our main results, we resolve both problems. 


\noindent\textbf{Robust and private mean estimation.}  
Our work is also related to robust and private mean estimation, especially the one-dimensional case. On the robustness side with Huber's model, a high-probability concentration bound for the median of Gaussian (hence the mean by symmetry) is first established in~\citet{lai2016agnostic}. Recently,~\citet{mukherjee2021mean} give a high probability mean concentration via a trimmed-mean estimator for general sub-Gaussian inlier distributions while~\citet{prasad2019unified} focus on the heavy-tailed setting. On the privacy side, one close work is~\citet{kamath2020private}, which presents the first high-probability mean concentration for private heavy-tailed distributions with finite central moments (via a medians-of-means approach). It is worth noting  that there are recent exciting advances in understanding the close relationship between robustness and privacy (e.g., robustness induces privacy \cite{hopkins2022robustness} and vice versa \cite{georgiev2022privacy}). From this aspect, our results imply that for the one-dimensional mean estimation problem, truncation alone suffices to help to achieve both.

\vspace{-0.1in}
\section{Preliminary}
In this section, we first formally introduce our private and robust MAB problem and then present its regret notions.
\subsection{Private and Robust MABs}

As mentioned before, by robustness, we aim to handle both reward contamination and possible heavy-tailed inlier distributions. To this end, we first introduce the following two classes of heavy-tailed reward distributions.  




\begin{definition}[Finite $k$-th raw moment]
\label{def:raw}
A distribution over $\mathbb{R}$ is said to have a finite $k$-th raw moment if it is within
\begin{align}\label{eq:AssumpHeavy}
\mathcal{P}_k=\left\{P: \mathbb{E}_{X\sim P}\left[|X|^k\right] \leq 1\right\},\quad k\ge 2.
\end{align}
\end{definition}

\begin{definition}[Finite $k$-th central moment]
\label{def:central}
A distribution over $\mathbb{R}$ is said to have a finite $k$-th central moment if it is within
\begin{align}\label{eq:AssumpCentralHeavy}
{\mathcal{P}_k^c}=\left\{P:\mathbb{E}_{X \sim P}\left[|X-\mu|^k\right] \leq 1\right\}, \quad k\ge 2.
\end{align}
where $\mu:= \mathbb{E}_{ X \sim P}[X] \in [-D,D]$, i.e., $D$ is the finite range of its mean but can be arbitrarily large. 
\end{definition}

 We further consider the celebrated Huber contamination model~\cite{huber1964robust} and apply it to heavy-tailed MABs. 

\begin{definition}[Heavy-tailed MABs with Huber contamination]
\label{eq:AssumpHuber} Given the corruption level $\alpha \in [0,1]$. 
For each round $t \in [T]$, the observed reward\footnote{Here we use $x_t$ in the contaminated case to distinguish with standard reward $r_t$. } $x_t$ for action $a_t$, is sampled independently from the true distribution $P_{a_t} \in \cP_k $ (or $P_{a_t} \in \cP_k^c$) with probability $1-\alpha$; otherwise is sampled from some arbitrary and unknown contamination distribution $G_{a_t} \in \cG$. 

\end{definition}

In addition to robustness, we also consider the privacy protection in MABs via the lens of DP. 
In particular, we consider the standard central model of DP for MABs (e.g.,~\cite{mishra2015nearly}), where the learning agent has access to users' raw data (i.e., rewards) and guarantees that its output (i.e., sequence of actions) are indistinguishable in probability on two neighboring reward sequences. Due to contamination, the reward data accessed by the learning
agent at round $t$ could have already been contaminated. More precisely, we let $D_T=(x_{1},\ldots,x_{T}) \in \mathbb{R}^T$ be a reward sequence generated in the learning process and $\cM(D_T)=(a_1,\ldots,a_T) \in [K]^T$ to denote the sequence of all actions recommended by a learning algorithm $\cM$. With this setup, we have the following formal definition.

\begin{definition}[Differential Privacy for MABs]
For any $\epsilon > 0$, a learning algorithm $\cM: \mathbb{R}^T \rightarrow [K]^{T}$ is $\epsilon$-DP if for all sequences $D_T,D_T^\prime \in \mathbb{R}^T$ differing only in a single element and for all events $E \subset [K]^{T}$, we have
\begin{align*}
   \mathbb{P}\left[\mathcal{M}{\left(D_{T}\right) \in E}\right] \leq e^{\epsilon} \cdot \mathbb{P}\left[\mathcal{M}\left(D_{T}^{\prime}\right) \in E\right].
\end{align*}
\end{definition}

In this paper, we will leverage the well-known Laplace mechanism to guarantee differential privacy. 

\begin{definition}[Laplace Mechanism]\label{def:3}
Given a function $f : \mathcal{X}^n\rightarrow \mathbb{R}^d$, the Laplacian mechanism is given by 
\begin{align*}
  \mathcal{M}_L(D,f,\epsilon)=f(D)+ (Y_1, Y_2, \cdots, Y_d),  
\end{align*}
where $Y_i$ is i.i.d. drawn from a Laplacian Distribution\footnote{For a parameter $\lambda$, the Laplacian distribution has the density function $\text{Lap}(\lambda) (x)=\frac{1}{2\lambda}\exp(-\frac{|x|}{\lambda})$.} $\text{Lap}(\frac{\Delta_1(f)}{\epsilon})$,  where $\Delta_1(f)$ is the $\ell_1$-sensitivity of the function $f$, {\em i.e.,}
$\Delta_1(f)=\sup_{D\sim D'}||f(D)-f(D')||_1.$ Then, for any $\epsilon >0$, 
Laplacian mechanism satisfies $\epsilon$-DP.
\end{definition}

In the following sections, for brevity, we will simply use \emph{private and robust MABs} to refer to our setting, i.e., heavy-tailed MABs with Huber contamination and privacy constraints. 


\subsection{Regrets for Private and Robust MABs}
In the contamination case, the standard regret using observed (contaminated) rewards $\{x_t\}_{t \in [T]}$ is ill-defined~\cite{niss2020you}. Instead, the literature focuses on the \emph{clean regret}, that is, to compete with the best policy in hindsight as measured by the expected true uncontaminated rewards~\cite{niss2020you,basu2022bandits,chen2022online}. Hence, let $\mu_a$ be the mean of the inlier distribution of arm $a \in [K]$ and $\mu^* = \max_{a\in [K]}\mu_a$. We also let $\Pi^{\epsilon}$ be the set of all $\epsilon$-DP MAB algorithms and $\cE_{\alpha,k}$ be the set of all instances of heavy-tailed MABs with Huber contamination.



\begin{definition}[Clean Regret]
Fix an algorithm $\pi \in \Pi^{\epsilon}$ and an instance $\nu \in \cE_{\alpha,k}$. Then, the clean regret of $\pi$ under $\nu$ is given by 
\begin{equation*}
    \mathcal{R}_T(\pi,\nu):=\mathbb{E}_{\pi,\nu} [T\mu^* - \sum_{t=1}^T\mu_{a_t}].
\end{equation*}
\end{definition}
Note that here the expectation is taken over the randomness generated by the \emph{contaminated} environment and $\epsilon$-DP MAB algorithm while the means are of the true inlier distributions. 

To capture the intrinsic difficulty of the private and robust MAB problem, we are also interested in its minimax regret. 
\begin{definition}[Minimax Regret]
The minimax regret of our private and robust MAB problem is defined as 
\begin{equation}\label{eq:minimaxReg}
    \mathcal{R}^{\text{minimax}}_{\epsilon,\alpha,k}:= \inf_{\pi \in \Pi^{\epsilon}}\sup_{\nu \in \cE_{\alpha, k}} \mathbb{E}_{\pi, \nu}[T\mu^* - \sum_{t=1}^T\mu_{a_t}].
\end{equation}
\end{definition}

\section{Lower Bound}
\label{sec:lower}
We start with the following lower bound on the minimax regret, which characterizes the fundamental impact of privacy budget (via $\epsilon$), contamination level (via $\alpha$) and heavy-tailedness of rewards (via $k$) in the regret.  

\begin{theorem}
\label{thm: lowerBound}
    Consider a private and robust MAB problem where inlier distributions have finite $k$-th raw (or central) moments ($k\ge 2$). Then, its minimax regret satisifes
    \begin{align*} \mathcal{R}^{\text{minimax}}_{\epsilon,\alpha, k}&=\Omega\left(\sqrt{KT}+\left(\frac{K}{\epsilon}\right)^{1-\frac{1}{k}}T^{\frac{1}{k}}+ T \alpha^{1-\frac{1}{k}}\right).
    \end{align*}
\end{theorem}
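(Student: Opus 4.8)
The plan is to prove the three summands separately, via three different families of hard instances, and then combine them using $\max\{a,b,c\}\ge \frac{1}{3}(a+b+c)$, so that matching the maximum of the three terms already yields $\Omega$ of their sum. Each construction is designed to isolate one source of hardness and trivialize the other two. For the statistical floor $\sqrt{KT}$ I would take light-tailed instances (bounded or Gaussian, hence trivially in $\cP_k$ and $\cP_k^c$) with no contamination ($\alpha=0$) and no effective privacy constraint, and invoke the standard needle-in-a-haystack / two-point argument via the Bretagnolle--Huber inequality and the divergence decomposition, optimizing the gap at $\Delta\asymp\sqrt{K/T}$.

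\textbf{Contamination term $T\alpha^{1-1/k}$.} Here the key object is the robustness \emph{modulus of continuity}: the largest mean separation between two inlier laws in $\cP_k$ (resp.\ $\cP_k^c$) that can be made to produce \emph{identical} contaminated observation laws. A direct calculation---moving a mass $\alpha$ to a location $L\asymp\alpha^{-1/k}$, so that the $k$-th moment contribution $\alpha L^k\asymp 1$ stays bounded while the mean shifts by $\alpha L\asymp\alpha^{1-1/k}$---shows this modulus is $\Theta(\alpha^{1-1/k})$. I would then build two instances whose best arms have true means differing by $\Omega(\alpha^{1-1/k})$ but whose contaminated reward distributions coincide exactly; since the agent observes only contaminated rewards, no algorithm (private or not) can tell them apart, so it must incur $\Omega(\alpha^{1-1/k})$ clean regret every round, i.e.\ $\Omega(T\alpha^{1-1/k})$ in total.

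\textbf{Privacy term $(K/\epsilon)^{1-1/k}T^{1/k}$ (the main one).} This is where privacy and heavy tails interact. The idea is to replace the usual KL-based indistinguishability by a privacy-aware one via group privacy and coupling: if two reward streams can be coupled so that they differ in only $m$ coordinates, then by $\epsilon$-DP the induced action distributions are $e^{\epsilon m}$-indistinguishable. Using a maximal coupling per pull, two instances whose arm-$j$ inlier laws have total variation $\tau$ differ in only $\approx N_j\tau$ coordinates after $N_j$ pulls of arm $j$, so certifying arm $j$ as best requires $N_j\gtrsim 1/(\epsilon\tau)$. The heavy-tail ingredient is that a mean gap $\Delta$ can be realized with TV as small as $\tau\asymp\Delta^{k/(k-1)}$ (move a mass $\Delta^{k/(k-1)}$ to distance $\Delta^{-1/(k-1)}$, which keeps $\mathbb{E}|X|^k\le 1$). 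Hence certifying a candidate best arm costs $N^\ast\gtrsim 1/(\epsilon\Delta^{k/(k-1)})$ pulls. On the symmetric base instance $\sum_j\mathbb{E}_0[N_j]=T$ forces some arm to receive at most $T/K$ pulls, so choosing $\Delta\asymp(K/(\epsilon T))^{(k-1)/k}$ (equivalently $N^\ast\asymp T/K$) makes that neglected arm impossible to certify; the algorithm then fails to identify it on the corresponding alternative instance and suffers $\Omega(\Delta T)=\Omega((K/\epsilon)^{1-1/k}T^{1/k})$ clean regret.

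The main obstacle I anticipate is making the coupling argument for the privacy term fully rigorous: the number of differing coordinates $N_j\tau$ is itself random and algorithm-dependent, so one cannot apply group privacy with a fixed $m$. I would handle this by working with the expected pull counts $\mathbb{E}_0[N_j]$ under the base instance and transporting to the alternative $\nu_j$ through a per-sample maximal coupling, bounding the discrepancy $|\mathbb{E}_{\nu_j}[N_j]-\mathbb{E}_0[N_j]|$ by a DP-transport inequality scaling like $e^{\epsilon\,\mathbb{E}_0[N_j]\,\tau}$ (rather than the non-private $\sqrt{\mathbb{E}_0[N_j]\,\mathrm{KL}}$). Controlling this expectation, assembling it across all $K$ arms, and simultaneously ensuring the constructed heavy-tailed laws lie in $\cP_k$ (or $\cP_k^c$) while admitting the desired low-TV coupling is the crux; the remaining optimization over $\Delta$ is the routine calculation sketched above.
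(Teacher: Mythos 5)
Your proposal follows essentially the same route as the paper: the same three-way decomposition into separate hard-instance families (combined via the max-versus-sum observation), the same two-point heavy-tailed constructions that trade total variation $\tau$ against mean gap by placing mass at location $\asymp \tau^{-1/k}$, and the same contamination argument that the two contaminated mixtures can be made to coincide exactly once the inlier total variation is at most $\alpha/(1-\alpha)$, forcing $\Omega(T\alpha^{1-1/k})$ regret. The only difference is in the privacy term: the paper invokes an off-the-shelf inequality for $\epsilon$-DP bandit algorithms, namely $\mathrm{KL}\bigl(\mathbb{P}^T_{\pi,\nu_1}\,\|\,\mathbb{P}^T_{\pi,\nu_2}\bigr)\le 6\epsilon\,\mathbb{E}_{\pi,\nu_1}\bigl[\sum_{t=1}^{T}\mathrm{TV}(r_{a_t}\|r'_{a_t})\bigr]$ (due to Azize and Basu), plugged into Bretagnolle--Huber, whereas you plan to re-derive precisely this kind of bound from scratch via maximal coupling and group privacy --- the step you correctly flag as the crux, and which citing that lemma would close immediately.
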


Let us first present interpretations of the above result, which basically takes a maximum of three terms. The first term comes from the standard regret for Gaussian rewards, the second one captures the additional cost in regret due to privacy and heavy-tailed rewards, and the last term indicates the additional cost in regret due to contamination and heavy-tailed rewards. Note that, for a given $k$, the impact of privacy and contamination is separable. It would also be useful to compare our lower bound with the related ones, which is the purpose of the following remark. 
\begin{remark}
First, when $k = \infty$ and $\alpha =0$, our lower bound recovers the state-of-the-art lower bound for private MABs with sub-Gaussian rewards~\cite{azize2022privacy}; Second, we note that even when $\alpha=0$, there is no existing result on minimax regret (i.e., problem-independent) lower bound for private heavy-tailed MABs. In fact, this is left as an open problem in a recent work~\cite{tao2021optimal}. Thus, our lower bound not only resolves the problem\footnote{In~\cite{tao2021optimal}, the authors consider a slightly different setting where the heavy-tailed distribution only has a finite $(1+v)$-th moment with $v \in (0,1]$. However, our result simply generalizes to this setting by taking $k = 1+v$.}, but also captures contamination as well. Finally, when there is no privacy protection, a very recent work~\cite{basu2022bandits} establishes a \emph{problem-dependent}  regret lower bound for robust MABs while we are interested in problem-independent lower bound. Thus, its results is incomparable to ours. 
\end{remark}
Now, it remains to see whether this lower bound can be achieved via certain algorithms, which is the main focus of the following two sections.

\section{Our Approach: A Meta-Algorithm}
\label{sec:alg}
In this section, we first introduce a meta-algorithm for private and robust MABs, which not only allows us to tackle inlier distributions with bounded raw or central moments in a unified way, but also highlights the key component, i.e., a private and robust mean estimation sub-routine building on the simple idea of truncation.


Our meta-algorithm, at a high level, can be viewed as a batched version of the celebrated successive arm elimination~\cite{even2006action} along with a private and robust mean estimation sub-routine $\texttt{PRM}$ (see Algorithm~\ref{Alg:meta}). That is, it divides the time horizon $T$ into batches with exponentially increasing
size and eliminates sub-optimal arms successively based on the mean estimate via \texttt{PRM}. More specifically, based on the batch size, it consists of two phases. That is, when the batch size is less than a threshold $\cT$, it simply recommends actions randomly (\textcolor{blue}{line 5-7}) (more on this will be explained soon). Otherwise, for each active arm $a$ in batch $\tau$, it first prescribes $a$ to a batch of $B_{\tau} = 2^\tau$ fresh \emph{new} users and observes possibly contaminated rewards (\textcolor{blue}{line 8}). Then, it calls the sub-routine \texttt{PRM} to compute a private and robust mean estimate for each active arm $a$ (\textcolor{blue}{line 12}). In particular, it \emph{only} uses the rewards within the most recent batch (i.e., ``forgetting'') along  with a proper reward truncation threshold $M_{\tau}$.  Finally, it adopts the classic idea of arm elimination with a proper choice of confidence radius $\beta_{\tau}$ to remove sub-optimal arms with high confidence (\textcolor{blue}{line 18-20}).

\begin{algorithm}[t!]
    \caption{Private and Robust Arm Elimination}
    \label{Alg:meta}
    \begin{algorithmic}[1]
        \STATE {\bfseries Input:} Number of arms $K$, time horizon $T$, privacy budget $\epsilon$, Huber parameter $\alpha \in (0,1]$, error probability $\delta \in (0,1]$, inliner distribution parameters i.e., $k$ and optional $D$
        \STATE Initialize:  $\tau=0$, active set of arms $\mathcal{S}=\{1,\cdots,K\}$.
        \FOR{ batch $\tau=1,2,\dots$}
            \STATE Set batch size $B_\tau= 2^\tau$
            \IF{$B_\tau < \cT$}
                \STATE Randomly select an action $a \in [K]$
                \STATE Play action $a$ for $B_{\tau}$ times
            \ELSE 
                
            \FOR {each active arm $a\in\mathcal{S}$}
            \FOR {$i$ from $1$ to $B_\tau$}
               \STATE Pull arm $a$, observe contaminated reward $x_{i}^{a}$ 
               \STATE If total number of pulls reaches $T$, \textbf{exit}
            \ENDFOR  
            \STATE Set truncation threshold $M_\tau$
            \STATE Set additional parameters $\Phi$
             \STATE Compute  estimate $\widetilde{\mu}_a = \texttt{PRM}(\{x_i^{a}\}_{i=1}^{B_{\tau}}, M_{\tau}, \Phi )$
            \ENDFOR
            \STATE Set confidence radius $\beta_\tau$
             \STATE Let $\widetilde{\mu}_{\rm max}=\max_{a\in\mathcal{S}}\widetilde{\mu}_a$
             \STATE Remove all arms $a$ from $\mathcal{S}$ s.t. $\widetilde{\mu}_{\rm max}-\widetilde{\mu}_a> 2\beta_\tau$
            \ENDIF
            
        \ENDFOR
    \end{algorithmic}
\end{algorithm}

We now provide more intuitions behind our algorithm design by highlighting how its main components work in concert. \emph{First,} the reason behind the first phase (i.e., $B_{\tau} \le \cT$) is that the mean estimate by \texttt{PRM} does not have a high probability concentration when the sample size is small. Thus, one cannot adopt arm elimination in this phase since it might eliminate the optimal arm. Note that, instead of our choice of random selection, one can also use other methods for the first phase (see Remark~\ref{rem:first-phase} below). \emph{Second,} for the second phase, the idea of batching and forgetting is the key to achieving privacy with a minimal amount of noise (hence better regret). This is because now any single reward feedback only impacts one computation of estimate. This is in sharp contrast to standard arm elimination (e.g.,~\cite{even2006action}) where each mean estimate is based on all samples so far (as no batching is used), and hence a single reward change could impact $O(T)$ mean estimations\footnote{One can use tree-based algorithm~\cite{chan2011private} to reduce it to $O(\log T)$, but it is still sub-optimal~\cite{sajed2019optimal}.}. \emph{Third,} the simple idea of reward truncation in \texttt{PRM} turns out to be extremely useful for both robustness and privacy. On the one hand, truncation helps to reduce the impact of outliers (due to both heavy tails and contamination); On the other hand, truncation also helps to bound the sensitivity, which is necessary for privacy. In fact, as we will show later, a well-tuned truncation threshold enables us to achieve a near-optimal trade-off between regret, privacy and robustness. \emph{Finally,} in contrast to the first phase, we can now eliminate sub-optimal arms with high confidence due to the high probability concentration of mean estimate when batch size is larger than $\cT$ (more details will be given later for specific choices of \texttt{PRM} and hence the choice of $\cT$).

\begin{remark}
\label{rem:first-phase}
    The algorithm choice of the first phase can be flexible. For example, instead of playing a randomly selected action for the whole batch, one can choose to play a randomly selected action for each round. Moreover, one can also choose to be greedy or probabilistically greedy with respect to the mean estimate by \texttt{PRM}, which also only uses the rewards collected within the last batch for each arm. All of these choices have the same theoretical guarantees, though some will help to improve the empirical performance.
\end{remark}

We then present the following remark that places our meta-algorithm in the existing literature. 
\begin{remark}[Comparison with existing literature]
For private MABs (without contamination),  the state-of-the-art also builds upon the idea of batching and forgetting~\cite{sajed2019optimal,chowdhury2022distributed} to achieve optimal regret. For robust MABs (without privacy), existing works take different robust mean estimations. For example, both~\citet{niss2020you,mukherjee2021mean} use a trimmed mean estimator for sub-Gaussian inlier distributions while~\citet{basu2022bandits} adopts Huber's estimator to handle inlier distributions with only bounded variance. We are the first to study privacy and robustness simultaneously, via a simple truncation-based estimator, which in turn reveals the close relationship between privacy and robustness in MABs. This complements the recent advances in capturing the connection between these two in (high-dimensional) statistics~\cite{hopkins2022robustness,georgiev2022privacy}. 
\end{remark}

\section{Upper Bounds}
\label{sec:upper}
In this section, we establish the regret upper bounds for two specific instantiations of our meta-algorithm, i.e., one for the finite raw moment case and another for the finite central moment case. In particular, the results could match our lower bound up to a logarithmic factor, demonstrating their near-optimality. 

\begin{algorithm}[t!]
    \caption{\texttt{PRM} for the finite raw moment case}
    \label{Alg:prm-raw}
    \begin{algorithmic}[1]
        \STATE {\bfseries Input:} A collection of data $\{x_i\}_{i=1}^n$, truncation parameter $M$, additional parameters $\Phi = \{\epsilon\}$ 
        \FOR{ $i=1,2,\dots, n$}
            \STATE Truncate data $\bar{x}_i= x_i \cdot\mathbbm{1}_{\{|x_{i}|\le M\}}$
        \ENDFOR
        \STATE Return private estimate $\widetilde{\mu} = \frac {\sum_{i=1}^{n}\bar{x}_{i}}{n} +{\rm Lap}(\frac{2M}{n\epsilon})$
    \end{algorithmic}
\end{algorithm}

\subsection{Finite Raw Moment Case}
\label{sec:rawupper}
In this section, we will focus on private and robust MABs where the inlier distributions have a finite $k$-th raw moment as given by Definition~\ref{def:raw}. 
In particular, we first introduce the choice of \texttt{PRM} in this case (see Algorithm~\ref{Alg:prm-raw}) and establish its concentration property,  which plays a key role in our implementation of meta-algorithm. 

The \texttt{PRM} in Algorithm~\ref{Alg:prm-raw} is simply a truncation-based Laplace mechanism. That is, it first truncates all the received data with the threshold $M$ (\textcolor{blue}{line 3}). Then, Laplace noise is added to the empirical mean to preserve privacy (\textcolor{blue}{line 5}). We highlight again that truncation here helps with both robustness (via removing outliers) and privacy (via bounding the sensitivity of empirical mean). 

As in the standard algorithm design of MABs, the key is to utilize the concentration of the mean estimator. To this end, we first give the following high-probability concentration result for the mean estimate returned by \texttt{PRM} in Algorithm~\ref{Alg:prm-raw}.

\begin{theorem}[Concentration of Mean Estimate]
\label{Thm:MeanRaw}
Given a collection of Huber-contaminated data $\{x_i\}_{i=1}^n$ where the inlier distribution satisfies Definition~\ref{def:raw} with mean $\mu$,  let $\widetilde{\mu}$ be the mean estimate by Algorithm~\ref{Alg:prm-raw}. Then, for any privacy budget $\epsilon >0$ and $\delta \in (0,1)$, the following results hold:

\textbf{Uncontaminated case.} For $\alpha = 0$, we have 
\begin{align*}
    |\widetilde{\mu}-\mu| = O\left(  \sqrt{\frac{\log(1/\delta)}{n}} + \frac{M \log(1/\delta)}{n\epsilon} + \frac{1}{M^{k-1}}\right),
\end{align*}
with probability at least $1-\delta$. Thus, choosing the truncation threshold $M=\Theta\left(\frac{n\epsilon}{\log(1/\delta)}\right)^{\frac{1}{k}}$ yields
\begin{align*}
    |\widetilde{\mu}-\mu|  = O\left(\sqrt{\frac{\log(1/\delta)}{n}} + \left(\frac{ \log(1/\delta)}{n\epsilon}\right)^{1-\frac{1}{k}}\right).
\end{align*}
\textbf{Contaminated case.} For $\alpha \in (0,1]$ and $n = \Omega\left( \frac{\log(1/\delta)}{\alpha}\right)$, we have the following with probability at least $1-\delta$
\begin{align*}
   |\widetilde{\mu}-\mu| \!=\! O\left(\sqrt{\frac{\log(1/\delta)}{n}}\!+\!\frac{M \log(1/\delta)}{n\epsilon}\!+\!\frac{1}{M^{k-1}}\!+\!\alpha M\right). 
\end{align*} 
 Therefore, choosing the truncation threshold $M =\Theta \left( \min\left\{\left(\frac{n\epsilon}{\log(1/\delta)}\right)^{\frac{1}{k}},\alpha^{-\frac{1}{k}}\right\} \right)$, yields $|\widetilde{\mu}-\mu| \le \beta$, where
\begin{align*}
   \beta  = O\left(\sqrt{\frac{\log(1/\delta)}{n}} + \left(\frac{ \log(1/\delta)}{n\epsilon}\right)^{1-\frac{1}{k}} + \alpha^{1-\frac{1}{k}}\right).
\end{align*}
\end{theorem}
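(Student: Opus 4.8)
The plan is to write the estimation error as a sum of three sources—statistical fluctuation of the truncated empirical mean, truncation bias, and Laplace noise—bound each separately, and then treat contamination by isolating the contribution of the corrupted samples. Writing $\bar X = X\cdot\mathbbm{1}_{\{|X|\le M\}}$ for a generic truncated inlier draw and $Z\sim\mathrm{Lap}(2M/(n\epsilon))$ for the added noise, I would start from the identity
\begin{align*}
\widetilde{\mu} - \mu = \Big(\tfrac{1}{n}\textstyle\sum_{i=1}^n \bar x_i - \mathbb{E}[\bar X]\Big) + \big(\mathbb{E}[\bar X]-\mu\big) + Z,
\end{align*}
which is exact when $\alpha=0$, and allocate the failure probability $\delta$ across the three terms by a union bound.

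For the uncontaminated case, the first term is an average of i.i.d.\ variables bounded in $[-M,M]$ with variance at most $\mathbb{E}[\bar X^2]\le \mathbb{E}[X^2]\le (\mathbb{E}[|X|^k])^{2/k}\le 1$, where the middle step is Lyapunov's inequality for $k\ge 2$. Bernstein's inequality then gives a bound of order $\sqrt{\log(1/\delta)/n} + M\log(1/\delta)/n$, the second summand being dominated by the privacy term for $\epsilon\le 1$. For the bias I would invoke the moment condition directly: $|\mathbb{E}[\bar X]-\mu| \le \mathbb{E}[|X|\mathbbm{1}_{\{|X|>M\}}] \le M^{-(k-1)}\mathbb{E}[|X|^k]\le M^{-(k-1)}$, using $|X|\le |X|^k/M^{k-1}$ on the event $|X|>M$. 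Finally the Laplace tail $\mathbb{P}(|Z|>t)=\exp(-tn\epsilon/(2M))$ gives $|Z|\le 2M\log(1/\delta)/(n\epsilon)$ with probability $1-\delta$. Summing yields the first bound, and $M=\Theta((n\epsilon/\log(1/\delta))^{1/k})$ balances the privacy and bias terms.

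The contaminated case is where the real work lies. Let $\mathcal{C}$ be the (random) set of clean indices and $N=n-|\mathcal{C}|$; since samples are independently corrupted with probability $\alpha$, a Chernoff bound gives $N\le 2\alpha n$ with probability $1-\delta$ provided $n=\Omega(\log(1/\delta)/\alpha)$, which is exactly the stated requirement. I would split $\tfrac{1}{n}\sum_i \bar x_i$ into clean and corrupted parts. The corrupted part is controlled purely by truncation: each $|\bar x_i|\le M$, so its magnitude is at most $NM/n\le 2\alpha M$, producing the $\alpha M$ term—this is the crucial place where truncation buys robustness. Conditioned on $\mathcal{C}$, the clean draws are i.i.d.\ from $P$, so the clean part reduces to the uncontaminated analysis on $|\mathcal{C}|\le n$ samples, except that the empirical mean is scaled by $|\mathcal{C}|/n = 1-N/n$; the resulting discrepancy contributes an extra $\tfrac{N}{n}|\mu|\le 2\alpha$ (using $|\mu|\le (\mathbb{E}[|X|^k])^{1/k}\le 1$), which is absorbed into the $\alpha M$ term since $M\ge 1$ in the regime of interest. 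Adding the Laplace noise gives the four-term bound.

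It remains to optimize $M$ over the three terms $M\log(1/\delta)/(n\epsilon)$, $M^{-(k-1)}$, and $\alpha M$: the bias decays in $M$ while the other two grow, so the optimum balances the bias against whichever growing term is active. Taking $M=\Theta(\min\{(n\epsilon/\log(1/\delta))^{1/k},\,\alpha^{-1/k}\})$ makes the privacy-plus-bias contribution $(\log(1/\delta)/(n\epsilon))^{1-1/k}$ and the contamination-plus-bias contribution $\alpha^{1-1/k}$, yielding the claimed $\beta$. I expect the main obstacle to be the contaminated case: cleanly decoupling the random corruption pattern from the clean i.i.d.\ structure, and verifying that the concentration of $N$ requires precisely $n=\Omega(\log(1/\delta)/\alpha)$, so that the crude $\alpha M$ bound on the corrupted mass is both valid and tight against the minimax lower bound of \cref{thm: lowerBound}.
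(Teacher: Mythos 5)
Your proposal is correct and follows essentially the same route as the paper's proof: the same three-term decomposition (Bernstein for the truncated empirical mean with variance bounded via the $k$-th raw moment, a truncation-bias bound of $M^{-(k-1)}$, and the Laplace tail), the same clean/corrupted split with concentration of the corrupted count under $n = \Omega(\log(1/\delta)/\alpha)$ yielding the $\alpha M$ term, and the same final choice of $M$. The only cosmetic differences are that the paper derives the bias bound via H\"older plus Markov rather than your pointwise inequality $|X|\le |X|^k/M^{k-1}$, and handles the clean part by coupling to $n$ hypothetical inlier draws rather than your conditioning-and-rescaling argument (where, to cover the case $M<1$, you should absorb the rescaling discrepancy using the trivial bound $|\mathbb{E}[\bar{X}]|\le M$ instead of $|\mu|\le 1$).
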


With the above result, several remarks are ready. \emph{First,} for the uncontaminated case, our concentration result consists of the standard sub-Gaussian term and a new one due to privacy and heavy-tailed data. It can be translated into a sample complexity bound, i.e., to guarantee $|\widetilde{\mu} - \mu| \le \eta$ for any $\eta \in (0,1)$, it requires the sample size to be $n \ge O(\frac{\log(1/\delta)}{\eta^2} + \frac{\log(1/\delta)}{\epsilon \eta^{\frac{k}{k-1}}})$, which is optimal since it matches the lower bound for private heavy-tail mean estimation (cf. Theorem 7.2 in~\citet{hopkins2022efficient}). \emph{Second,} for the contaminated case, it has an additional bias term $O(\alpha^{1-1/k})$, which is also known to be information theoretically optimal~\cite{diakonikolas2018algorithmic}. Thus, via truncation, the \texttt{PRM} given by Algorithm~\ref{Alg:prm-raw} achieves the optimal trade-off between accuracy, privacy and robustness, which in turn shows its potential to be integrated into our meta-algorithm.



Now, based on the concentration result, we can set other missing parameters in our meta-algorithm accordingly. In particular, we have the following theorem that states the specific instantiation along with its performance guarantees. 
\begin{theorem}[Performance Guarantees]
\label{thm:RegRawUp}
Consider a private and robust MAB with inlier distributions satisfying Definition~\ref{def:raw} and $\alpha \in (0,1]$.
Let Algorithm~\ref{Alg:meta} be instantiated with Algorithm~\ref{Alg:prm-raw} and $M_{\tau}$, $\beta_{\tau}$ be given by Theorem~\ref{Thm:MeanRaw} with $n$ replaced by $B_{\tau}$. Set $\cT = \Omega(\frac{\log(1/\delta)}{\alpha})$ and $\delta = 1/T$. Then Algorithm~\ref{Alg:meta} is $\epsilon$-DP with its regret upper bound
\begin{align*}
    \!\!O\left(\sqrt{KT \log T}\!+\!\left(\frac{K\log T}{\epsilon}\right)^{\frac{k-1}{k}}T^{\frac{1}{k}}\!+\!T \alpha^{1-\frac{1}{k}}\!+\!\frac{K \log T}{\alpha}\right).
\end{align*}
\end{theorem}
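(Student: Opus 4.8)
\textbf{Proof proposal for Theorem~\ref{thm:RegRawUp}.}

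The plan is to combine the privacy guarantee, which follows from a clean parallel-composition argument, with a standard batched-elimination regret decomposition, where the per-batch confidence radius $\beta_{\tau}$ is supplied directly by the concentration bound in Theorem~\ref{Thm:MeanRaw}. First I would establish $\epsilon$-DP. The key structural feature enabling this is the ``batching and forgetting'' design: the estimate $\widetilde{\mu}_a$ for arm $a$ in batch $\tau$ depends \emph{only} on the $B_{\tau}$ fresh rewards pulled for that arm in that batch. Hence each raw reward in the sequence $D_T$ influences exactly one invocation of \texttt{PRM}. Since \texttt{PRM} (Algorithm~\ref{Alg:prm-raw}) is a truncated empirical mean with $\ell_1$-sensitivity at most $2M_{\tau}/B_{\tau}$ perturbed by $\mathrm{Lap}(2M_{\tau}/(B_{\tau}\epsilon))$, each single estimate is $\epsilon$-DP by Definition~\ref{def:3}. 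Because the datasets feeding distinct estimates are disjoint, parallel composition (rather than sequential composition) applies, so the entire collection of estimates—and therefore the induced action sequence, by post-processing—is $\epsilon$-DP overall. I would state this as a short lemma and note that the random-selection first phase uses no reward data and thus preserves privacy trivially.

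Next I would bound the regret. The first step is a clean event: invoking Theorem~\ref{Thm:MeanRaw} (contaminated case) with $n = B_{\tau}$, $\delta = 1/T$, and the prescribed $M_{\tau}$, every estimate in batch $\tau$ satisfies $|\widetilde{\mu}_a - \mu_a| \le \beta_{\tau}$ with probability $1-1/T$, provided $B_{\tau} \ge \cT = \Omega(\log(1/\delta)/\alpha)$ so the concentration hypothesis $n=\Omega(\log(1/\delta)/\alpha)$ holds. A union bound over all (at most $O(K\log T)$) estimates keeps the total failure probability $O(\log T)$-fold $1/T$, i.e. $O(\log T / T)$; on the complementary bad event the contribution to regret is at most $T \cdot (\text{bad prob})=O(\log T)$, which is absorbed. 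On the good event, the standard successive-elimination analysis applies: the optimal arm is never eliminated (its gap to $\widetilde{\mu}_{\max}$ stays within $2\beta_{\tau}$), and any arm surviving into batch $\tau$ has true gap $\Delta_a = \mu^* - \mu_a = O(\beta_{\tau-1})$, so its per-pull regret in batch $\tau$ is $O(\beta_{\tau})$.

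The third step is the summation. I would split the regret into three parts. The first phase ($B_{\tau} < \cT$) plays arbitrary arms, incurring regret at most $O(\cT) = O(K\log T / \alpha)$ total (the last term in the bound). For the elimination phase I substitute $\beta_{\tau} = O\!\big(\sqrt{\log T / B_{\tau}} + (\log T/(B_{\tau}\epsilon))^{1-1/k} + \alpha^{1-1/k}\big)$ and multiply by the number of pulls $B_{\tau}$ per surviving arm, then sum the geometric batch sizes up to the horizon. The $\sqrt{\log T / B_{\tau}}$ term summed against $B_{\tau}$ over the active arms yields the $\sqrt{KT\log T}$ term; the privacy-heavy-tail term $(\log T/(B_{\tau}\epsilon))^{1-1/k}$ multiplied by $B_{\tau}$ and summed gives $(K\log T/\epsilon)^{(k-1)/k} T^{1/k}$ (here the exponent $1/k$ on $T$ arises because $B_{\tau}^{1/k}$ grows geometrically and is dominated by its largest value $\approx (T/K)^{1/k}$); and the irreducible bias $\alpha^{1-1/k}$ contributes $T\alpha^{1-1/k}$ since it is paid on every one of the $T$ rounds.

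The main obstacle I anticipate is the careful bookkeeping of the geometric sums together with the branch structure of $\beta_{\tau}$. Because the truncation threshold $M_{\tau} = \Theta(\min\{(B_{\tau}\epsilon/\log(1/\delta))^{1/k}, \alpha^{-1/k}\})$ involves a minimum, $\beta_{\tau}$ effectively has a regime where the privacy term dominates and a regime where the bias term saturates; one must verify that summing each term separately and taking the maximum (equivalently, summing the dominating term in each regime) reproduces exactly the three stated rate terms without cross-terms blowing up. The other delicate point is ensuring the elimination-round accounting correctly pins down how many distinct batches contribute before the pull budget $T$ is exhausted, so that the number of surviving arms and the number of batches combine to give the clean $K$- and $T$-dependence rather than an extra logarithmic or polynomial overhead; I would handle this by the usual doubling argument that the number of batches is $O(\log T)$ and each active arm is pulled $O(T/K)$ times in total.
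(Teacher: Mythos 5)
Your proposal follows essentially the same route as the paper's proof: privacy via the Laplace mechanism on disjoint per-batch data plus post-processing, then a clean-event successive-elimination analysis with $\beta_\tau$ supplied by Theorem~\ref{Thm:MeanRaw}, a regime split between the saturated bias term $\alpha^{1-\frac{1}{k}}$ and the sampling/privacy terms, and a geometric-batch summation. The only cosmetic difference is in the summation: you sum $B_\tau\beta_\tau$ over batches directly, whereas the paper inverts $\Delta_a \le 4\beta_{\tau(a)}$ to bound $B_{\tau(a)}$ and then optimizes a gap threshold $\eta$; these are equivalent standard arguments and both yield the first three terms of the bound.

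The one genuine flaw is your accounting of the $\frac{K\log T}{\alpha}$ term. You attribute it to the random first phase, writing that this phase costs $O(\cT)=O(K\log T/\alpha)$. But the first phase plays a \emph{single} arm per batch, so its total number of rounds is $\sum_{\tau:\,2^\tau<\cT}2^\tau\le 2\cT$, and since $\Delta_a\le 2$ under Definition~\ref{def:raw} its regret is only $O(\log T/\alpha)$; your equation is off by a factor of $K$ (harmless as an over-estimate, but it hides where the $K$ really comes from). The term actually arises in the elimination phase, and this is exactly where your stated invariant breaks: you claim that ``any arm surviving into batch $\tau$ has true gap $\Delta_a=O(\beta_{\tau-1})$,'' but this fails for the first elimination batch $\tau_0+1$ (where $\tau_0$ is the last batch with $B_\tau<\cT$), since no elimination has occurred before it. In that batch, all $K$ arms --- including ones with gap as large as $2\gg\beta_{\tau_0+1}$ --- must each be pulled $B_{\tau_0+1}=\Theta(\cT)$ times before any elimination can happen, contributing up to $\sum_a \Delta_a B_{\tau_0+1}=O(K\cT)=O(K\log T/\alpha)$ regret. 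This is precisely the paper's last term; in the paper it shows up as the $\log(\cdot)/\alpha$ entry inside the $\max$ bounding $B_{\tau(a)}$, multiplied by $\Delta_a\le 2$ and summed over arms. With the one-line correction --- treat the first elimination batch separately and apply your invariant only from the second elimination batch onward --- your argument goes through and reproduces the stated bound.
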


The above theorem presents the first achievable regret guarantee for private and robust bandits. 
The first three terms match our lower bound in Theorem~\ref{thm: lowerBound} up to $\log T$ factor. The last additive term is mainly due to the fact that the mean concentration result only holds when the sample size is larger than $\cT = \Omega( \frac{\log(1/\delta)}{\alpha})$. As a result, each sub-optimal has to be played at least $\Omega( \frac{\log(1/\delta)}{\alpha})$ times. However, for a sufficiently large $T$ and a constant $\alpha$, the last term is dominated by other terms.

\begin{remark}
    For the case when $\alpha = 0$, using the uncontaminated concentration bound in Theorem~\ref{Thm:MeanRaw} and the same analysis, we achieve a regret upper bound  $O(\sqrt{KT \log T}\!+\!(\frac{K\log T}{\epsilon})^{\frac{k-1}{k}}T^{\frac{1}{k}}\!)$, which also matches the lower bound up to $\log T$ factor.
\end{remark}

\subsection{Finite Central Moment Case}
\label{sec:CentralUp}
The setting in the last section for the finite raw moment case may not be entirely satisfactory as it essentially assumes that the mean of arms is bounded within a small range (hence the sub-optimal gaps). Thus, in this section, we turn to private and robust MABs where the inlier distributions have a finite $k$-th central moment as given by Definition~\ref{def:central}. To this end, we first need a new \texttt{PRM},  since now simply truncating around zero as in Algorithm~\ref{Alg:prm-raw} will not work.

\begin{algorithm}[t!]
\caption{\texttt{PRM} for the finite central moment case}
\label{alg:prm-central}
\begin{algorithmic}[1]
 \STATE {\bfseries Input:} A collection of data $\{x_i\}_{i=1}^{2n}$, truncation parameter $M$, additional parameters $\Phi = \{\epsilon, D, r\}$, $r \in \mathbb{R}$.
   \STATE{\textcolor{blue}{// First step: initial estimate}}
 \STATE $B_j=[j,j+r), j \in \mathcal{J}=\{-D,-D+r,\dots,D-r\}$
\STATE Compute private histogram  using the first fold of data:
$$\widetilde{p}_j=\frac{\sum_{i=1}^n \mathbbm{1}_{\{X_i \in B_j\}}}{n}+\operatorname{Lap}\left(\frac{2}{n\epsilon}\right)$$
\STATE Get the initial estimate $J=\arg\max_{j\in \mathcal{J}} \widetilde{p}_j$ 
 \STATE{\textcolor{blue}{// Second step: final estimate}}
\STATE Get final estimator using the second fold of data: $\widetilde{\mu}= J + \frac{1}{n}\sum_{i=n+1}^{2n} (X_i-J)\mathbbm{1}_{\{|X_i-J|\le M\}}+ \operatorname{Lap}\left(\frac{2M}{n\epsilon}\right)$
\end{algorithmic}
\label{Algo:HistCDP2}
\end{algorithm}

Our new \texttt{PRM} is presented in Algorithm~\ref{alg:prm-central}, which consists of two steps. The intuition is simple: the first step aims to have a rough estimate of the mean, which is necessary since now the mean could be far away from zero. Then, in the second step, it truncates around the initial estimate to return the final result. More specifically, in the first step, we first construct bins over the range $[-D, D]$, which is assumed to contain the true mean by Definition~\ref{def:central}. Then, we compute the private histogram via the Laplace mechanism. The initial estimate is given by the left endpoint of the bin that has the largest empirical mass. Next, in the second step, it simply truncates around the initial estimate and again adds Laplace noise for privacy. 

\begin{remark}
    It is worth noting that a similar idea of two-step estimation has been used in previous work on robust mean estimation in the one-dimensional heavy-tailed case~\cite{prasad2019unified,kamath2020private, li2022robustness,}. However, there are several differences in our algorithm design and analysis. In particular, while~\cite{prasad2019unified} considers mean estimation under Huber's model without privacy constraints, we further impose differential privacy requirements. As a result, the estimates for both two steps are in different forms in our case compared to~\cite{prasad2019unified}, though they share the same high-level intuition. On the other hand, while~\cite{kamath2020private} considers mean estimation under differential privacy, there is no consideration of Huber contamination as in our case. Moreover, our second estimate is based on truncation while their method is via medians-of-means. In fact, as will be shown later (see Remark~\ref{rem:k20}), when our result reduces to the uncontaminated case, it achieves improvement over the one in~\cite{kamath2020private}. Finally,~\cite{li2022robustness}\footnote{In particular, we refer to the first arxiv version of~\cite{li2022robustness}. } considers both Huber contamination and \emph{local} differential privacy, and establishes the corresponding mean square error (MSE). In contrast, we consider the \emph{central} differential privacy and aim to establish a high-probability tail concentration. To this end, we take a different truncation method (i.e., using an indicator function in Line 7) compared to the one in~\cite{li2022robustness}.
\end{remark}

As before, we first present the concentration property of our new \texttt{PRM}, which will manifest in the specific instantiation of our meta-algorithm. In particular, we first give the following general theorem and then state two more detailed corollaries. 

\begin{theorem}[Concentration of Mean Estimate]
\label{thm:CentralMeanUp}
Given a collection of Huber-contaminated data $\{x_i\}_{i=1}^{2n}$ where the inlier distribution satisfies Definition~\ref{def:central} with mean $\mu$,  let $\widetilde{\mu}$ be the mean estimate by Algorithm~\ref{alg:prm-central}.    
    For any $\alpha \in (0,\alpha_{\max})$, $\epsilon \in (0,1]$ and $\delta \in (0,1)$, there exist some constants $\mathcal{T}(\alpha,\epsilon,\delta)$, $r$, $M$ and  $D \ge 2r$ such that for all $n \ge \mathcal{T}(\alpha,\epsilon,\delta)$, with probability at least $1-\delta$
    \begin{align*}
          \!|\widetilde{\mu}-\mu|\!\le\! O\left(\sqrt{\frac{\log(1/\delta)}{n}}\!+\!\frac{M\log(1/\delta)}{n\epsilon}+\frac{1}{M^{k-1}}\!+\!\alpha M\right),
    \end{align*}
    where $\alpha_{\max} < 1$ is the breakdown point.
\end{theorem}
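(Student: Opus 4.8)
The plan is to analyze the two steps of Algorithm~\ref{alg:prm-central} separately: first I would show that the initial histogram-based estimate $J$ lands within a constant of the true mean $\mu$ with high probability, and then I would argue that, conditioned on this event, the truncated-and-noised sum in the second step behaves exactly like the raw-moment estimator of Theorem~\ref{Thm:MeanRaw}, now applied to the recentered variables $Y_i = X_i - J$.

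For the first step, the key structural fact is that by Definition~\ref{def:central} and Markov's inequality, the inlier distribution places mass at least $1 - (2r)^{-k}$ inside $[\mu - 2r, \mu + 2r]$; hence for a suitable constant bin width $r$, at least one bin $B_{j^\star}$ with $|j^\star - \mu| \le 2r$ carries a constant inlier mass $p_{\min}$, while any bin with $|j - \mu| > Cr$ carries inlier mass at most $((C-1)r)^{-k}$, which is a small constant for large $C$. I would then control the three sources of error in the noisy histogram $\widetilde{p}_j$: (i) the sampling deviation $O(\sqrt{\log(|\mathcal{J}|/\delta)/n})$ of each bin count, uniformly over the $|\mathcal{J}| = 2D/r$ bins via a union bound and Bernstein's inequality; (ii) the Laplace perturbation, at most $O(\log(|\mathcal{J}|/\delta)/(n\epsilon))$ uniformly by the tail of $\text{Lap}(2/(n\epsilon))$; and (iii) the contamination, which can shift any bin's observed mass by at most $\alpha$. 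Requiring the good bin to out-score every far bin reduces to the inequality $(1-\alpha)p_{\min} - \text{err} > ((C-1)r)^{-k} + \alpha + \text{err}$, where $\text{err}$ collects (i) and (ii). This holds once $\alpha$ is below a constant breakdown point $\alpha_{\max}$ (governed by $p_{\min}$) and $n \ge \mathcal{T}(\alpha,\epsilon,\delta) = \Omega(\log(|\mathcal{J}|/\delta)/\epsilon)$, yielding $|J - \mu| = O(r) = O(1)$ with probability $1-\delta$.

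For the second step, I would write $\widetilde{\mu} - \mu = (J - \mu) + \frac{1}{n}\sum_{i=n+1}^{2n}\bar{Y}_i + \text{Lap}(2M/(n\epsilon))$ with $\bar{Y}_i = Y_i \mathbbm{1}_{\{|Y_i|\le M\}}$, and decompose the middle term around its mixture expectation $m$. Since $Y_i = X_i - J$ has inlier mean $\mu - J = O(1)$ and the same bounded $k$-th central moment, the identity $(J-\mu) + \mathbb{E}_{\text{in}}[\bar Y] = -\mathbb{E}_{\text{in}}[Y\mathbbm{1}_{\{|Y|>M\}}]$ together with H\"older's inequality and $P(|Y|>M) = O(M^{-k})$ bounds the truncation bias by $O(M^{-(k-1)})$; the contamination contributes the mixture correction $\alpha(\mathbb{E}_{\text{out}}[\bar Y] - \mathbb{E}_{\text{in}}[\bar Y])$, bounded by $O(\alpha M)$ because truncated terms lie in $[-M,M]$; Bernstein's inequality gives $|\frac1n\sum \bar Y_i - m| = O(\sqrt{\log(1/\delta)/n} + M\log(1/\delta)/n)$ using $\mathrm{Var}(\bar Y) = O(1)$; and the Laplace tail gives $O(M\log(1/\delta)/(n\epsilon))$. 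Since $\epsilon \le 1$ absorbs the $M\log(1/\delta)/n$ term into the privacy term, summing the four contributions reproduces exactly the claimed bound.

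The main obstacle is the first step: unlike the raw-moment case, truncation here must be centered at a data-dependent estimate, so the whole argument hinges on the histogram argmax being robust simultaneously to heavy tails (handled by the Markov mass bound), to the Laplace noise (handled by choosing $r$ and $D$ constant so $|\mathcal{J}|$ is constant and the noise scales as $1/(n\epsilon)$), and to contamination (handled by the breakdown-point inequality). Carefully tracking the constants in this three-way competition is what determines both the explicit threshold $\mathcal{T}(\alpha,\epsilon,\delta)$ and the value of $\alpha_{\max}$; everything downstream in the second step is then a routine reduction to the already-established raw-moment concentration.
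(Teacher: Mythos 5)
Your proposal is correct, and its overall two-step skeleton (coarse histogram localization of $\mu$, then analysis of the truncated, noised mean recentered at $J$) necessarily mirrors the paper's, since both analyze the same algorithm; your Step 1 is essentially the paper's Step 1 (pigeonhole a heavy bin near $\mu$, tail-bound far bins via the moment assumption, union-bound the sampling and Laplace errors over the $2D/r$ bins, and impose a breakdown inequality — the paper's version reads $6C_1 + 4C_2 < 1$ with $C_1$ the uniform histogram error and $C_2 = \alpha + (1-\alpha)r^{-k}$). Where you genuinely diverge is Step 2. The paper works at the sample level: it splits the truncated sum over inlier and contaminated index sets, controls the contaminated fraction $|N_G|/n = O(\alpha)$ by Bernstein on a Bernoulli, applies Bernstein to the clean inlier sum, and — because it distributes $(J-\mu)$ across per-sample indicators — must separately control an extra empirical boundary term $T_3 = \left|\frac{1}{n}\sum_i (J-\mu)\mathbbm{1}(|X_i - J| > M)\right|$ via Hoeffding. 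You instead work at the population level: the exact identity $(J-\mu) + \mathbb{E}_{\mathrm{in}}[\bar Y] = -\mathbb{E}_{\mathrm{in}}[Y\mathbbm{1}(|Y|>M)]$ absorbs the recentering error into a single H\"older-bounded truncation bias, contamination enters once as the mixture correction $\alpha(\mathbb{E}_{\mathrm{out}}[\bar Y]-\mathbb{E}_{\mathrm{in}}[\bar Y]) = O(\alpha M)$, and only one empirical deviation (of $\frac{1}{n}\sum_i \bar Y_i$ around the mixture mean) remains; this needs one concentration bound where the paper needs three. The one point requiring care is your assertion $\mathrm{Var}(\bar Y) = O(1)$: under the mixture this variance can be as large as $\Theta(\alpha M^2)$ (e.g., contamination concentrated at $\pm M$), which is not $O(1)$ for arbitrary $M$. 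It is rescued either by the eventual choice $M = O(\alpha^{-1/k})$ (so $\alpha M^2 = O(\alpha^{1-2/k}) = O(1)$ for $k \ge 2$), or by noting that the extra Bernstein term $M\sqrt{\alpha\log(1/\delta)/n}$ is at most $\alpha M$ once $n \ge \log(1/\delta)/\alpha$, which is within the threshold $\mathcal{T}(\alpha,\epsilon,\delta)$ the theorem permits (the paper's own threshold includes the even larger $\log(1/\delta)/\alpha^2$). With that one-line fix, your argument reproduces the stated bound.
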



The above theorem follows the same pattern as the one for the raw moment case (Theorem~\ref{Thm:MeanRaw}). The key differences are the threshold value $\mathcal{T}(\alpha,\epsilon,\delta)$ and  the breakdown point $\alpha_{\max}$, which are summarized in the following results.

\begin{corollary}[Mean Concentration, $\alpha = 0$] 
\label{cor:clean}
Let the same assumptions in Theorem~\ref{thm:CentralMeanUp} hold. 
    For any $\epsilon \in (0,1]$, setting $r = 10^{1/k}$ and $M = \Theta\left(\frac{n\epsilon}{\log(1/\delta)}\right)^{1/k}$, then for all $n \ge \Omega\left(\log(D/\delta)/\epsilon\right)$ and $D \ge 2r$,  we have that for any $\delta \in (0,1)$, with probability at least $1-\delta$, $ |\tilde{\mu} - \mu| \le \beta$ where
    \begin{align*}
   \beta = O\left(\sqrt{\frac{\log(1/\delta)}{n}}+\left(\frac{\log(1/\delta)}{n\epsilon}\right)^{1-\frac{1}{k}}\right).
\end{align*}
In other words, taking number of samples $n$ such that 
\begin{align*}
    n \ge O\left(\frac{\log(1/\delta)}{\eta^2} + \frac{\log(1/\delta)}{\epsilon \eta^{\frac{k}{k-1}}} + \frac{\log(D/\delta)}{\epsilon}\right),
\end{align*}
we have $|\tilde{\mu} - \mu| \le \eta$ with probability at least $1-\delta$.
\end{corollary}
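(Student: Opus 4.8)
The plan is to exploit the two-step structure of Algorithm~\ref{alg:prm-central} together with the sample splitting: the first fold $\{X_i\}_{i=1}^n$ drives the histogram estimate $J$, and the second fold $\{X_i\}_{i=n+1}^{2n}$ drives the truncated-mean estimate. The split serves two purposes. First, since the two folds are disjoint, each Laplace mechanism acts on fresh data, so by parallel composition the overall routine is $\epsilon$-DP. Second, and crucially for the analysis, $J$ is independent of the second fold, so after conditioning on $J$ I may treat $Z_i := X_i - J$ ($i>n$) as i.i.d.\ with a fixed shift. The proof is then the clean ($\alpha=0$) specialization of Theorem~\ref{thm:CentralMeanUp}: establish $|\widetilde\mu-\mu| = O(\sqrt{\log(1/\delta)/n} + M\log(1/\delta)/(n\epsilon) + M^{-(k-1)})$ and optimize $M$.

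First I would show the initial estimate is $O(1)$-accurate, i.e.\ $|J-\mu|\le 2r$ with probability $1-\delta/2$. The finite $k$-th central moment gives the tail bound $\mathbb{P}(|X-\mu|>t)\le t^{-k}$ via Markov applied to $|X-\mu|^k$. With $r=10^{1/k}$ this yields $\mathbb{P}(|X-\mu|>r)\le 1/10$, so the interval $[\mu-r,\mu+r]$ — covered by at most three bins — carries mass $\ge 9/10$, whence some bin within distance $2r$ of $\mu$ has true mass $\ge 3/10$; conversely, any bin at distance $>r$ from $\mu$ has mass $<1/10$. Thus it suffices to control the perturbation $\gamma := \max_j|\widetilde p_j - p_j|$ below $1/10$, since then the argmax cannot land on a bin of true mass $<1/10$, forcing $|J-\mu|\le 2r$. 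I bound $\gamma$ by the empirical fluctuation $\max_j|\hat p_j - p_j| = O(\sqrt{\log(D/\delta)/n})$ (union bound over the $\Theta(D/r)$ bins, or DKW) plus the maximum histogram noise $\max_j|\mathrm{Lap}_j| = O(\log(D/\delta)/(n\epsilon))$. The latter is exactly what forces the hypothesis $n=\Omega(\log(D/\delta)/\epsilon)$: it makes the noise a small constant, so $\gamma<1/10$ and $|J-\mu|=O(1)$.

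Next, conditioning on $J$ with $|J-\mu|=O(1)$, I analyze $\widetilde\mu = J + \tfrac1n\sum_{i>n}Y_i + \mathrm{Lap}(2M/(n\epsilon))$ with $Y_i = Z_i\,\mathbbm{1}\{|Z_i|\le M\}$ and $\mathbb{E}[Z_i]=\mu-J$. Writing $\widetilde\mu-\mu$ as the sum of (i) the truncation bias $\mathbb{E}[Y_i]-\mathbb{E}[Z_i]$, (ii) the empirical deviation $\tfrac1n\sum Y_i-\mathbb{E}[Y_i]$, and (iii) the Laplace noise, I bound each. For (i), once $M\ge 2|J-\mu|=O(1)$ we have $|Z_i|>M \Rightarrow |X_i-\mu|\ge M/2$, so $|\mathbb{E}[Y_i]-\mathbb{E}[Z_i]| \le 2\,\mathbb{E}[|X-\mu|\mathbbm{1}\{|X-\mu|>M/2\}] \le 2(M/2)^{-(k-1)} = O(M^{-(k-1)})$, using $|X-\mu|\mathbbm{1}\{|X-\mu|>t\}\le |X-\mu|^k/t^{k-1}$. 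For (ii), each $|Y_i|\le M$ and $\mathbb{E}[Y_i^2]\le \mathbb{E}[Z_i^2] = \mathrm{Var}(X)+(\mu-J)^2 = O(1)$ (since $\mathrm{Var}(X)=\mathbb{E}|X-\mu|^2\le(\mathbb{E}|X-\mu|^k)^{2/k}\le 1$ for $k\ge 2$), so Bernstein gives $O(\sqrt{\log(1/\delta)/n}+M\log(1/\delta)/n)$. For (iii), a Laplace tail bound gives $O(M\log(1/\delta)/(n\epsilon))$, which dominates the $M\log(1/\delta)/n$ term since $\epsilon\le 1$. Collecting terms reproduces the bound of Theorem~\ref{thm:CentralMeanUp} at $\alpha=0$.

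Finally I would set $M=\Theta((n\epsilon/\log(1/\delta))^{1/k})$, which balances the decreasing bias $M^{-(k-1)}=(\log(1/\delta)/(n\epsilon))^{1-1/k}$ against the increasing Laplace term $M\log(1/\delta)/(n\epsilon)=(\log(1/\delta)/(n\epsilon))^{1-1/k}$; both collapse to the second term of $\beta$, yielding $|\widetilde\mu-\mu|\le\beta$ after a union bound over the $O(\delta)$ failure events. The sample-complexity claim then follows by inverting $\beta\le\eta$: $\sqrt{\log(1/\delta)/n}\le\eta$ needs $n\ge\log(1/\delta)/\eta^2$, the heavy-tail term needs $n\ge\log(1/\delta)/(\epsilon\eta^{k/(k-1)})$, and adding the Step-1 requirement $n=\Omega(\log(D/\delta)/\epsilon)$ gives the stated bound. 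I expect the main obstacle to be Step~1: unlike the raw-moment case (Theorem~\ref{Thm:MeanRaw}) where truncation around $0$ suffices, here the unbounded mean range $[-D,D]$ forces a histogram over $\Theta(D/r)$ bins, and the argmax must be shown robust to Laplace noise on all of them simultaneously — this union bound is precisely what introduces the $\log(D/\delta)$ dependence and the additive $\log(D/\delta)/\epsilon$ sample cost, the one genuinely new ingredient relative to the raw-moment analysis.
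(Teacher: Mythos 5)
Your proposal is correct and follows the same two-step strategy as the paper's proof of Theorem~\ref{thm:CentralMeanUp} (Case I there): the histogram localization $|J-\mu|\le 2r$ with $r=10^{1/k}$ via a union bound over the $O(D/r)$ bins plus the mass-separation argument (a good bin has true mass at least $3/10$, every far bin at most $1/10$, so perturbation below $1/10$ suffices), followed by a Bernstein-plus-bias-plus-Laplace analysis of the truncated mean, the choice $M=\Theta\left(\left(n\epsilon/\log(1/\delta)\right)^{1/k}\right)$ subject to $M\ge 4r$, and the same inversion for the sample-complexity statement. The one genuine difference is your second-step decomposition: the paper splits $X_i-J=(X_i-\mu)+(\mu-J)$, which leaves an empirical remainder $T_3=\frac{1}{n}\sum_{i}(J-\mu)\mathbbm{1}\{|X_i-J|>M\}$ requiring a separate Hoeffding bound, whereas you condition on $J$ (legitimate thanks to the sample split) and use the exact identity $J-\mu=-\mathbb{E}[X_i-J]$ to write the error as (empirical deviation of $Y_i$) plus (expectation-level truncation bias) plus (Laplace noise), so the shift cancels exactly and one concentration event disappears from the union bound. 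Your bias estimate $\mathbb{E}\left[|X_i-J|\mathbbm{1}\{|X_i-J|>M\}\right]\le 2\,\mathbb{E}\left[|X_i-\mu|\mathbbm{1}\{|X_i-\mu|>M/2\}\right]\le 2(M/2)^{-(k-1)}$, valid once $M\ge 4r\ge 2|J-\mu|$, plays the role of the paper's H\"older/Markov bound on $\mathbb{E}[(X_i-\mu)\mathbbm{1}\{|X_i-J|>M\}]$; both routes deliver the same $\beta$, with yours being slightly leaner in bookkeeping.
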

\begin{remark}
\label{rem:k20}
    The above lemma strictly improves the result in~\cite[Theorem 3.5]{kamath2020private}\footnote{We also note that the main focus of \citet{kamath2020private} is not on achieving the optimal estimate.}. In particular, it uses the method of medians-of-means and achieves $\frac{\log D \cdot \log(1/\delta)}{\epsilon}$ for the third term. In contrast, our third term is additive rather than multiplicative. In fact, our concentration is optimal, which matches the lower bound for the one-dimensional case (see~\cite[Theorem 7.2]{hopkins2022efficient}).
\end{remark}

\begin{corollary}[Mean Concentration, $\alpha > 0$]
\label{cor:con}
Let the same assumptions in Theorem~\ref{thm:CentralMeanUp} hold. 
    For any $\epsilon \in (0,1]$ and $\alpha \in (0,0.133)$, we let $r = \iota^{1/k}$ where $\iota = \frac{1-\alpha}{0.249 - \alpha}$ and $M =\Theta( \min\{\left(\frac{n\epsilon}{\log(1/\delta)}\right)^{1/k}, ( \alpha)^{-1/k}\})$. Then, there exists constant $c_1$, for all $n$ such that $n \ge \cT = \Omega(\max\{\frac{\iota\log(1/\delta)}{\epsilon}, \frac{c_1\log(D/\delta)}{\epsilon}, \frac{\log(1/\delta)}{\alpha^2}\})$ and $D \ge 2r$, we have that for any $\delta \in (0,1)$, with probability at least $1-\delta$, $|\widetilde{\mu} - \mu| \le \beta$ with 
    \begin{align*}
   \beta = O\left(\sqrt{\frac{\log(1/\delta)}{n}}+\left(\frac{\log(1/\delta)}{n\epsilon}\right)^{1-\frac{1}{k}} + \alpha^{1-\frac{1}{k}}\right).
\end{align*}
\end{corollary}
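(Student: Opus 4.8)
The plan is to read this corollary off from Theorem~\ref{thm:CentralMeanUp} in two stages: a routine algebraic collapse of the general four-term bound, and the substantive task of certifying that the explicit choices $r=\iota^{1/k}$, $\alpha_{\max}=0.133$, and the stated threshold $\cT(\alpha,\epsilon,\delta)$ are admissible instances of the abstract constants whose existence the theorem merely asserts. The second task amounts to re-tracing the two-step analysis of Algorithm~\ref{alg:prm-central} with the numerical constants kept explicit.

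I would first dispose of the algebra. Substituting $M=\Theta(\min\{(n\epsilon/\log(1/\delta))^{1/k},\,\alpha^{-1/k}\})$ into the bound of Theorem~\ref{thm:CentralMeanUp} and splitting on which argument attains the minimum: if $M=(n\epsilon/\log(1/\delta))^{1/k}$ then the privacy term $M\log(1/\delta)/(n\epsilon)$ and the truncation term $M^{-(k-1)}$ both equal $(\log(1/\delta)/(n\epsilon))^{1-1/k}$, while $M\le\alpha^{-1/k}$ gives $\alpha M\le\alpha^{1-1/k}$; if instead $M=\alpha^{-1/k}$ then $M^{-(k-1)}$ and $\alpha M$ both equal $\alpha^{1-1/k}$, and $M\le(n\epsilon/\log(1/\delta))^{1/k}$ caps the privacy term by $(\log(1/\delta)/(n\epsilon))^{1-1/k}$. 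Either way the four terms are absorbed into the three terms of $\beta$, so this piece is immediate.

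The heart of the matter is the first (histogram) step, where I must show the argmax bin $J$ localizes the mean, $|J-\mu|=O(r)$, with probability $\ge 1-\delta/2$. I would assemble three ingredients. (i) By Definition~\ref{def:central} and Markov, $\mathbb{P}_{X\sim P}(|X-\mu|<r)\ge 1-r^{-k}$, so the (at most) three consecutive bins covering $[\mu-r,\mu+r)$ carry inlier mass $\ge 1-r^{-k}$, whence by pigeonhole some near bin has observed mixture mass $\ge (1-\alpha)(1-r^{-k})/3$. (ii) Any bin lying outside an $O(r)$-window of $\mu$ has inlier mass at most $r^{-k}$ and absorbs at most $\alpha$ of contamination, so its observed mass is at most $(1-\alpha)r^{-k}+\alpha$. (iii) The empirical counts concentrate around these population values up to sampling error plus the maximum of $O(D/r)$ Laplace variables; controlling the Laplace maximum needs $n\ge\Omega(c_1\log(D/\delta)/\epsilon)$, and resolving the $\Theta(\alpha)$-scale lever the adversary has for tipping the argmax (Hoeffding with a union bound over the bins) is what contributes the $\Omega(\log(1/\delta)/\alpha^2)$ requirement. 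The value $r=\iota^{1/k}$ is then pinned at the boundary of the population margin inequality $(1-\alpha)(1-r^{-k})/3 > (1-\alpha)r^{-k}+\alpha$: with $r^{-k}=1/\iota$ one computes $(1-\alpha)r^{-k}=0.249-\alpha$ and $(1-\alpha)(1-r^{-k})=0.751$, so the inequality holds with a small slack, which is exactly why the numeral is $0.249\approx\frac{1}{4}$.

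Finally, for the second step I would condition on $|J-\mu|\le O(r)$ and analyze the truncated-mean estimator centered at $J$ exactly as in the finite-raw-moment case (Theorem~\ref{Thm:MeanRaw}), using $|X_i-J|\le|X_i-\mu|+O(r)$ to convert central-moment control around $\mu$ into truncation control around $J$; this reproduces the inlier truncation bias $O(M^{-(k-1)})$, the contamination bias $O(\alpha M)$, the fluctuation $O(\sqrt{\log(1/\delta)/n})$, and the Laplace term $O(M\log(1/\delta)/(n\epsilon))$. This requires the truncation radius to dominate the localization scale, $M\gtrsim r$: on the sample-limited branch this is $n\ge\Omega(\iota\log(1/\delta)/\epsilon)$, while on the contamination-limited branch $M=\alpha^{-1/k}$ it becomes $\alpha^{-1/k}\ge\iota^{1/k}$, i.e.\ $\alpha^{-1}\ge\iota=\frac{1-\alpha}{0.249-\alpha}$, equivalently $\alpha^2-2\alpha+0.249\ge 0$, whose relevant root is precisely the breakdown point $\alpha_{\max}\approx 0.133$. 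The main obstacle is keeping all of these constants ($0.249$, the pigeonhole factor $3$, the noise slacks, and hence $\iota$ and $\alpha_{\max}$) mutually consistent through ingredients (i)--(iii) and the $M\gtrsim r$ constraint; once that bookkeeping is done, the three entries of $\cT$ and the bound $\beta$ fall out.
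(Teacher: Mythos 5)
Your proposal follows essentially the same route as the paper's proof: the identical two-step structure (private-histogram localization of $\mu$ to within $2r$ via the three-consecutive-bins pigeonhole argument, with the constant population margin $0.751/3 > 0.249$ pinning $r=\iota^{1/k}$, followed by a truncated mean recentred at $J$ and analyzed as in Theorem~\ref{Thm:MeanRaw} using $|X_i-J|\le |X_i-\mu|+2r$), the same $M\gtrsim r$ constraints, and the same derivation of the breakdown point $\alpha_{\max}\approx 0.133$ from $\alpha^{-1}\ge \iota$ on the $M=\Theta(\alpha^{-1/k})$ branch. The one inaccuracy is your attribution of the $\Omega\left(\log(1/\delta)/\alpha^2\right)$ threshold to the histogram step: since the margin there is an absolute constant, that step only needs constant accuracy (hence $n\gtrsim \log(D/\delta)/\epsilon$), and in the paper the $\alpha^{-2}$ condition instead arises in the second step when controlling the cross term $(J-\mu)\cdot\frac{1}{n}\sum_i \mathbbm{1}{(|X_i-J|>M)}$ by Hoeffding — a harmless misplacement, since that assumption is available in the statement wherever it is invoked.
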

\begin{remark}
    The above concentration has the same form as the one in Theorem~\ref{Thm:MeanRaw}. Specifically, for a large sample size $n$, it has the optimal concentration (for small $\alpha$). The threshold $\cT$ on $n$ depends on both $\alpha, \epsilon$ now. We note that even for the sub-Gaussian inlier distributions without privacy protection, the existing concentration also has a threshold $\cT = \frac{\log(1/\delta)}{\alpha^2}$ (see Lemma 4.1 in~\citet{mukherjee2021mean}). 
\end{remark}

Now, we are left to leverage the above two concentration results to design specific  instantiations of our meta-algorithm and establish their performance guarantees.

Our first instantiation is for the uncontaminated case, i.e., $\alpha = 0$. Therefore, robustness is then only with respect to heavy-tailed rewards while privacy is still preserved.

\begin{theorem}[Performance Guarantees, $\alpha = 0$]
\label{thm:central-clean}
Consider a private and robust MAB with inlier distributions satisfying Definition~\ref{def:raw} and $\alpha = 0$.
Let Algorithm~\ref{Alg:meta} be instantiated with Algorithm~\ref{alg:prm-central}, and $r$, $M_{\tau}$, $\beta_{\tau}$ be given by Corollary~\ref{cor:clean} with $n$ replaced by $B_{\tau}$. Set $\cT = \Omega(\frac{\log(D/\delta)}{\epsilon})$ and $\delta = 1/T$. Then, Algorithm~\ref{Alg:meta} is $\epsilon$-DP with its regret upper bound
\begin{align*}
    \!\!O\left(\sqrt{KT \log T}\!+\!\left(\frac{K\log T}{\epsilon}\right)^{\frac{k-1}{k}}T^{\frac{1}{k}}\!+\!\gamma \right),
\end{align*}
where $\gamma:= O\left(\frac{KD \log (DT)}{\epsilon}\right)$.
\end{theorem}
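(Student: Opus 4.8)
The plan is to prove the two assertions separately: the $\epsilon$-DP guarantee and the regret bound.

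\textbf{Privacy.} First I would show that a single call of Algorithm~\ref{alg:prm-central} is $\epsilon$-DP with respect to any one element of its input, and then lift this to the whole interaction via the batching-and-forgetting structure. Since the two folds are disjoint, a changed reward enters at most one of the two Laplace mechanisms. The histogram step has $\ell_1$-sensitivity $2/n$ (a changed point moves between two bins, each mass shifting by $1/n$), matching the scale $2/(n\epsilon)$; the second step's truncated sum $\sum (X_i-J)\mathbbm{1}_{\{|X_i-J|\le M\}}$ has $\ell_1$-sensitivity $2M/n$ since each summand lies in $[-M,M]$, matching scale $2M/(n\epsilon)$. When the changed point is in the first fold, $\widetilde{\mu}$ is a post-processing of the $\epsilon$-DP release $J$; when it is in the second fold, $J$ is a constant and the truncated-mean release is $\epsilon$-DP. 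Hence each PRM output is $\epsilon$-DP in its full input. To finish, I would use that in Algorithm~\ref{Alg:meta} the action at round $t$ depends only on rewards observed strictly before round $t$, and that forgetting makes each reward feed into exactly one PRM call. For neighboring sequences differing at position $i^\ast$, the trajectory agrees up to that pull, the change perturbs the distribution of exactly one estimate, and all subsequent actions are post-processing; conditioning on all other Laplace noise yields $\epsilon$-DP for the whole action sequence.

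\textbf{Regret, good event and per-arm plays.} I would set $\delta = 1/T$ and define the good event that every estimate $\widetilde{\mu}_a$ produced in a batch with $B_\tau \ge \cT$ satisfies $|\widetilde{\mu}_a - \mu_a|\le \beta_\tau$, with $\beta_\tau$ the radius of Corollary~\ref{cor:clean} for $n=B_\tau$. A union bound over the $O(K\log T)$ estimates makes the complement contribute only $O(1)$ to the regret. On the good event, the standard elimination geometry applies: the optimal arm survives every round (as $\widetilde{\mu}_{\max}-\widetilde{\mu}_{a^\ast}\le 2\beta_\tau$), and a suboptimal arm $a$ with gap $\Delta_a$ is removed once $\beta_\tau < \Delta_a/4$. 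Inverting the two pieces of $\beta_\tau$ shows $a$ is pulled at most $O(B_{\tau_a}) = O\!\left(\frac{\log T}{\Delta_a^2} + \frac{\log T}{\epsilon\,\Delta_a^{k/(k-1)}}\right)$ times during the elimination phase, so its regret contribution is $\Delta_a\cdot O(B_{\tau_a}) = O\!\left(\frac{\log T}{\Delta_a} + \frac{\log T}{\epsilon\,\Delta_a^{1/(k-1)}}\right)$.

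\textbf{Summation and the $\gamma$ term.} To get a problem-independent bound I would threshold gaps at $\Delta_0$: arms with $\Delta_a\le\Delta_0$ contribute at most $T\Delta_0$, while arms with $\Delta_a>\Delta_0$ contribute $O\!\left(\frac{K\log T}{\Delta_0} + \frac{K\log T}{\epsilon\,\Delta_0^{1/(k-1)}}\right)$; optimizing $\Delta_0$ over the two trade-offs yields $\sqrt{KT\log T}$ and $(K\log T/\epsilon)^{(k-1)/k}T^{1/k}$. The additive $\gamma = O(KD\log(DT)/\epsilon)$ is the warm-up cost: because Corollary~\ref{cor:clean} guarantees concentration only once $B_\tau \ge \cT = \Omega(\log(DT)/\epsilon)$, no elimination can occur earlier, so in the first batch of the elimination phase all $K$ arms are still active and each is played $\Theta(\cT)$ times, each round costing at most the range $2D$; this gives $O(KD\cT)=\gamma$ (the random first phase costs only $O(D\cT)$ and is dominated).

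\textbf{Main obstacle.} I expect the delicate point to be the privacy argument rather than the regret algebra: one must justify that, despite adaptive eliminations making the data partition depend on earlier private outputs, a single reward change still perturbs exactly one estimate in an $\epsilon$-DP way, and one must correctly compose the two internal Laplace mechanisms of the central-moment PRM together with the fact that the initial estimate $J$ is itself a private release feeding into $\widetilde{\mu}$. Once this is settled, the regret bound follows from routine batched-elimination bookkeeping driven by the concentration radius $\beta_\tau$ of Corollary~\ref{cor:clean}.
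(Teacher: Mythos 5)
Your proposal is correct and follows essentially the same route as the paper's proof: Laplace sensitivity bounds combined with parallel composition (disjoint folds and forgetting) plus post-processing for privacy, then a good-event union bound, the standard $4\beta_\tau$ elimination geometry, gap thresholding with an optimized $\Delta_0$ (the paper's $\eta$), and a separate $O(KD\cT)$ warm-up accounting for $\gamma$. The only cosmetic difference is that the paper folds the $\cT$ threshold into the per-arm bound on $B_{\tau(a)}$ and cites a histogram-learner privacy lemma rather than spelling out the sensitivities, but the substance is identical.
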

\begin{remark}
    To the best of our knowledge, this is the first result on private and heavy-tailed bandits with the finite central moment assumption. The state-of-the-art result is only focused on the simpler case, i.e., the finite raw moment assumption~\cite{tao2021optimal}.  
\end{remark}

Finally, armed with Corollary~\ref{cor:con}, we have the second instantiation of our meta-algorithm that deals with the contaminated case. 
\begin{theorem}[Performance Guarantees, $\alpha > 0$]
\label{thm:central-cont}
Consider a private and robust MAB with inlier distributions satisfying Definition~\ref{def:raw} and $\alpha \in (0,0.133)$.
Let Algorithm~\ref{Alg:meta} be instantiated with Algorithm~\ref{alg:prm-central}, and $r$, $\cT$, $M_{\tau}$, $\beta_{\tau}$ be given by Corollary~\ref{cor:con} with $n$ replaced by $B_{\tau}$. Set $\delta = 1/T$, then Algorithm~\ref{Alg:meta} is $\epsilon$-DP with its regret upper bound
\begin{align*}
    \!\!O\left(\sqrt{KT \log T}\!+\!\left(\frac{K\log T}{\epsilon}\right)^{\frac{k-1}{k}}T^{\frac{1}{k}}\!+\!T \alpha^{1-\frac{1}{k}}\!+\!\hat{\gamma} \right),
\end{align*}
where $\hat{\gamma}:= O\left(\frac{D K \log T}{\alpha^2}+\frac{\iota D K\log T}{\epsilon}+\frac{ D K\log(DT)}{\epsilon}\right)$ and $\iota = \frac{1-\alpha}{0.249 - \alpha}$.
\end{theorem}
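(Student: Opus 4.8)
The plan is to establish the two claims---that Algorithm~\ref{Alg:meta} is $\epsilon$-DP and that it attains the stated regret---separately, since the privacy argument is structural while the regret argument is a concentration-plus-gap-balancing computation. Throughout I treat the inlier distributions as satisfying Definition~\ref{def:central} (finite central moment), and I invoke the concentration guarantee of Corollary~\ref{cor:con}.

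For privacy, I would first show that a single call to Algorithm~\ref{alg:prm-central} is $\epsilon$-DP and then lift this to the whole meta-algorithm via batching and forgetting. The key observation is that Algorithm~\ref{alg:prm-central} splits its $2n$ samples into two disjoint folds: the private histogram $\widetilde p_j$ is computed only from the first fold and the truncated-mean estimate only from the second fold. Changing one point in the first fold alters at most two histogram counts, so the $\ell_1$-sensitivity is $2/n$ and the scale $2/(n\epsilon)$ makes $\widetilde p_j$---and its post-processing $J=\arg\max_j\widetilde p_j$---$\epsilon$-DP; the final output is then a randomized post-processing of $J$ using the independent second fold, so it stays $\epsilon$-DP with respect to first-fold changes. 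Changing one point in the second fold leaves $J$ untouched and perturbs $\frac1n\sum(X_i-J)\mathbbm{1}_{\{|X_i-J|\le M\}}$ by at most $2M/n$, which the scale $2M/(n\epsilon)$ hides. By parallel composition over the two disjoint folds each PRM call is $\epsilon$-DP. Finally, because each collected reward is forgotten after its batch and is used in exactly one PRM call (one arm, one batch), two neighboring reward sequences differing in a single element affect only one PRM output; since all eliminations and action choices are post-processing of the PRM outputs, parallel composition and post-processing immunity give that Algorithm~\ref{Alg:meta} is $\epsilon$-DP.

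For the regret, I would condition on the good event that $|\widetilde\mu_a-\mu_a|\le\beta_\tau$ for every batch $\tau$ with $B_\tau\ge\cT$ and every active arm $a$, with $\beta_\tau$ from Corollary~\ref{cor:con} at $n=B_\tau$. With $\delta=1/T$ and a union bound over the $O(\log T)$ batches and $K$ arms, this event fails with probability $O(K\log T/T)$, contributing only $O(DK\log T)$ to the expected regret since every per-round gap is $\le 2D$. Under the good event a standard argument shows the optimal arm survives, as $\widetilde\mu_{\rm max}-\widetilde\mu_{a^*}\le 2\beta_\tau$, while a suboptimal arm $a$ with gap $\Delta_a$ is removed once $B_\tau\ge\cT$ and $\beta_\tau<\Delta_a/4$. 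I then split the regret into a forced-exploration part and an elimination part. The forced part covers all rounds with $B_\tau<\cT$ together with the first batch of size $\ge\cT$: each arm is pulled $O(\cT)$ times at per-pull regret $\le 2D$, giving $O(DK\cT)=\hat\gamma$ after substituting $\cT=\Omega(\max\{\iota\log T/\epsilon,\ \log(DT)/\epsilon,\ \log T/\alpha^2\})$.

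For the elimination part, each surviving suboptimal arm $a$ is pulled at most $N_a=O(B_{\tau_a})$ times, where $\tau_a$ is its last active batch and hence $\beta_{\tau_a-1}\gtrsim\Delta_a$. Since $\beta_\tau$ is the maximum of the sub-Gaussian term $\sqrt{\log T/B_\tau}$, the privacy term $(\log T/(B_\tau\epsilon))^{1-1/k}$, and the constant contamination bias $\alpha^{1-1/k}$, I would bound $N_a$ by the smallest $B_\tau$ at which each decaying term falls below $\Delta_a$, giving $N_a=O\big(\log T/\Delta_a^2+\tfrac{\log T}{\epsilon}\Delta_a^{-k/(k-1)}\big)$ for arms with $\Delta_a\gtrsim\alpha^{1-1/k}$. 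Summing $\Delta_aN_a$ and applying the usual minimax gap-balancing (the worst case spreads regret over all $K$ arms) converts the first contribution to $O(\sqrt{KT\log T})$ and the second to $O((K\log T/\epsilon)^{(k-1)/k}T^{1/k})$, while arms with $\Delta_a\lesssim\alpha^{1-1/k}$---which the constant bias can never eliminate---contribute at most $\alpha^{1-1/k}\sum_aN_a\le T\alpha^{1-1/k}$. Collecting the forced-exploration, elimination, and failure contributions yields the claimed bound. I expect the elimination accounting to be the main obstacle: carefully partitioning arms by gap and simultaneously balancing the three heterogeneous terms in $\beta_\tau$ (two decaying in $B_\tau$ at different rates plus one constant) so the minimax sum reproduces exactly $\sqrt{KT\log T}+(K\log T/\epsilon)^{(k-1)/k}T^{1/k}+T\alpha^{1-1/k}$ without double counting, while keeping $\cT$ cleanly isolated inside $\hat\gamma$.
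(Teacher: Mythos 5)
Your proposal is correct and takes essentially the same route as the paper: the paper proves this theorem by rerunning the argument of Theorem~\ref{thm:RegRawUp} (good-event conditioning, survival of the optimal arm, elimination once $\Delta_a \ge 4\beta_\tau$, splitting arms at the $\alpha^{1-\frac{1}{k}}$ gap threshold, and $\eta$-balancing to get the minimax terms) with the elimination threshold replaced by $\max\{\iota\log(1/\delta)/\epsilon,\, c_1\log(D/\delta)/\epsilon,\, \log(1/\delta)/\alpha^2\}$ and the gap bound $\Delta_a \le 2D$ (yielding $\hat{\gamma}$), exactly as you do, and its privacy claim rests on the same ingredients (Laplace mechanism, histogram privacy, parallel composition over disjoint data, post-processing). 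Your handling of the two disjoint folds in Algorithm~\ref{alg:prm-central} and of batching/forgetting is in fact more explicit than the paper's one-line citation.
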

The above upper bound also matches our lower bound up to $O(\hat{\gamma})$, which is dominated by other terms for a sufficiently large $T$ and constant $\alpha, D$.

\section{Experiments}
\begin{figure*}[!ht] 
	\centering  
	\subfigure[Student's $t$, $\alpha=5\%$, $\epsilon=0.2$]{
		\label{fig-p-10-0.5}
		\includegraphics[width=0.31\linewidth]{./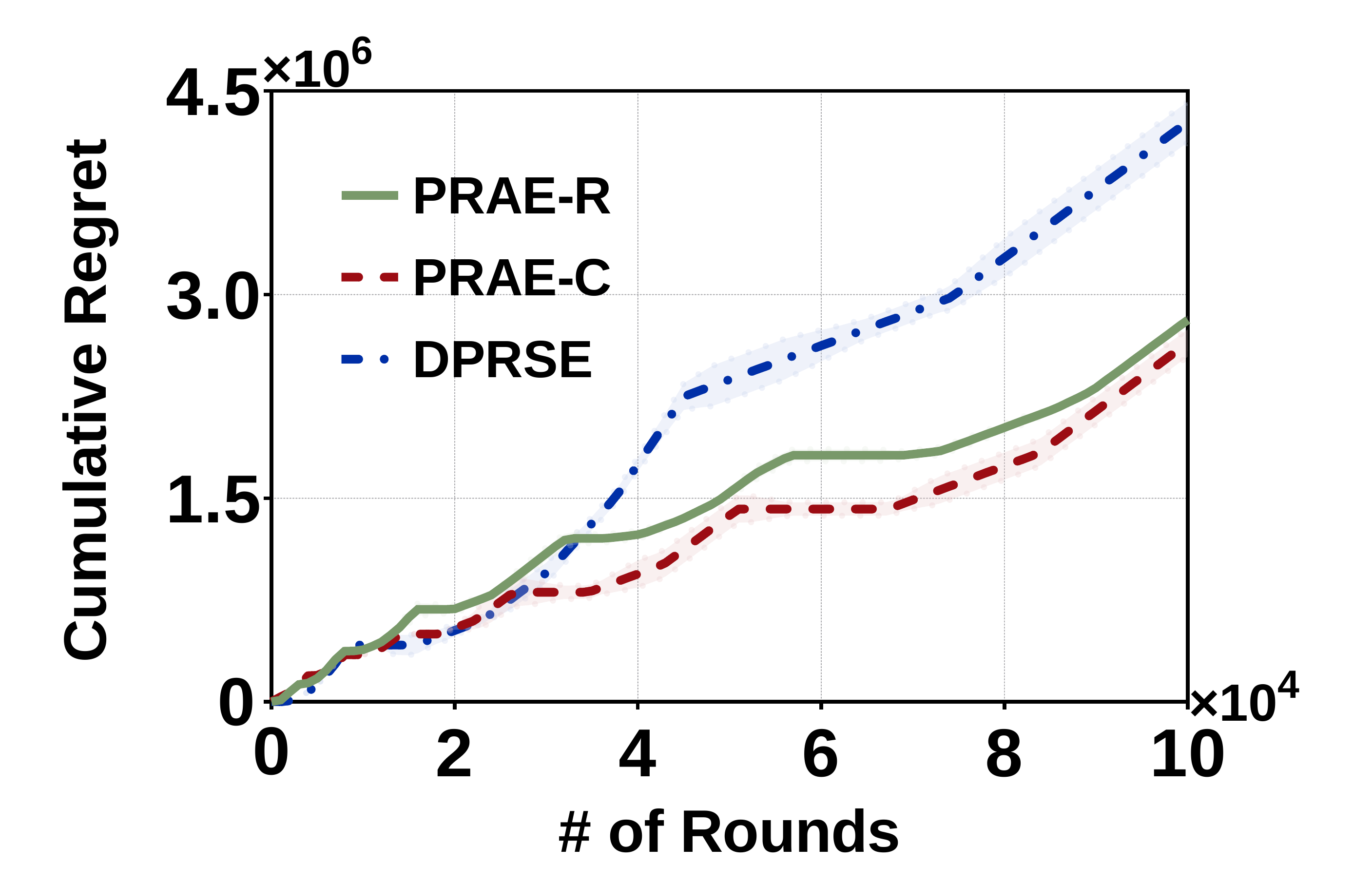}}
    \subfigure[Student's $t$, $\alpha=5\%$, $\epsilon=0.5$]{
		\label{fig-p-10-1.0}
		\includegraphics[width=0.31\linewidth]{./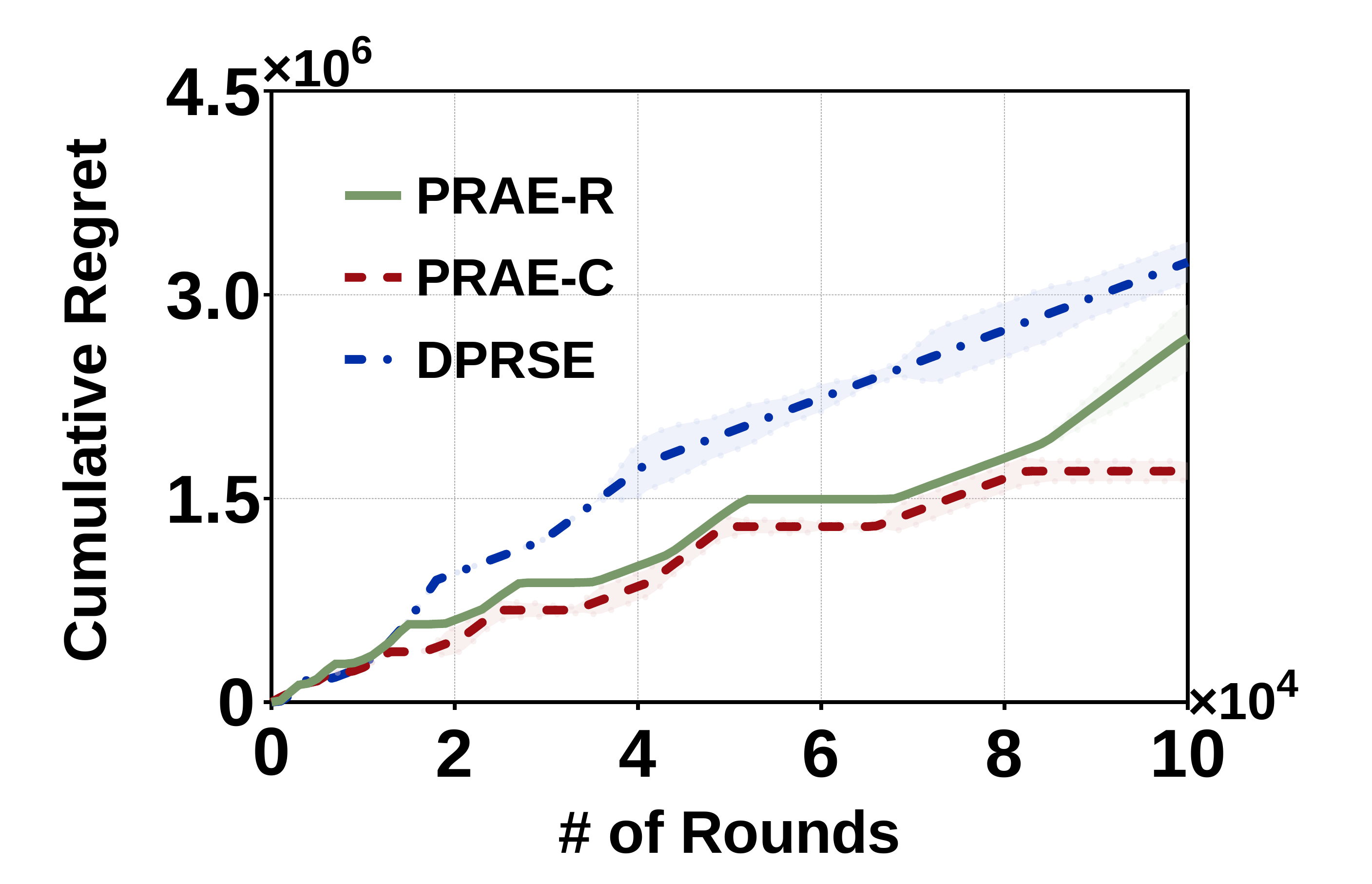}}
	\subfigure[Student's $t$, $\alpha=5\%$, $\epsilon=1.0$]{
		\label{fig-p-10-10.0}
		\includegraphics[width=0.31\linewidth]{./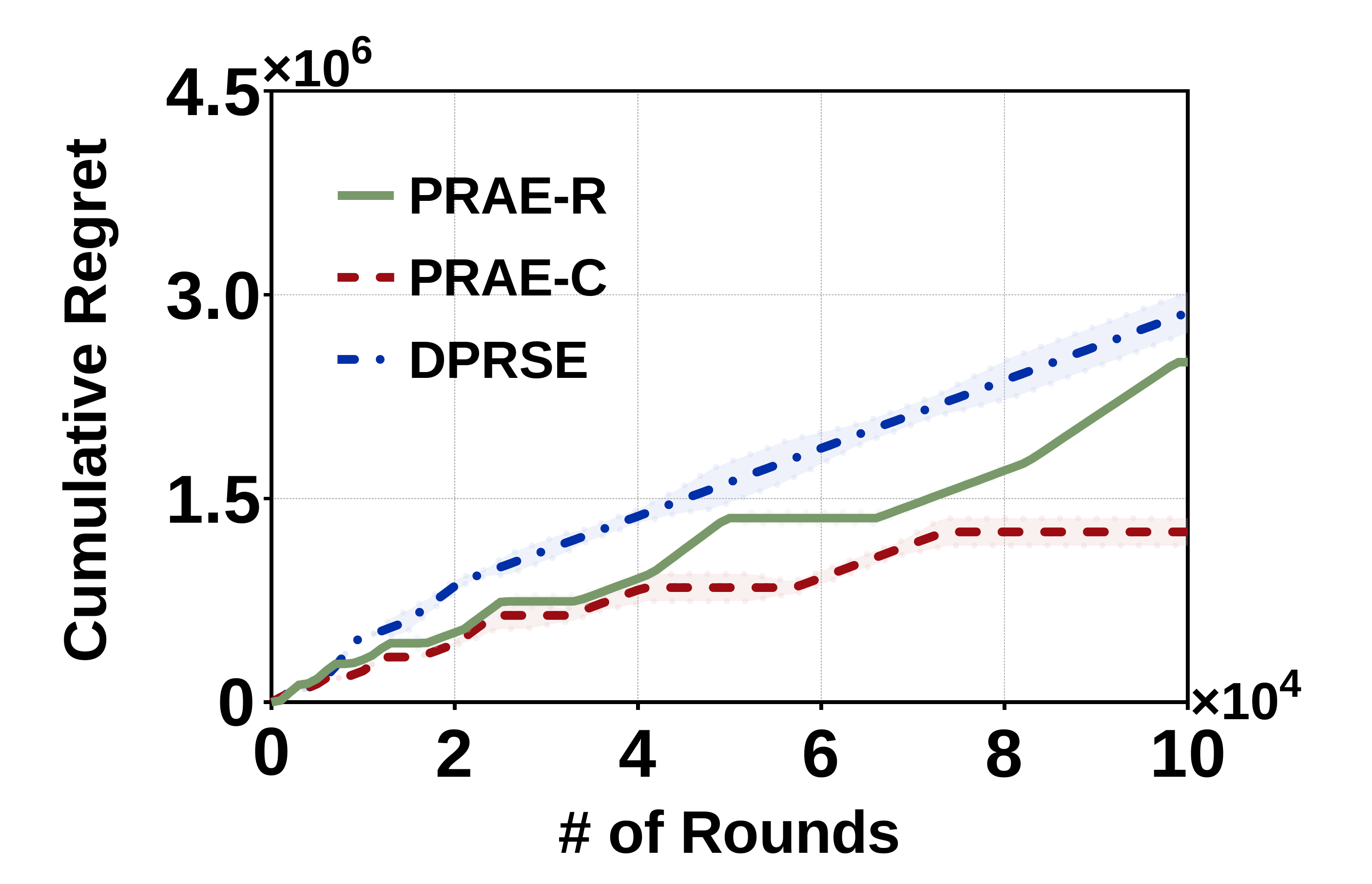}}

    \subfigure[Pareto, $\alpha=2\%$, $\epsilon=0.5$]{
		\label{fig-t-5-1.0}
		\includegraphics[width=0.31\linewidth]{./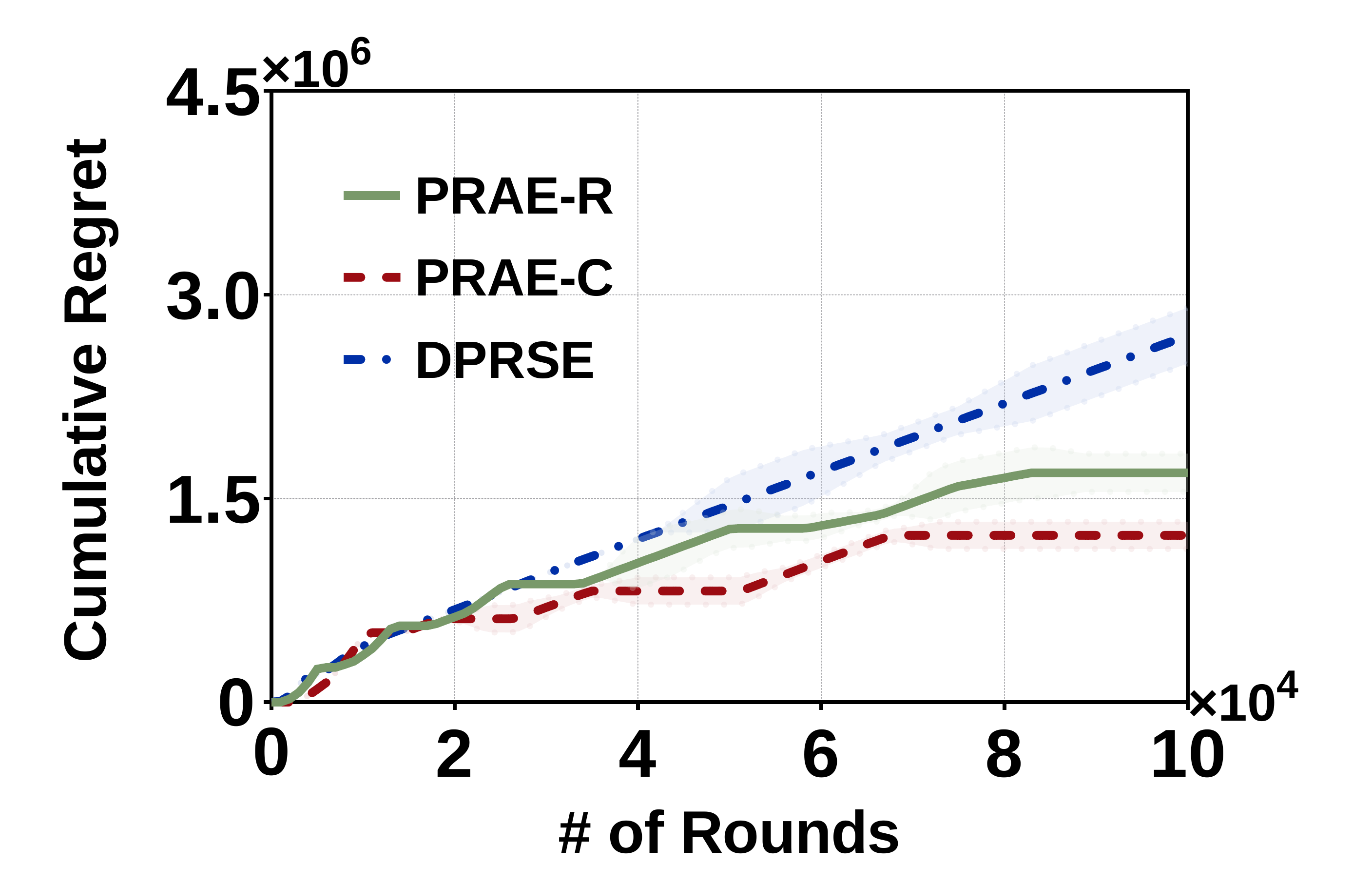}}
    \subfigure[Pareto, $\alpha=5\%$, $\epsilon=0.5$]{
		\label{fig-t-10-1.0}
		\includegraphics[width=0.31\linewidth]{./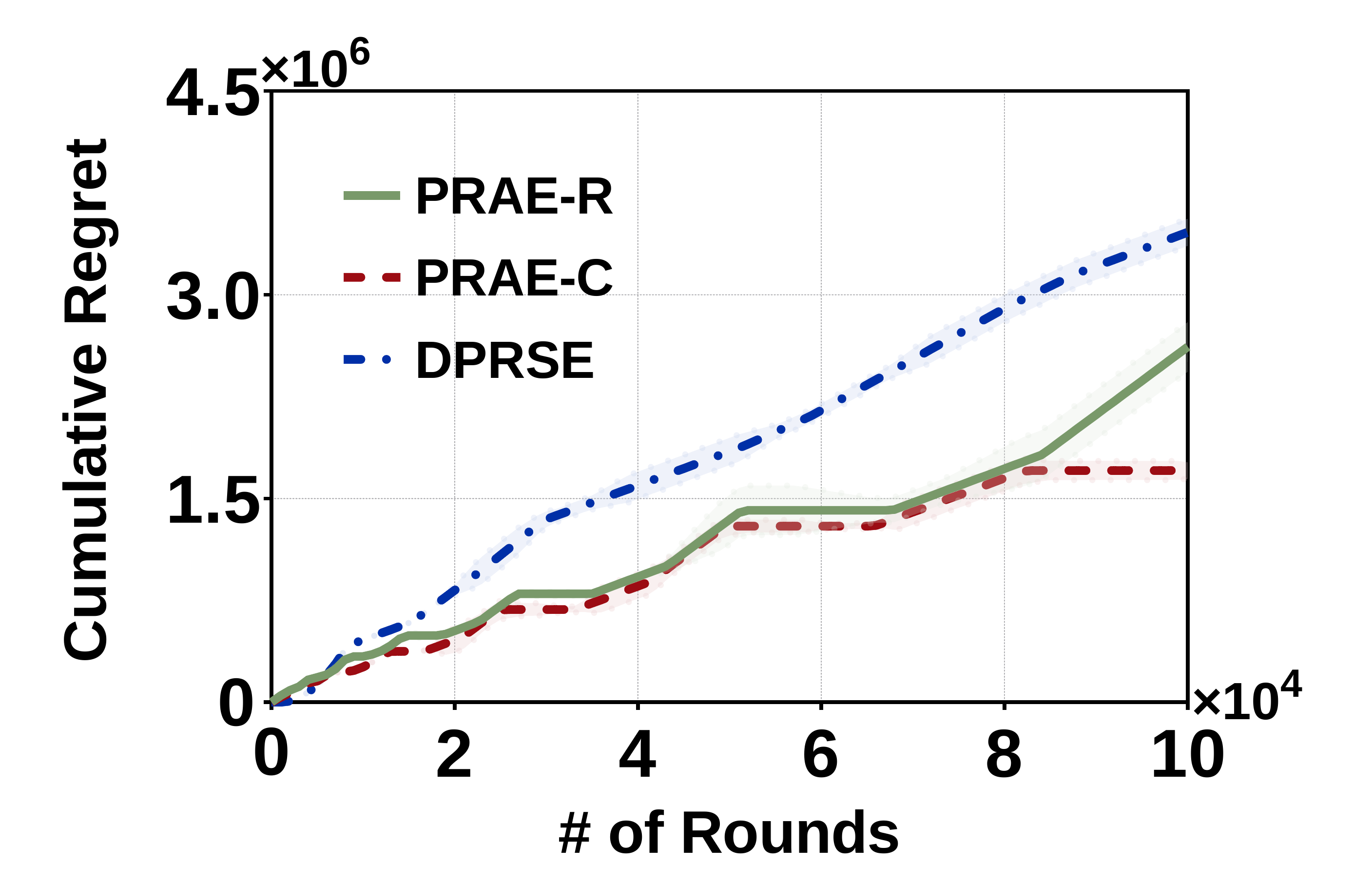}}
	\subfigure[Pareto, $\alpha=10\%$, $\epsilon=0.5$]{
		\label{fig-t-20-1.0}
		\includegraphics[width=0.31\linewidth]{./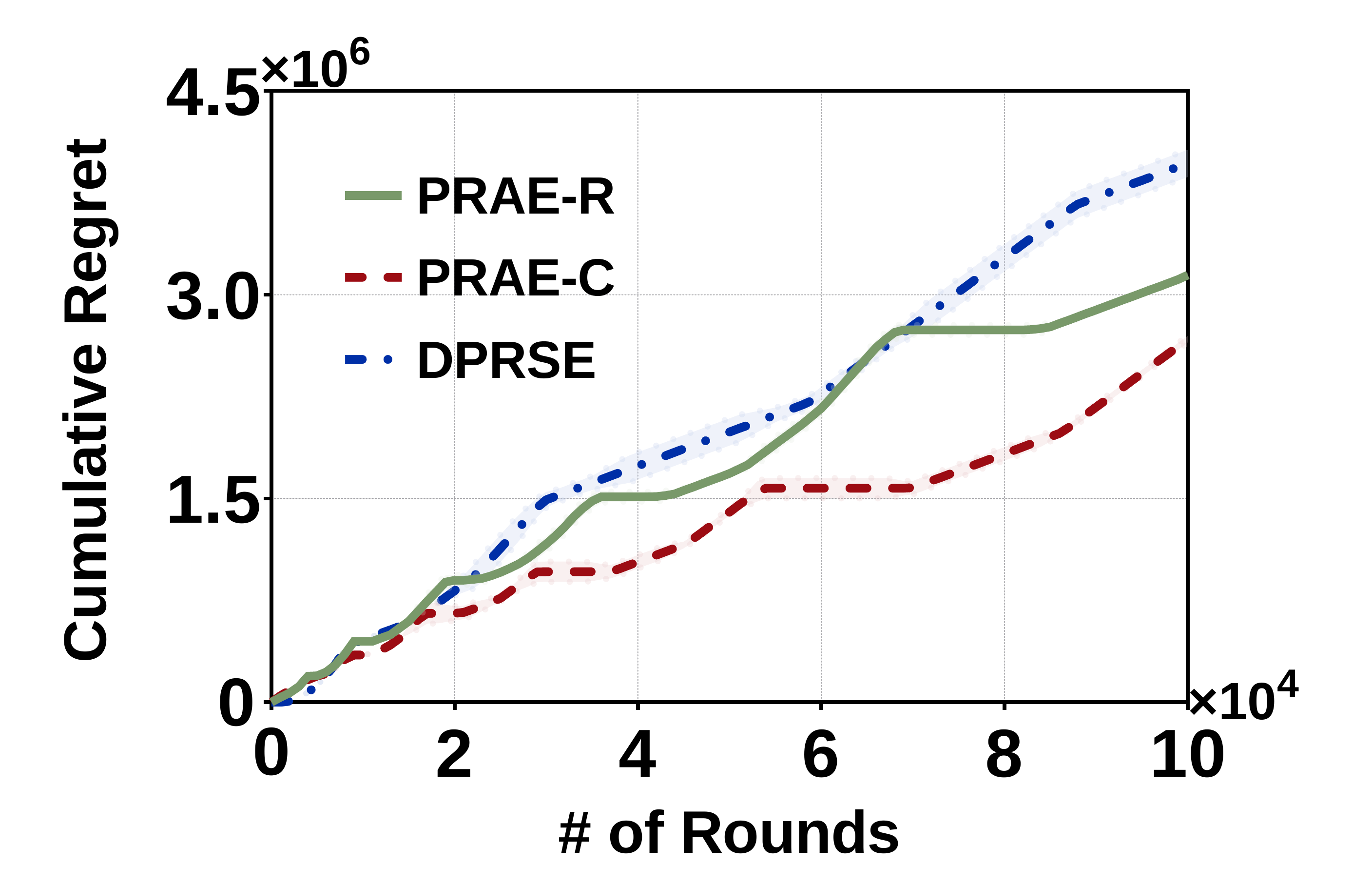}}
	\caption{Comparison of cumulative regret for PRAE-R, PRAE-C and DPRSE. \textbf{Top:} Rewards generated from Student's $t$-distribution with fixed Huber parameter $\alpha=10\%$ and varying privacy budget $\epsilon\in\{0.2, 0.5, 1.0\}$. \textbf{Bottom:} Rewards generated from Pareto distribution with fixed privacy budget $\epsilon=1.0$ and varying Huber parameter $\alpha\in\{2\%, 5\%, 10\%\}$.}
	\label{Fig-exp}
\end{figure*}

In this section, we will empirically evaluate the practical performance of our private and robust arm elimination algorithms, which are abbreviated as PRAE-R and PRAE-C when the sub-routine \texttt{PRM} is Algorithm \ref{Alg:prm-raw} for the finite raw moment case and Algorithm \ref{alg:prm-central} for the finite central moment case, respectively. We compare them with the DPRSE algorithm in \citet{tao2021optimal}, which achieves the optimal regret bound for DP heavy-tailed MAB.

\subsection{Experiment Setup}
We consider the case where there are $K=5$ arms, and the mean of each arm is within the range of $[0,100]$. Specifically, we let the arm means descend linearly, i.e., for each arm $a\in[k]$, let $\mu_a= 100-\frac{100(a-1)}{K-1}$. We consider the following two types of heavy-tailed distributions for the true inlier reward  generation:

{\it{\Large -} Pareto distribution:} For each pull of arm $a$, we generate a reward 
that is sampled from the distribution $\mu_a+\eta-2.5$, where $\eta\sim\frac{sx_m^s}{x^{s+1}}\mathbbm{1}_{\{x\ge x_m\}}$ for $x\in\mathbb{R}$ and we set the shape parameter $s=2.5$ and the scale parameter $x_m=1.5$.

{\it{\Large -} Student’s $t$-distribution:} For each pull of arm $a$ we generate a reward that is sampled from the distribution $\mu_a+\eta$, where $\eta\sim\frac{\Gamma(\frac{\nu+1}{2})}{\sqrt{\nu\pi}\Gamma(\frac{\nu}{2})}\Big(1+\frac{x^2}{\nu}\Big)^{-\frac{\nu+1}{2}}$. Here we set the degree of freedom $\nu=2.5$.
For both cases, the stochastic rewards have finite second central moment of $5$. The main difference between the above two types of distribution is that the Student’s $t$-distribution is symmetric while the Pareto distribution is one-sided. To generate contaminated rewards, we consider Gaussian distribution with zero mean and standard deviation of $50$.

For Student's $t$-distribution case, we will fix the Huber parameter $\alpha=5\%$ with different privacy budget $\epsilon\in \{0.2, 0.5, 1\}$, while for the Parote distribution case we fix  $\epsilon=0.5$ and vary the corruption level $\alpha=\{2\%, 5\%, 10\%\}$. For each experiment, we repeat 30 times and set the total number of round $T=10^5$ (thus we set $\delta=10^{-5}$). We will report the average of cumulative regrets $\mathcal{R}_T$  with respect to the number of rounds. 

\vspace{-0.1in}
\subsection{Results and Discussions}

We present our results in Figure~\ref{Fig-exp}. 
For all cases, PRAE-R and PRAE-C achieve smaller cumulative regret and thus better expected performance than DPRSE. More specifically, 
from Fig.~\ref{fig-t-5-1.0}-\ref{fig-t-20-1.0}, we can see that, when the Huber parameter $\alpha$  increases, DPRSE diverges to a larger regret, while PRAE-R and PRAE-C are only limitedly affected. This is because DPRSE adopts  more aggressive truncation thresholds which incorporate more outliers. In contrast, the truncation thresholds in PRAE-R and PRAE-C are delicately designed and thus provide robustness against contaminated rewards. In addition, by observing the error bars, we find all three methods are stable under both symmetric and one-sided types of heavy-tailed distribution (see, e.g., Fig,~\ref{fig-p-10-1.0} and~\ref{fig-t-10-1.0}).  Thus, we can conclude that our approaches PRAE-R and PRAE-C outperform the baseline method DPRSE.

For both PRAE-R and PRAE-C, we can also observe that when $\epsilon$ is smaller or $\alpha$ is larger, the regret will increase  for both types of distributions, which is due to the fact that the regret bound is proportional to $1/\epsilon$ and $\alpha$ when $T$ is large enough. Moreover, compared with PRAE-R, we can see the regret of PRAE-C is lower for all experiments. This is due to the fact that the \texttt{PRM} subroutine for PRAE-C leverages the prior information (i.e., range $D$) of  mean for each arm, which could provide finer performance bound for Algorithm \ref{Alg:meta}. In total, all above results corroborate our theories. 

\vspace{-0.1in}
\section{Conclusion}
In this paper, we investigated private and robust multi-armed bandits with heavy-tailed rewards under Huber's contamination model as well as differential privacy constraints.  We proposed a meta-algorithm that builds on a private and robust mean estimation sub-routine \texttt{PRM}. For two different heavy-tailed settings, we
provided specific schemes of \texttt{PRM}, both of which only rely on the truncation and the Laplace mechanism. Moreover, we also established regret upper bounds for these algorithms, which nearly match our derived minimax lower bound. We also conducted experiments to support our theoretical analysis. 

\section{Acknowledgments}
 YW and DW are supported in part by  BAS/1/1689-01-01, URF/1/4663-01-01, FCC/1/1976-49-01 of King Abdullah University of Science and Technology. XZ is supported in part by NSF CNS-2153220. We thank Mengchu Li for the insightful discussions  and for pointing out the first arxiv version of~\cite{li2022robustness}.



\printbibliography


\newpage
\appendix
\onecolumn
\section{Useful Lemmas}

\begin{lemma}[Post-Processing \cite{dwork2014algorithmic}] 
\label{PostPro}
 Let $\mathcal{M}: \mathcal{X}\rightarrow \mathcal{Y}$ be a randomized algorithm that is $(\epsilon,\delta)$-differentially private. Let $f: \mathcal{Y} \rightarrow \mathcal{Z}$ be an arbitrary randomize mapping. Then $f \circ \mathcal{M}: \mathcal{X}\rightarrow \mathcal{Z}$ is $(\epsilon,\delta)$-differentially private.
 \end{lemma}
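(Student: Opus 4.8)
The plan is to reduce the claim to the two constituent pieces of the $(\epsilon,\delta)$-DP guarantee and to exploit the fact that $f$ acts only on the \emph{output} of $\mathcal{M}$, carrying no information about the underlying database. I would first settle the case where $f$ is deterministic, and then lift the argument to an arbitrary randomized mapping by conditioning on the internal coins of $f$.

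For the deterministic case, fix two neighboring inputs $D \sim D'$ and an arbitrary measurable event $E \subseteq \mathcal{Z}$. The key observation is that the event $\{f(\mathcal{M}(D)) \in E\}$ coincides with $\{\mathcal{M}(D) \in f^{-1}(E)\}$, where $f^{-1}(E) = \{y \in \mathcal{Y} : f(y) \in E\}$ is itself a measurable subset of $\mathcal{Y}$. Writing $T := f^{-1}(E)$ and applying the $(\epsilon,\delta)$-DP guarantee of $\mathcal{M}$ directly to the event $T$ yields
\[
\mathbb{P}[f(\mathcal{M}(D)) \in E] = \mathbb{P}[\mathcal{M}(D) \in T] \le e^{\epsilon}\,\mathbb{P}[\mathcal{M}(D') \in T] + \delta = e^{\epsilon}\,\mathbb{P}[f(\mathcal{M}(D')) \in E] + \delta,
\]
which is precisely the desired guarantee for $f \circ \mathcal{M}$.

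For the randomized case, I would represent $f$ by a family of deterministic maps $\{f_\omega\}$ indexed by an auxiliary seed $\omega$ drawn from a distribution $\mu_\omega$ independent of everything else, so that $f(y)$ is distributed as $f_\omega(y)$. Conditioning on $\omega$, for each fixed $\omega$ the deterministic bound above gives $\mathbb{P}[f_\omega(\mathcal{M}(D)) \in E] \le e^{\epsilon}\,\mathbb{P}[f_\omega(\mathcal{M}(D')) \in E] + \delta$. Since $\omega$ is independent of the randomness of $\mathcal{M}$, integrating both sides over $\omega \sim \mu_\omega$ preserves the inequality by monotonicity of the integral, establishing the claim for the full randomized $f$.

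The genuinely delicate point — and the only real obstacle — is the measure-theoretic justification underlying the randomized case: namely that any randomized mapping $\mathcal{Y} \to \mathcal{Z}$ admits a representation as a mixture of deterministic measurable maps driven by an independent seed, and that the relevant preimages stay measurable. In the discrete (finite or countable) setting this is immediate, but in full generality one must either invoke a standard randomization lemma or argue directly on the joint law of $(\mathcal{M}(D), \omega)$ via Fubini--Tonelli. Everything else is routine bookkeeping.
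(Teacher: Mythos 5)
Your proof is correct and is essentially the canonical argument: the paper states this lemma purely as a citation to Dwork and Roth (2014) without giving its own proof, and your two-step argument (deterministic case via preimages $T = f^{-1}(E)$, then the randomized case by averaging over an independent seed) is exactly the proof in that reference. The measure-theoretic caveat you flag is legitimate but standard, and in the setting this paper actually needs --- outputs in the discrete space $[K]^T$ --- it is vacuous.
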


 \begin{lemma}[Composition Theorem \cite{dwork2014algorithmic}]
 \label{compositionThm}
 Let $\mathcal{M}_1,\mathcal{M}_2,\dots,\mathcal{M}_h$ be a sequence of randomized algorithms, where $\mathcal{M}_1: \mathcal{X}^n \rightarrow \mathcal{Y}_1$, $\mathcal{M}_2: \mathcal{Y}_1 \times \mathcal{X}^n \rightarrow \mathcal{Y}_2,\dots$, $\mathcal{M}_h: \mathcal{Y}_1 \times \mathcal{Y}_2 \times \mathcal{Y}_{h-1} \times \mathcal{X}^n \rightarrow \mathcal{Y}_h$. Suppose for every $i \in [h]$ and $y_1 \in \mathcal{Y}_1,y_2 \in \mathcal{Y}_2,\dots,y_h \in \mathcal{Y}_h$, we have $\mathcal{M}_i(y_1,\dots,y_{i-1},\cdot) : \mathcal{X}^n \rightarrow \mathcal{Y}_i$ is $\epsilon_i$-DP. Then the algorithm $\mathcal{M}: \mathcal{X}^n \rightarrow \mathcal{Y}_1 \times  \mathcal{Y}_2 \times  \dots \mathcal{Y}_h$ that runs the algorithm $\mathcal{M}_i$ sequentially is $\epsilon$-DP for $\epsilon=\sum_{i=1}^h \epsilon_i$.
\end{lemma}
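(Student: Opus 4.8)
The plan is to prove the guarantee by a direct pointwise argument on the joint output distribution, exploiting the fact that under the adaptive composition structure the density of the full output sequence factorizes into a product of per-mechanism conditional densities. Fix two neighboring inputs $D, D' \in \mathcal{X}^n$ (differing in a single element) and an arbitrary output event $E \subseteq \mathcal{Y}_1 \times \cdots \times \mathcal{Y}_h$. Writing $Y = (Y_1, \ldots, Y_h)$ for the joint output of $\mathcal{M}$, the goal is to show $\mathbb{P}[\mathcal{M}(D) \in E] \le e^{\sum_i \epsilon_i}\, \mathbb{P}[\mathcal{M}(D') \in E]$.

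First I would express the probability of observing a particular output sequence. Because $\mathcal{M}_i$ receives the earlier outputs $y_1, \ldots, y_{i-1}$ together with the dataset, the density (or probability mass) of the full sequence $(y_1, \ldots, y_h)$ under input $D$ factorizes as
\[
p_D(y_1, \ldots, y_h) = \prod_{i=1}^{h} q_i\big(y_i \mid y_1, \ldots, y_{i-1}; D\big),
\]
where $q_i(\cdot \mid y_1, \ldots, y_{i-1}; D)$ denotes the output density of $\mathcal{M}_i(y_1, \ldots, y_{i-1}, \cdot)$ evaluated at $D$. The crucial observation is that the conditioning prefix $(y_1, \ldots, y_{i-1})$ is held fixed inside the $i$-th factor; the only dependence on the dataset in that factor is through the final argument, so the hypothesis that $\mathcal{M}_i(y_1, \ldots, y_{i-1}, \cdot)$ is $\epsilon_i$-DP applies directly.

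Next I would apply the $\epsilon_i$-DP guarantee factor by factor. For each $i$ and each fixed prefix, $\epsilon_i$-DP gives
\[
q_i\big(y_i \mid y_1, \ldots, y_{i-1}; D\big) \le e^{\epsilon_i}\, q_i\big(y_i \mid y_1, \ldots, y_{i-1}; D'\big).
\]
Multiplying these $h$ inequalities and using the factorization yields the pointwise bound $p_D(y_1, \ldots, y_h) \le e^{\sum_{i=1}^h \epsilon_i}\, p_{D'}(y_1, \ldots, y_h)$. Integrating (or summing) this over all $(y_1, \ldots, y_h) \in E$ gives $\mathbb{P}[\mathcal{M}(D) \in E] \le e^{\epsilon}\, \mathbb{P}[\mathcal{M}(D') \in E]$ with $\epsilon = \sum_i \epsilon_i$, which is the claim. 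An equivalent route is induction on $h$: the base case $h=1$ is exactly the hypothesis, and the inductive step conditions on the realized value of $Y_1$, applies the $\epsilon_1$-DP guarantee of $\mathcal{M}_1$ and the $(\sum_{i\ge 2}\epsilon_i)$-DP guarantee of the remaining composed chain, then recombines.

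The main obstacle is not the algebra but ensuring the density-factorization step is measure-theoretically sound, since the mechanisms need not admit densities with respect to a common base measure. To handle this cleanly I would either restrict to the standard setting where a dominating measure exists (e.g. Laplace-mechanism outputs, which is all that is needed in this paper) and work with Radon--Nikodym derivatives, or phrase the argument in terms of the indistinguishability inequality applied to events rather than densities, peeling off one mechanism at a time. The key subtlety to flag is that the $\epsilon_i$-DP hypothesis must hold \emph{uniformly} over all prefixes $y_1, \ldots, y_{i-1}$, which is precisely what the quantifier ``for every $y_1 \in \mathcal{Y}_1, \ldots, y_h \in \mathcal{Y}_h$'' in the statement supplies.
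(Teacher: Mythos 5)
The paper does not prove this lemma at all; it is imported verbatim from \cite{dwork2014algorithmic}, so there is no internal proof to compare against. Your argument is the standard and correct proof of adaptive composition for pure DP --- the conditional-density factorization, the prefix-uniform application of each $\epsilon_i$-DP bound, and integration over the event --- and you correctly identify the one genuine subtlety (existence of a dominating measure), which your inductive, event-based variant resolves in the usual way.
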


\begin{lemma}[Parallel Composition \cite{near2021programming}]
\label{lem:parallel}
    Suppose there are $n$ $\epsilon$-differentially private mechanisms $\{\mathcal{M}_i\}_{i=1}^n$ and $n$ disjoint datasets denoted by $\{D_i\}_{i=1}^n$. Then for the algorithm which applies each $\mathcal{M}_i$ on the corresponding $D_i$, it is $\epsilon$-DP.
\end{lemma}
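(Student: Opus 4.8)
The plan is to verify directly from the definition of $\epsilon$-DP that the composed mechanism $\mathcal{M}(D) := (\mathcal{M}_1(D_1),\ldots,\mathcal{M}_n(D_n))$ satisfies the privacy inequality on every pair of neighboring inputs, where the partition $(D_1,\ldots,D_n)$ of the full dataset $D$ is fixed. The crux is the interplay between the disjointness of the blocks and the neighboring relation: if $D$ and $D'$ differ in a single element, then because the blocks are disjoint, that element lies in \emph{exactly one} block — say block $i_0$ — so $D_i = D_i'$ for every $i \neq i_0$ while $D_{i_0}$ and $D_{i_0}'$ are themselves neighboring. This single observation is what separates parallel composition from the sequential composition of Lemma~\ref{compositionThm}.

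First I would fix neighboring $D \sim D'$, identify the unique affected block $i_0$ as above, and set up notation: let $\mu_i$ (resp.\ $\mu_i'$) denote the output distribution of $\mathcal{M}_i(D_i)$ (resp.\ $\mathcal{M}_i(D_i')$). Since the mechanisms draw independent randomness, the joint output obeys the product law $\mu_1 \otimes \cdots \otimes \mu_n$. By the choice of $i_0$ we have $\mu_i = \mu_i'$ for all $i \neq i_0$, and by the $\epsilon$-DP guarantee of the single mechanism $\mathcal{M}_{i_0}$ on the neighboring pair $D_{i_0} \sim D_{i_0}'$ we have $\mu_{i_0}(A) \le e^\epsilon \mu_{i_0}'(A)$ for every measurable $A \subseteq \mathcal{Y}_{i_0}$.

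Next I would establish the bound for an arbitrary event $E \subseteq \mathcal{Y}_1 \times \cdots \times \mathcal{Y}_n$ (not merely rectangles) via a slicing argument: integrate out every coordinate except $i_0$, so that $\mathbb{P}[\mathcal{M}(D) \in E] = \int \mu_{i_0}(E_{y_{-i_0}}) \prod_{i \neq i_0} d\mu_i(y_i)$, where $E_{y_{-i_0}}$ denotes the $i_0$-section of $E$ obtained by fixing the other coordinates. Applying the pointwise inequality $\mu_{i_0}(E_{y_{-i_0}}) \le e^\epsilon \mu_{i_0}'(E_{y_{-i_0}})$ inside the integral and using $\mu_i = \mu_i'$ for all remaining factors collapses the expression to $e^\epsilon \, \mathbb{P}[\mathcal{M}(D') \in E]$, which is exactly the desired inequality; a symmetric argument (or exchanging the roles of $D$ and $D'$) handles the reverse direction, completing the proof.

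The step I expect to require the most care is the passage from the per-mechanism DP guarantee — a statement about events in a single coordinate — to the guarantee for arbitrary joint events $E$ on the product space. The naive temptation is to verify the bound only on product events $A_1 \times \cdots \times A_n$; the Fubini-type section argument is precisely what upgrades this to all measurable $E$, and it relies on the independence of the mechanisms' randomness. The disjointness hypothesis is doing the essential work throughout: it guarantees that only a single factor of the product law changes between $D$ and $D'$, which is why the privacy loss stays at $\epsilon$ rather than accumulating to $n\epsilon$.
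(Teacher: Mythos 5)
Your proof is correct, but there is nothing in the paper to compare it against: Lemma~\ref{lem:parallel} is stated as a citation to \cite{near2021programming} and the paper never proves it, using it only as a black box in the privacy arguments for Theorems~\ref{thm:central-clean} and~\ref{thm:central-cont}. Your argument is the standard and complete one — identify the unique block $i_0$ containing the changed record (this is where disjointness of the fixed partition enters), write the joint output law as the product $\mu_1 \otimes \cdots \otimes \mu_n$, and upgrade the single-coordinate DP guarantee to arbitrary measurable events $E$ via the Fubini section bound $\mathbb{P}[\mathcal{M}(D)\in E] = \int \mu_{i_0}(E_{y_{-i_0}})\prod_{i\neq i_0} d\mu_i(y_i) \le e^{\epsilon}\,\mathbb{P}[\mathcal{M}(D')\in E]$. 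You are right that the slicing step, not the rectangle case, is the actual content. Two small caveats worth keeping in mind: first, your argument needs the blocks to be determined by a data-independent rule (e.g., by index), since a value-dependent partition could let a single replacement move a record across two blocks; this holds in the paper's application, where blocks are time batches. Second, your proof treats the $\mathcal{M}_i$ as fixed, non-adaptive mechanisms with independent randomness, which matches the lemma as stated; the paper's algorithms actually choose later mechanisms based on earlier outputs, so the paper implicitly combines this lemma with sequential composition (Lemma~\ref{compositionThm}) and post-processing (Lemma~\ref{PostPro}), but that gap belongs to the paper's usage, not to your proof of the stated lemma.
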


\begin{lemma}[Markov's inequality]
\label{lem:Markov}
If $Y \in \mathbb{R}$ is a random variable and $a>0$, we have
    $$\mathbb{P}\left(|Y| \geq a\right) \leq \frac{\mathbb{E}\left(|Y|^k\right)}{a^k}$$
\end{lemma}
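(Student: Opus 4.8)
The plan is to reduce the claim to an elementary pointwise inequality between an indicator and a power, and then integrate. Since $k \ge 2 > 0$ and the map $t \mapsto t^k$ is strictly increasing on $[0,\infty)$, the events $\{|Y| \ge a\}$ and $\{|Y|^k \ge a^k\}$ coincide for every $a > 0$. This lets me transfer the tail event of $|Y|$ into a tail event of the nonnegative random variable $|Y|^k$, for which a one-line bound is available; in effect the statement is just the ordinary Markov inequality applied to $|Y|^k$.

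The key step is the pointwise domination $a^k \, \mathbbm{1}_{\{|Y| \ge a\}} \le |Y|^k$, which I would verify by cases on each realization $\omega$: when $|Y(\omega)| \ge a$ the indicator equals $1$ and the right-hand side is at least $a^k$, while when $|Y(\omega)| < a$ the left-hand side is $0$ and the right-hand side is nonnegative. Taking expectations of both sides and using monotonicity together with linearity of expectation gives $a^k \, \mathbb{P}(|Y| \ge a) = \mathbb{E}\!\left[a^k \, \mathbbm{1}_{\{|Y| \ge a\}}\right] \le \mathbb{E}\!\left[|Y|^k\right]$. Dividing through by $a^k > 0$ then yields the stated inequality $\mathbb{P}(|Y| \ge a) \le \mathbb{E}[|Y|^k]/a^k$.

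The main obstacle is essentially nonexistent, as this is a textbook inequality; the only care needed is in the hypotheses. I must allow $\mathbb{E}[|Y|^k]$ to take the value $+\infty$, in which case the bound holds vacuously, and I must use $a > 0$ so that the final division by $a^k$ is legitimate. Both are guaranteed by the statement, and no measure-theoretic subtleties arise beyond the monotonicity of the integral for nonnegative integrands.
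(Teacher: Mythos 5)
Your proof is correct: the pointwise domination $a^k \, \mathbbm{1}_{\{|Y| \ge a\}} \le |Y|^k$, followed by taking expectations and dividing by $a^k > 0$, is the canonical argument, and your handling of the edge cases (infinite $k$-th moment, positivity of $a$) is sound. The paper states this lemma without proof as a standard auxiliary fact, so there is no alternative route to compare against; your argument is exactly the textbook one and fills the (intentional) gap.
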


\begin{lemma}[Chebyshev’s inequality]
\label{lem:ChebIneq}
    For a real-valued random variable $Y \in \mathbb{R}$, $a>0$ and $k \in \mathbb{N}$, we have
    $$
\mathbb{P}(|Y-\mathbb{E} Y| \geq a)=\mathbb{P}\left(|Y-\mathbb{E} Y|^k \geq a^k\right) \leq \frac{\mathbb{E}\left(|Y-\mathbb{E} Y|^k\right)}{a^k}
$$
\end{lemma}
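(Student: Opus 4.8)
The plan is to derive this as an immediate consequence of Markov's inequality (Lemma~\ref{lem:Markov}) applied to the centered random variable $Y - \mathbb{E}Y$. First I would fix $a > 0$ and consider the nonnegative quantity $Z := |Y - \mathbb{E}Y|$. Since $k \in \mathbb{N}$ (indeed $k \ge 2$ throughout the paper), the map $x \mapsto x^k$ is strictly increasing on $[0,\infty)$, so for every realization one has $Z \ge a$ if and only if $Z^k \ge a^k$. This justifies the first equality in the statement, namely that the events $\{|Y - \mathbb{E}Y| \ge a\}$ and $\{|Y - \mathbb{E}Y|^k \ge a^k\}$ coincide, hence so do their probabilities.

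Next I would invoke Markov's inequality directly. Applying Lemma~\ref{lem:Markov} with the random variable $Y - \mathbb{E}Y$ playing the role of $Y$ there gives $\mathbb{P}(|Y - \mathbb{E}Y| \ge a) \le \mathbb{E}(|Y - \mathbb{E}Y|^k)/a^k$. Chaining this with the event equality established in the first step yields exactly the claimed chain $\mathbb{P}(|Y - \mathbb{E}Y| \ge a) = \mathbb{P}(|Y - \mathbb{E}Y|^k \ge a^k) \le \mathbb{E}(|Y - \mathbb{E}Y|^k)/a^k$.

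There is essentially no obstacle here: the result is a one-line corollary of the preceding lemma. The only point requiring a moment of care is the monotonicity step used to equate the two events, which relies on $a > 0$ together with $k \ge 1$ so that raising nonnegative reals to the $k$-th power preserves their ordering; everything else is a verbatim substitution into Lemma~\ref{lem:Markov}.
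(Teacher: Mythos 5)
Your proof is correct: the paper states this lemma as a standard fact without proof, and your derivation is exactly the canonical one-line argument it implicitly relies on — note in particular that the paper's Lemma~\ref{lem:Markov} is already stated in $k$-th moment form, so substituting $Y-\mathbb{E}Y$ for $Y$ there gives the inequality verbatim, and your monotonicity observation (valid since $a>0$ and $k\ge 1$) justifies the equality of the two events. Nothing is missing.
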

\begin{lemma}[Tail Bound of Laplacian Vairable \cite{dwork2006calibrating}]\label{lem:TailLap}
    If $X\sim{{\rm Lap}(b)}$, then
    \begin{equation*}
        \mathbb{P}(|X|\ge t\cdot b)=\exp(-t).
    \end{equation*}
\end{lemma}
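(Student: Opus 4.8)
The plan is to compute the tail probability directly from the Laplace density given in the footnote of \Cref{def:3}, namely that $X \sim \mathrm{Lap}(b)$ has density $f(x) = \frac{1}{2b}\exp(-|x|/b)$ for $x \in \mathbb{R}$. First I would write the target probability as an integral over the tail region $\{|x| \ge tb\}$, i.e.
\begin{equation*}
  \mathbb{P}(|X| \ge tb) = \int_{\{|x| \ge tb\}} \frac{1}{2b}\exp\!\left(-\frac{|x|}{b}\right) dx.
\end{equation*}

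Next, I would exploit the fact that the density is even (symmetric about the origin), so the two tails $\{x \ge tb\}$ and $\{x \le -tb\}$ contribute equally. This lets me reduce the computation to twice the right tail:
\begin{equation*}
  \mathbb{P}(|X| \ge tb) = 2 \int_{tb}^{\infty} \frac{1}{2b}\exp\!\left(-\frac{x}{b}\right) dx = \frac{1}{b}\int_{tb}^{\infty} \exp\!\left(-\frac{x}{b}\right) dx.
\end{equation*}
I would then evaluate this single integral by the substitution $u = x/b$, which turns it into $\int_{t}^{\infty} e^{-u}\, du = e^{-t}$, giving the claimed identity $\mathbb{P}(|X| \ge tb) = \exp(-t)$.

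Since this is a one-line computation once symmetry and the substitution are in place, there is no genuine obstacle; the only point requiring mild care is to confirm that the normalization constant $\tfrac{1}{2b}$ in the density integrates to $1$ over $\mathbb{R}$ (so that the density is legitimate) and that the even symmetry justifies collapsing the two-sided tail into twice the one-sided tail. Both are immediate from the form of $\exp(-|x|/b)$, so the proof reduces to the elementary evaluation above.
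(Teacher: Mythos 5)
Your computation is correct and is the standard argument: the paper itself does not prove this lemma but simply cites \cite{dwork2006calibrating}, and the proof there (and in any textbook) is exactly your symmetry-plus-substitution calculation. The only trivial caveat is that the stated equality holds for $t \ge 0$ (for $t < 0$ the probability is $1$), which is the regime in which the lemma is used throughout the paper.
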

\begin{lemma}[Hoeffding’s inequality]
\label{lem:Hoef}
Let $Z_1,\dots,Z_n$ be independent bounded
random variables with $Z_i \in [a, b]$ for all $i$, where $-\infty<a <b<\infty$. Then
    $$
\mathbb{P}\left(\left|\frac{1}{n} \sum_{i=1}^n\left(Z_i-\mathbb{E}\left[Z_i\right]\right)\right| \geq t\right) \leq 2\exp \left(-\frac{2 n t^2}{(b-a)^2}\right)
$$
\end{lemma}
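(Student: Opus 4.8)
The plan is to prove this via the Chernoff (exponential Markov) method. First I would reduce to a one-sided tail bound: it suffices to show $\mathbb{P}\!\left(\frac{1}{n}\sum_{i=1}^n (Z_i - \mathbb{E}[Z_i]) \geq t\right) \leq \exp(-2nt^2/(b-a)^2)$, since applying the same argument to the variables $-Z_i$ (which lie in $[-b,-a]$, an interval of the same length $b-a$, and whose centered versions remain mean-zero) controls the lower tail, and a union bound over the two tail events yields the factor of $2$. Writing $S = \sum_{i=1}^n (Z_i - \mathbb{E}[Z_i])$, for any $s > 0$ I would apply Markov's inequality (Lemma~\ref{lem:Markov}) to the nonnegative random variable $e^{sS}$, giving $\mathbb{P}(S \geq nt) = \mathbb{P}(e^{sS} \geq e^{snt}) \leq e^{-snt}\,\mathbb{E}[e^{sS}]$. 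By independence of the $Z_i$, the moment generating function factorizes as $\mathbb{E}[e^{sS}] = \prod_{i=1}^n \mathbb{E}\!\left[e^{s(Z_i - \mathbb{E}[Z_i])}\right]$.

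The key technical step, and the main obstacle, is the per-coordinate moment-generating-function bound known as Hoeffding's lemma: for a mean-zero random variable $X \in [a,b]$ and any $s \in \mathbb{R}$, one has $\mathbb{E}[e^{sX}] \leq \exp(s^2(b-a)^2/8)$. I would establish this by convexity of $x \mapsto e^{sx}$, which gives, for every $x \in [a,b]$, the bound $e^{sx} \leq \frac{b-x}{b-a}e^{sa} + \frac{x-a}{b-a}e^{sb}$. Taking expectations and using $\mathbb{E}[X] = 0$ eliminates the linear term, and after the substitution $u = s(b-a)$ and $p = -a/(b-a)$ the resulting upper bound can be written as $e^{L(u)}$ with $L(u) = -pu + \log(1-p+pe^u)$. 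A direct computation gives $L(0)=0$, $L'(0)=0$, and $L''(u) = \frac{p(1-p)e^u}{(1-p+pe^u)^2} = q(1-q) \leq \frac14$, where $q = \frac{pe^u}{1-p+pe^u} \in [0,1]$ and the final bound uses $q(1-q)\le 1/4$. Taylor's theorem with remainder then yields $L(u) \leq u^2/8 = s^2(b-a)^2/8$, which is exactly the claim.

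Combining these pieces gives $\mathbb{P}(S \geq nt) \leq \exp\!\left(-snt + \tfrac{n s^2 (b-a)^2}{8}\right)$ for every $s > 0$. The final step is to optimize the exponent over $s$: the quadratic $s \mapsto -snt + \tfrac{n s^2(b-a)^2}{8}$ is minimized at $s^\star = 4t/(b-a)^2 > 0$, and substituting back produces the exponent $-2nt^2/(b-a)^2$. This establishes the one-sided bound; together with the symmetric lower-tail estimate and the union bound described at the outset, the stated two-sided inequality follows. I expect no serious difficulty beyond the convexity argument underlying Hoeffding's lemma and the elementary verification that $L'' \leq 1/4$, which is where essentially all the content of the proof resides.
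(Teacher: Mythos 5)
Your proof is correct: the reduction to a one-sided tail via $-Z_i$, the Chernoff/Markov exponentiation step, the convexity-based proof of Hoeffding's lemma with the bound $L''(u)=q(1-q)\le \tfrac14$, and the optimization $s^\star = 4t/(b-a)^2$ all check out and yield exactly the stated two-sided bound. The paper itself states this lemma without proof, as a classical auxiliary result; your argument is the standard textbook derivation, so there is no discrepancy to reconcile.
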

\begin{lemma}[H\" older’s Inequality]
\label{lem:Holder}
Let $X,Y$ be random variables over $\mathbb{R}$, and let $k > 1$. Then,
    $$
\mathbb{E}[|X Y|] \leq\left(\mathbb{E}\left[|X|^k\right]\right)^{\frac{1}{k}}\left(\mathbb{E}\left[|Y|^{\frac{k}{k-1}}\right]\right)^{\frac{k-1}{k}}
$$
\end{lemma}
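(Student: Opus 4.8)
The plan is to reduce Hölder's inequality to the pointwise Young (weighted AM--GM) inequality and then integrate. Set $p = k$ and $q = \frac{k}{k-1}$; these are conjugate exponents, since $\frac{1}{p} + \frac{1}{q} = \frac{1}{k} + \frac{k-1}{k} = 1$. The entire proof rests on the scalar inequality $ab \le \frac{a^p}{p} + \frac{b^q}{q}$ valid for all reals $a,b \ge 0$. I would establish this first from convexity of the exponential (equivalently, concavity of the logarithm): for $a,b > 0$ write $a = e^{u/p}$ and $b = e^{v/q}$, and apply convexity of $\exp$ with weights $\frac{1}{p}, \frac{1}{q}$ to obtain $e^{u/p + v/q} \le \frac{1}{p}e^{u} + \frac{1}{q}e^{v}$, which is exactly $ab \le \frac{a^p}{p} + \frac{b^q}{q}$; the cases $a = 0$ or $b = 0$ are immediate.

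Next I would dispose of the degenerate cases so that the normalization below is well defined. If $\mathbb{E}[|X|^k] = 0$, then $|X| = 0$ almost surely, so $\mathbb{E}[|XY|] = 0$ and the inequality holds with equality; the case $\mathbb{E}[|Y|^{k/(k-1)}] = 0$ is symmetric. If either of these moments is infinite, the right-hand side is $+\infty$ and there is nothing to prove. Hence I may assume both $A := \left(\mathbb{E}[|X|^k]\right)^{1/k}$ and $B := \left(\mathbb{E}[|Y|^{k/(k-1)}]\right)^{(k-1)/k}$ are finite and strictly positive.

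The main step is normalization followed by integration of the pointwise bound. Define $\widehat{X} = |X|/A$ and $\widehat{Y} = |Y|/B$, so that by construction $\mathbb{E}[\widehat{X}^{p}] = 1$ and $\mathbb{E}[\widehat{Y}^{q}] = 1$. Applying Young's inequality pointwise to the random variables $\widehat{X}$ and $\widehat{Y}$ gives $\widehat{X}\widehat{Y} \le \frac{\widehat{X}^{p}}{p} + \frac{\widehat{Y}^{q}}{q}$ almost surely. Taking expectations and using linearity yields $\mathbb{E}[\widehat{X}\widehat{Y}] \le \frac{1}{p}\mathbb{E}[\widehat{X}^{p}] + \frac{1}{q}\mathbb{E}[\widehat{Y}^{q}] = \frac{1}{p} + \frac{1}{q} = 1$. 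Since $\mathbb{E}[|XY|] = AB\cdot\mathbb{E}[\widehat{X}\widehat{Y}]$, multiplying through by $AB$ gives $\mathbb{E}[|XY|] \le AB$, which is precisely the claimed bound.

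I do not expect any serious obstacle: the only non-mechanical ingredient is Young's inequality, and once that is in hand the remaining steps---handling the degenerate moments, rescaling to unit $L^p$ and $L^q$ norms, and integrating the pointwise estimate---are routine. The one point requiring mild care is the integrability bookkeeping in the degenerate cases, namely justifying that a vanishing $k$-th moment forces $|X| = 0$ almost everywhere, and thereby that both sides vanish.
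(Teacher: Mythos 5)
Your proof is correct: reducing to Young's inequality $ab \le \frac{a^p}{p} + \frac{b^q}{q}$ via convexity of the exponential, disposing of the degenerate cases, and then normalizing to unit moments and integrating the pointwise bound is the standard and complete argument for H\"older's inequality. Note that the paper itself states this lemma without proof, as one of several standard auxiliary facts collected in its appendix, so there is no paper proof to compare against; your argument is exactly what a self-contained treatment would supply.
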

\begin{lemma}[Bernstein's Inequality \cite{vershynin2018high}]\label{lem:Bern}
Let $X_1, \cdots X_n$ be $n$ independent zero-mean random variables. Suppose $|X_i|\leq M$ and $\mathbb{E}[X_i^2] \le s$ for all $i$. Then for any $t>0$, we have 
\begin{equation*}
    \mathbb{P}\left\{\left|\frac{1}{n}\sum_{i=1}^n X_i\right| \geq t\right\}\leq 2\exp\left(-\frac{\frac{1}{2}t^2n}{s+\frac{1}{3}Mt}\right)
\end{equation*}
\end{lemma}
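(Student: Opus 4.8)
The plan is to prove this two-sided tail bound by the classical Chernoff (exponential-moment) method applied to the centered sum $S_n := \sum_{i=1}^n X_i$, noting that $\{|n^{-1}S_n| \ge t\} = \{|S_n| \ge nt\}$. First I would reduce to the upper tail: the hypotheses ($\mathbb{E}[X_i]=0$, $|X_i|\le M$, $\mathbb{E}[X_i^2]\le s$) are invariant under $X_i \mapsto -X_i$, so any bound on $\mathbb{P}(S_n \ge nt)$ transfers verbatim to $\mathbb{P}(-S_n \ge nt)$, and a union bound over these two events accounts for the leading factor of $2$. It therefore suffices to establish $\mathbb{P}(S_n \ge nt) \le \exp\bigl(-\tfrac{t^2 n/2}{s + Mt/3}\bigr)$.

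For the upper tail, fix $\lambda > 0$ and apply Markov's inequality (Lemma~\ref{lem:Markov} with exponent $1$) to the nonnegative variable $e^{\lambda S_n}$, then factorize using independence:
$$\mathbb{P}(S_n \ge nt) \le e^{-\lambda nt}\,\mathbb{E}\bigl[e^{\lambda S_n}\bigr] = e^{-\lambda nt}\prod_{i=1}^n \mathbb{E}\bigl[e^{\lambda X_i}\bigr].$$
The core of the argument is a uniform bound on each moment-generating factor. Expanding the exponential and using $\mathbb{E}[X_i]=0$ to annihilate the linear term, I would bound the higher moments by $|\mathbb{E}[X_i^k]| \le M^{k-2}\,\mathbb{E}[X_i^2] \le M^{k-2}s$ for $k\ge 2$ (this is exactly where boundedness and the variance proxy $s$ enter together). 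Invoking the elementary factorial estimate $k! \ge 2\cdot 3^{\,k-2}$ collapses the remaining series into a geometric sum, giving, for $0 < \lambda < 3/M$,
$$\mathbb{E}\bigl[e^{\lambda X_i}\bigr] \le 1 + \frac{s\lambda^2/2}{1 - M\lambda/3} \le \exp\!\left(\frac{s\lambda^2/2}{1 - M\lambda/3}\right),$$
where the last step uses $1+x\le e^x$.

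Combining the product bound with the prefactor yields $\mathbb{P}(S_n \ge nt) \le \exp\bigl(-\lambda nt + \tfrac{ns\lambda^2/2}{1-M\lambda/3}\bigr)$, and it remains to optimize over $\lambda$. The clean choice $\lambda = t/(s + Mt/3)$ (which lies in $(0,3/M)$ since $Mt/3 < s + Mt/3$) makes $1 - M\lambda/3 = s/(s+Mt/3)$, so both the linear and quadratic contributions reduce to multiples of $t^2/(s+Mt/3)$; a short computation shows the exponent equals exactly $-\tfrac{1}{2}t^2 n/(s + Mt/3)$, matching the target. Doubling for the lower tail completes the proof. I expect the moment-generating-function bound---specifically, assembling the moment estimate, the factorial inequality, and the geometric summation into the clean closed form above---to be the main obstacle; once it is in hand, the optimization in $\lambda$ is purely mechanical.
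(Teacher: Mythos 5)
Your proof is correct, and it is essentially the canonical argument: the paper does not prove this lemma at all (it is quoted directly from the cited reference \cite{vershynin2018high}), and your Chernoff--MGF derivation --- symmetrization for the factor $2$, the moment bound $|\mathbb{E}[X_i^k]|\le M^{k-2}s$, the estimate $k!\ge 2\cdot 3^{k-2}$ yielding the geometric series, and the optimal choice $\lambda = t/(s+Mt/3)$ --- is precisely the standard proof found in that source. Nothing further is needed.
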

\section{Proofs of Section \ref{sec:lower}}

\begin{lemma}[Upper Bound on KL-divergence for Bandits with $\epsilon$-DP \cite{azize2022privacy}]\label{lem:DP_KL}
If $\pi$ is a mechanism satisfying $\epsilon$-DP, then for two instances $\nu_1=(r_a: a \in [K])$ and $\nu_2=(r_a^\prime: a \in [K])$ we have
$$\text{KL}\left(\mathbb{P}_{\pi,\nu_1}^T \|\mathbb{P}_{\pi,\nu_2}^T\right) \le 6\epsilon \mathbb{E}_{\pi,\nu_1}\left[\sum_{t=1}^T \text{TV}(r_{a_t}\|r^\prime_{a_t})\right]$$
where $\text{TV}(r_{a}\|r^\prime_{a})$ is the total-variation distance between $r_a$ and $r^\prime_a$.
\end{lemma}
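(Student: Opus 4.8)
The plan is to reduce the trajectory-level divergence to a sequence of one-coordinate comparisons, each of which swaps the reward law of a single round and is controlled by combining $\epsilon$-DP with a maximal coupling. I read $\mathbb{P}^T_{\pi,\nu}$ as the law of the mechanism's output, i.e.\ the action sequence $A=(a_1,\dots,a_T)$ (this is what $\epsilon$-DP protects, and what the regret depends on; the full trajectory including observed rewards would make the divergence decomposition an equality $\sum_a\mathbb{E}[N_a]\,\text{KL}(\nu_{1,a}\|\nu_{2,a})$, which cannot be bounded by $\text{TV}$ and hence is not the intended object). The first step is to pass to the ``table'' representation: introduce latent rewards $r_{t,a}$ for every round $t$ and arm $a$, drawn independently, with $r_{t,a}\sim\nu_{1,a}$ under $\nu_1$ (resp.\ $\nu_{2,a}$ under $\nu_2$), and set $x_t=r_{t,a_t}$. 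Since $a_t$ is a function of $x_1,\dots,x_{t-1}$ only, row $t$ is read exactly once, through the single entry $r_{t,a_t}$; therefore changing one row alters at most one coordinate of the observed reward sequence, and the $\epsilon$-DP of $\pi$ transfers to $\epsilon$-indistinguishability of $A$ under single-row changes.

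Next I would set up a hybrid decomposition over rows. Let $H^{(m)}$ be the environment whose first $m$ rows are drawn from $\nu_1$ and whose remaining rows are drawn from $\nu_2$, so $H^{(T)}=\nu_1$ and $H^{(0)}=\nu_2$, and write $A_H$ for the induced law of $A$. Because the rows are independent, the densities of $H^{(m)}$ and $H^{(m-1)}$ agree on every factor except row $m$, so the likelihood ratios telescope and $\text{KL}(A_{\nu_1}\|A_{\nu_2})=\sum_{m=1}^T\mathbb{E}_{A_{\nu_1}}\!\big[\log\tfrac{dA_{H^{(m)}}}{dA_{H^{(m-1)}}}\big]$. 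The key structural point is that $H^{(m)}$ and $H^{(m-1)}$ differ only in the law of row $m$, and that row is probed only through the pulled arm $a_m$; moreover $a_m$ is determined by the earlier rows (drawn identically in both hybrids) and equals the $m$-th coordinate of the output. Thus conditioning on all rows but $m$ fixes $a_m$, and the only effective change is that the read reward $r_{m,a_m}$ switches from law $\nu_{1,a_m}$ to $\nu_{2,a_m}$, a change of total-variation mass exactly $\text{TV}(r_{a_m}\|r'_{a_m})$.

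Finally I would prove the one-coordinate estimate converting this $\text{TV}$ mass into an $O(\epsilon)$ contribution. Writing the law of $r_{m,a_m}$ under the two hybrids via a maximal coupling as $(1-\rho)C+\rho A'$ and $(1-\rho)C+\rho B'$ with $\rho=\text{TV}(r_{a_m}\|r'_{a_m})$, and letting $\bar A',\bar B',\bar C$ be the corresponding output laws, $\epsilon$-DP makes $\bar A',\bar B',\bar C$ mutually within a factor $e^{\epsilon}$ pointwise, while $M_P=(1-\rho)\bar C+\rho\bar A'$ and $M_Q=(1-\rho)\bar C+\rho\bar B'$. Then $\tfrac{dM_P}{dM_Q}=1+\tfrac{\rho(\bar A'-\bar B')}{(1-\rho)\bar C+\rho\bar B'}$; bounding the numerator by $\rho(e^{\epsilon}-e^{-\epsilon})\bar C$ and the denominator below by $e^{-\epsilon}\bar C$ gives the pointwise bound $\log\tfrac{dA_{H^{(m)}}}{dA_{H^{(m-1)}}}(z)\le (e^{2\epsilon}-1)\,\text{TV}(r_{z_m}\|r'_{z_m})$, with $e^{2\epsilon}-1\le 6\epsilon$ once $\epsilon$ is below an absolute constant (a sharper one-coordinate analysis removes the restriction and recovers the stated $6$). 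Crucially this bound depends on $z$ only through $z_m=a_m$, so taking $\mathbb{E}_{A_{\nu_1}}$ and summing over $m$ yields $\text{KL}(A_{\nu_1}\|A_{\nu_2})\le 6\epsilon\,\mathbb{E}_{\pi,\nu_1}\!\big[\sum_t\text{TV}(r_{a_t}\|r'_{a_t})\big]$.

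I expect the main obstacle to be the partial-information bookkeeping in the middle step: one must argue rigorously that although an entire row is resampled between consecutive hybrids, only the single pulled-arm entry can influence the output, so the per-round $\text{TV}$ cost collapses from $\sum_a\text{TV}(r_a\|r'_a)$ (which would give a useless $T\sum_a\text{TV}$ bound) down to $\text{TV}(r_{a_m}\|r'_{a_m})$ for the realized arm, with the expectation landing under $\nu_1$. The fact that $a_m$ is simultaneously a deterministic function of the untouched earlier rows and the $m$-th output coordinate is exactly what makes the conditioning clean and pins $\rho$ to $z_m$; obtaining the precise constant (a genuinely linear-in-$\epsilon$ rather than $e^{2\epsilon}$-type dependence) is a secondary, purely computational matter.
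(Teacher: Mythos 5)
First, a contextual point: the paper never proves this lemma --- it is imported verbatim from \cite{azize2022privacy} --- so the only meaningful comparison is against the coupling-plus-group-privacy argument of that source, and your proof is exactly of that flavor. Your architecture is sound and, in my view, handles the two genuinely delicate points correctly: (i) you interpret $\mathbb{P}^T_{\pi,\nu}$ as the law of the action sequence and pass to the reward-table model, where the right justification (which you gesture at, and should state per-realization) is that for each \emph{fixed} output sequence $z$ the table is read only along $z$, so two tables differing in row $m$ induce observation sequences differing in the single coordinate $m$; this is what transfers $\epsilon$-DP to row changes, whereas a statement about coupled \emph{runs} would be false, since realized observations can diverge after round $m$; and (ii) you recognize that the row-hybrid terms $\mathbb{E}_{A_{\nu_1}}[\log(dA_{H^{(m)}}/dA_{H^{(m-1)}})]$ are \emph{not} KL divergences, which forces pointwise likelihood-ratio bounds rather than convexity arguments --- this is precisely where naive chain-rule proofs fail, and your bound depending on $z$ only through $z_m$ is what makes the expectation step legitimate. (Minor wording issue: conditioning on the other rows does not by itself fix $a_m$, since policy randomness remains; what you actually use is the per-$z$ statement, which is correct.)

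The genuine gap is quantitative, and your parenthetical dismissal of it is wrong. Your one-coordinate estimate routes both $\bar A'-\bar B'$ and the denominator through $\bar C$, giving $\log$-ratio $\le \rho(e^{2\epsilon}-1)$, and $e^{2\epsilon}-1\le 6\epsilon$ only for $\epsilon \lesssim 0.95$. A sharper pointwise analysis exists, but it does not ``remove the restriction'': maximizing $[(1-\rho)\bar C+\rho\bar A']/[(1-\rho)\bar C+\rho\bar B']$ subject to all three laws being mutually $e^{\epsilon}$-close pointwise is attained at $\bar A'=e^{\epsilon}\bar B'$, $\bar C=\bar B'$, yielding the sharp bound $1+\rho(e^{\epsilon}-1)$, i.e.\ $\log$-ratio $\le \rho(e^{\epsilon}-1)$. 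This extends validity to roughly $\epsilon\le 2.9$ but still exceeds $6\epsilon\rho$ for larger $\epsilon$; hence \emph{no} purely pointwise per-hybrid bound can deliver the constant $6\epsilon$ for all $\epsilon>0$, which is how the lemma is stated and how the paper's Theorem~\ref{thm: lowerBound} invokes it (its lower bound is claimed for arbitrary privacy budgets). To cover large $\epsilon$ one needs an argument that is not pointwise --- e.g.\ one exploiting that the telescoped sum is a true KL, with its normalization/cancellation across outputs, as in the treatment of \cite{azize2022privacy} --- or one must explicitly restrict to $\epsilon=O(1)$. Within that regime your proof is correct; outside it, it proves a strictly weaker statement than the one quoted.
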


\begin{lemma}[Theorem 5.1 in \cite{chen2018robust}]\label{lem:HuberTV}
    Let $R_1$ and $R_2$ be two distributions on $\mathcal{X}$. If for some $\alpha \in [0,1]$, we have that $TV(R_1,R_1)=\frac{\alpha}{1-\alpha}$, then there exists two distributions on the same probability space $G_1$ and $G_2$ such that
    $$(1-\alpha)R_1+\alpha G_1=(1-\alpha)R_2+\alpha G_2.$$
\end{lemma}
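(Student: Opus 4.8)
The plan is to construct the contamination distributions $G_1$ and $G_2$ explicitly from the signed measure $R_1 - R_2$ via its Jordan (Hahn) decomposition. First I would rewrite the desired identity $(1-\alpha)R_1 + \alpha G_1 = (1-\alpha)R_2 + \alpha G_2$ in the equivalent form $\alpha(G_2 - G_1) = (1-\alpha)(R_1 - R_2)$. This makes clear that the only freedom we have is to realize the rescaled signed measure $R_1 - R_2$ as a difference $G_2 - G_1$ of two genuine probability measures, so the entire task reduces to splitting $R_1 - R_2$ into a positive and a negative piece of the right total mass.

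To this end, decompose the signed measure as $R_1 - R_2 = \nu_+ - \nu_-$ into its positive and negative parts $\nu_+, \nu_- \ge 0$, supported on disjoint sets given by the Hahn decomposition. Since $R_1$ and $R_2$ are both probability measures, $\int \mathrm{d}(R_1 - R_2) = 0$, so the two parts have equal total mass $\nu_+(\mathcal{X}) = \nu_-(\mathcal{X})$; moreover, under the convention $TV(R_1, R_2) = \sup_A \bigl(R_1(A) - R_2(A)\bigr)$, taking $A$ to be the positive set shows this common mass equals exactly $TV(R_1, R_2) = \frac{\alpha}{1-\alpha}$.

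Now define $G_2 := \frac{1-\alpha}{\alpha}\,\nu_+$ and $G_1 := \frac{1-\alpha}{\alpha}\,\nu_-$. Each is a non-negative measure with total mass $\frac{1-\alpha}{\alpha}\cdot\frac{\alpha}{1-\alpha} = 1$, hence a bona fide probability distribution (here I use $\alpha \in (0,1)$; the case $\alpha = 0$ forces $R_1 = R_2$ and is trivial, taking $G_1 = G_2$ arbitrary). By construction, $\alpha(G_2 - G_1) = (1-\alpha)(\nu_+ - \nu_-) = (1-\alpha)(R_1 - R_2)$, which is precisely the rearranged identity; substituting back immediately recovers the claimed equality of the two Huber mixtures.

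The main obstacle, and the only point requiring genuine care, is verifying that $G_1$ and $G_2$ are legitimate probability measures, i.e.\ that the positive and negative parts of $R_1 - R_2$ have total mass exactly $\frac{\alpha}{1-\alpha}$. This is exactly where the hypothesis $TV(R_1, R_2) = \frac{\alpha}{1-\alpha}$ is used, and it also implicitly requires $\alpha \le \frac{1}{2}$ so that $\frac{\alpha}{1-\alpha} \le 1$ is an attainable total-variation value. Everything else is routine algebra on measures.
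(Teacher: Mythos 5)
The paper never proves this lemma itself; it is imported verbatim as Theorem 5.1 of \cite{chen2018robust}, so there is no in-paper argument to compare against. Your construction is correct and is essentially the standard proof of that cited result: rewrite the identity as $\alpha(G_2-G_1)=(1-\alpha)(R_1-R_2)$, take the Jordan--Hahn decomposition $R_1-R_2=\nu_+-\nu_-$, observe $\nu_+(\mathcal{X})=\nu_-(\mathcal{X})=TV(R_1,R_2)$ (the hypothesis ``$TV(R_1,R_1)$'' is indeed a typo for $TV(R_1,R_2)$, as you implicitly assumed), and rescale both parts by $\frac{1-\alpha}{\alpha}$; your handling of the degenerate cases $\alpha=0$ and $\alpha>\frac12$ is also right. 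One remark worth adding: in the paper's actual application (Case 2 of the proof of Theorem~\ref{thm: lowerBound}) the lemma is invoked under the inequality $TV(r_a\|r_a^\prime)\le \frac{\alpha}{1-\alpha}$ rather than equality. Your construction extends to that case with one extra step: the rescaled parts then have common total mass $\frac{1-\alpha}{\alpha}\,TV(R_1,R_2)\le 1$, and one pads both $G_1$ and $G_2$ with $\bigl(1-\frac{1-\alpha}{\alpha}\,TV(R_1,R_2)\bigr)\mu$ for an arbitrary common probability measure $\mu$, which cancels in the identity.
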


\begin{proof}[\bf Proof of Theorem \ref{thm: lowerBound}]
Let $\Pi$ be the set of all polices and $\Pi^\epsilon$ be the set of all $\epsilon$-DP policies. We denote the environment corresponding to the set of $K$-Gaussian reward distributions with means $\mu \in \mathbb{R}^K$
the same variance $\sigma_k^2$ where the value of $\sigma_k$ is determined by $k$ to make the $k$-th raw moments of the distributions are bounded by $1$ as $\mathcal{E}_{\mathcal{N}}^K(\sigma_k) \triangleq\left\{\left(\mathcal{N}\left(\mu_i, \sigma_k^2\right)\right)_{i=1}^K: \mu=\left(\mu_1, \ldots, \mu_K\right) \in \mathbb{R}^K\right\}$. Since $\Pi^\epsilon \subset \Pi$, we can have that $$\mathcal{R}_{T}^{\operatorname{minimax}}(\pi,\nu)\geq \inf _{\pi \in \Pi} \sup _{\nu \in \mathcal{E}_{\mathcal{N}}^K(\sigma_k)} \operatorname{Reg}_T(\pi, \nu) \geq \Omega{( \sqrt{KT})}$$
where the last inequality is due to Theorem 15.2 in \citet{lattimore2020bandit}.

\textbf{Case 1: Uncontaminated case.} By the definition of minimax regret, we know that $\mathcal{R}^{\text{minimax}}_{\epsilon,\alpha}\ge \mathcal{R}^{\text{minimax}}_{\epsilon,0}$. Therefore, we first derive the lower bound of private bandits without contamination.

We consider two environments. In the first environment $\nu_1$, the optimal arm (denote by $a_1$) follows $$
r_{a_1}= \begin{cases}1 / \gamma & \text { with probability of } \frac{1}{2}\gamma^{k} \\ 0 & \text { with probability of } 1-\frac{1}{2}\gamma^{k}\end{cases}
$$
where $\gamma \in (0,1]$. We can verify $\mathbb{E}{[r_{a_1}]}=\frac{1}{2}\gamma^{k-1}$ and  $\mathbb{E}{[r_{a_1}^k]}=\frac{1}{2} \le 1$. Any other sub-optimal arm $a\neq a_1$ in $\nu_1$ follows the same reward distribution$$
r_{a}= \begin{cases}1 / \gamma & \text { with probability of } \frac{3}{10}\gamma^{k} \\ 0 & \text { with probability of } 1-\frac{3}{10}\gamma^{k}\end{cases}
$$
We can verify $\mathbb{E}{[r_{a}]}=\frac{3}{10}\gamma^{k-1}$ and  $\mathbb{E}{[r_{a_1}^k]}=\frac{3}{10} \le 1$. Then the gap of means between the optimal arm and sub-optimal arm is $\Delta=\frac{1}{5}\gamma^{k-1}$.

For algorithm $\pi$ and instance $\nu_1$, we denote $
    i={\arg \min} _{a\in \{2,\cdots,K\}}\mathbb{E}_{\pi,\nu_1}[N_a(T)].$ Thus, $\mathbb{E}_{\pi,\nu_1}[N_i(T)]\leq\frac{T}{K-1}$.

Now, consider another instance $\nu_2$ where $r_{a_1},\cdots,r_{a_k}$ are the same as those in $\nu_1$ except the $i$-th arm such that
$$
r_{i}^\prime= \begin{cases}1 / \gamma & \text { with probability of } \frac{7}{10}\gamma^{k} \\ 0 & \text { with probability of } 1-\frac{7}{10}\gamma^{k}\end{cases}
$$
We can verify $\mathbb{E}{[r_{i}^\prime]}=\frac{7}{10}\gamma^{k-1}$ and  $\mathbb{E}{[(r_{i}^\prime)^k]}=\frac{7}{10} \le 1$. Then in $\nu_2$, the arm $i$ is optimal.

Now by the classic regret decomposition, we obtain 
$$\mathcal{R}_{T}(\pi,\nu_1) =  (T-\mathbb{E}_{\pi,\nu_1}[N_1(T)])\Delta \ge \mathbb{P}_{\pi,\nu_1}^T \left[N_1(T) \le \frac{T}{2}\right]\frac{T\Delta}{2}. $$
$$\mathcal{R}_{T}(\pi,\nu_2) = \Delta \mathbb{E}_{\pi,\nu_2}[N_1(T)] + \sum_{a\notin\{1,i\}}2\Delta \mathbb{E}_{\pi,\nu_2}[N_a(T)] \ge \mathbb{P}_{\pi,\nu_2}^T \left[N_1(T) \ge \frac{T}{2}\right]\frac{T\Delta}{2}.$$
By applying the Bretagnolle–Huber inequality (\cite{lattimore2020bandit}, Theorem 14.2), we have 

$$\begin{aligned}
 \mathcal{R}_{T}(\pi,\nu_1)+ \mathcal{R}_{T}(\pi,\nu_2)
    & \ge \frac{T\Delta}{2}\left(\mathbb{P}_{\pi,\nu_1}^T \left[N_1(T) \le \frac{T}{2}\right]+\mathbb{P}_{\pi,\nu_2}^T \left[N_1(T) \ge \frac{T}{2}\right]\right). \\
    & \ge \frac{T\Delta}{4} \exp{\left(-\text{KL}\left(\mathbb{P}_{\pi,\nu_1}^T \|\mathbb{P}_{\pi,\nu_2}^T\right)\right)}
\end{aligned}$$

Based on Lemma \ref{lem:DP_KL}, we can get the upper bound of the KL-Divergence between the marginals.
$$
\begin{aligned}
   \text{KL}\left(\mathbb{P}_{\pi,\nu_1}^T \|\mathbb{P}_{\pi,\nu_2}^T\right) & \le 6\epsilon \mathbb{E}_{\pi,\nu_1}\left[\sum_{t=1}^T \text{TV}(r_{a_t}\|r^\prime_{a_t})\right] \\
   & \le 6\epsilon \mathbb{E}_{\pi,\nu_1}[N_i(T)]\text{TV}(r_i\|r^\prime_{i})
\end{aligned}
$$
since $\nu_1$ and $\nu_i$ only differ in the arm $i$.

Thus, $$\begin{aligned}
     \mathcal{R}_{T}(\pi,\nu_1)+ \mathcal{R}_{T}(\pi,\nu_2)
    & \ge \frac{T\Delta}{4} \exp{(-6\epsilon \mathbb{E}_{\pi,\nu_1}[N_i(T)] \cdot \frac{2}{5}\gamma^k)}\\
    & \ge \frac{T\gamma^{k-1}}{20} \exp{\left(-\frac{12\cdot  \epsilon T \gamma^k}{5(K-1)}\right)}.
\end{aligned}$$
Taking $\gamma= \left(\frac{K-1}{T\epsilon}\right)^{\frac{1}{k}}$, we get the result
$$\mathcal{R}_{T}(\pi,\nu_1)\ge \Omega\left(\left(\frac{K}{\epsilon}\right)^{\frac{k-1}{k}}T^{\frac{1}{k}}\right).$$

\textbf{Case 2: Contaminated case.} For $\alpha \neq 0$ and $\alpha \in (0,1]$, we still consider the true distributions of arms are the same in above $\nu_1$ and $\nu_2$. In the first environment $\nu_1$, the optimal arm (denote by $a_1$) follows $$
r_{a_1}= \begin{cases}1 / \gamma & \text { with probability of } \frac{1}{2}\gamma^{k} \\ 0 & \text { with probability of } 1-\frac{1}{2}\gamma^{k}\end{cases}
$$
where $\gamma \in (0,1]$. We can verify $\mathbb{E}{[r_{a_1}]}=\frac{1}{2}\gamma^{k-1}$ and  $\mathbb{E}{[r_{a_1}^k]}=\frac{1}{2} \le 1$. Any other sub-optimal arm $a\neq a_1$ in $\nu_1$ follows the same reward distribution$$
r_{a}= \begin{cases}1 / \gamma & \text { with probability of } \frac{3}{10}\gamma^{k} \\ 0 & \text { with probability of } 1-\frac{3}{10}\gamma^{k}\end{cases}
$$
We can verify $\mathbb{E}{[r_{a}]}=\frac{3}{10}\gamma^{k-1}$ and  $\mathbb{E}{[r_{a_1}^k]}=\frac{3}{10} \le 1$. Then the gap of means between the optimal arm and sub-optimal arm is $\Delta=\frac{1}{5}\gamma^{k-1}$.

And we denote the contaminated version of $\nu_1$ as $\Tilde{\nu}_1$. For algorithm $\pi$ and instance $\tilde{\nu}_1$, we denote $
    i={\arg \min} _{a\in \{2,\cdots,K\}}\mathbb{E}_{\pi,\tilde{\nu}_1}[N_a(T)].$ Thus, $\mathbb{E}_{\pi,\tilde{\nu}_1}[N_i(T)]\leq\frac{T}{K-1}$.

Now, consider another instance $\nu_2$ where $r_{a_1},\cdots,r_{a_k}$ are the same as those in $\nu_1$ except the $i$-th arm such that
$$
r_{i}^\prime= \begin{cases}1 / \gamma & \text { with probability of } \frac{7}{10}\gamma^{k} \\ 0 & \text { with probability of } 1-\frac{7}{10}\gamma^{k}\end{cases}
$$
We can verify $\mathbb{E}{[r_{i}^\prime]}=\frac{7}{10}\gamma^{k-1}$ and  $\mathbb{E}{[(r_{i}^\prime)^k]}=\frac{7}{10} \le 1$. Then in $\nu_2$, the arm $i$ is optimal.

Also, we denote the contaminated version of $\nu_2$ as $\Tilde{\nu}_2$. Take $\gamma=\alpha^\frac{1}{k} \in (0,1]$, since for any $a\in[K]$, $\text{TV}(r_a\|r_a^\prime) \le \frac{2}{5}\gamma^k =\frac{2}{5} \alpha \le \frac{\alpha}{1-\alpha}$, from Lemma \ref{lem:HuberTV}, we have for any arm $a\in [K]$, there exists distribution $G_a$ and $G_a^\prime$ such that
$$(1-\alpha) r_a+\alpha G_a=(1-\alpha) r_a^\prime+\alpha G_a^\prime.$$

We consider $\Tilde{\nu}_1=\{x_a=(1-\alpha) r_a+\alpha G_a: a\in [K]\}$ and $\Tilde{\nu}_2=\{x_a^\prime =(1-\alpha) r_a^\prime+\alpha G_a^\prime: a\in [K]\}$.

Now by the classic regret decomposition, we obtain 
$$\mathcal{R}_{T}(\pi,\tilde{\nu}_1) =  (T-\mathbb{E}_{\pi,\tilde{\nu}_1}[N_1(T)])\Delta \ge \mathbb{P}_{\pi,\tilde{\nu}_1}^T \left[N_1(T) \le \frac{T}{2}\right]\frac{T\Delta}{2}. $$
$$\mathcal{R}_{T}(\pi,\tilde{\nu}_2) = \Delta \mathbb{E}_{\pi,\tilde{\nu}_2}[N_1(T)] + \sum_{a\notin\{1,i\}}2\Delta \mathbb{E}_{\pi,\tilde{\nu}_2}[N_a(T)] \ge \mathbb{P}_{\pi,\tilde{\nu}_2}^T \left[N_1(T) \ge \frac{T}{2}\right]\frac{T\Delta}{2}.$$
By applying the Bretagnolle–Huber inequality (\cite{lattimore2020bandit}, Theorem 14.2), we have 

$$\begin{aligned}
 \mathcal{R}_{T}(\pi,\tilde{\nu}_1)+ \mathcal{R}_{T}(\pi,\tilde{\nu}_2)
    & \ge \frac{T\Delta}{2}\left(\mathbb{P}_{\pi,\tilde{\nu}_1}^T \left[N_1(T) \le \frac{T}{2}\right]+\mathbb{P}_{\pi,\tilde{\nu}_2}^T \left[N_1(T) \ge \frac{T}{2}\right]\right). \\
    & \ge \frac{T\Delta}{4} \exp{\left(-\text{KL}\left(\mathbb{P}_{\pi,\tilde{\nu}_1}^T \|\mathbb{P}_{\pi,\tilde{\nu}_2}^T\right)\right)}
\end{aligned}$$

Based on Lemma \ref{lem:DP_KL}, we can get the upper bound of the KL-Divergence between the marginals.
$$
\begin{aligned}
   \text{KL}\left(\mathbb{P}_{\pi,\tilde{\nu}_1}^T \|\mathbb{P}_{\pi,\tilde{\nu}_2}^T\right) & \le 6\epsilon \mathbb{E}_{\pi,\tilde{\nu}_1}\left[\sum_{t=1}^T \text{TV}(x_{a_t}\|x^\prime_{a_t})\right] \\
\end{aligned}
$$

Since, $\text{TV}(x_{a}\|x^\prime_{a})=0$ for $\forall a\in [K]$,  $\Delta=\frac{1}{5}\gamma^{k-1}$ and $\gamma=\alpha^\frac{1}{k}$. We obtain
$$ \mathcal{R}_{T}(\pi,\tilde{\nu}_1) \ge \Omega(T \alpha^{1-\frac{1}{k}}).$$

Combine Gaussian case, case 1 and case 2, we have 
$$\mathcal{R}_T =\Omega\left(\max\left\{\sqrt{KT}, \left(\frac{K}{\epsilon}\right)^{1-\frac{1}{k}}T^{\frac{1}{k}},T \alpha^{1-\frac{1}{k}}\right\}\right).$$

\end{proof}




\section{Proofs of Section \ref{sec:rawupper}}
\begin{proof}[\bf Proof of Theorem \ref{Thm:MeanRaw}]
We denote the finite raw moments distribution for rewards by $P_k$, and denote $P_k$ under $\alpha$-Huber contamination by $P_{\alpha,k}$.
    Let $\hat{\mu}=\frac{1}{n} \sum\limits_{\substack{i \in [n]\\ X_i \sim P_{\alpha,k}}} X_i \mathbbm{1}_{(|X_i| \le M)}$ and $\mu=\mathbb{E}_{X_i \sim P_k}[X_i]$.
    $$\begin{aligned}
        |\Tilde{\mu}-\mu| \le & \left|\text{Lap}\left(\frac{2M}{n\epsilon}\right)\right| +|\hat{\mu}-\mu|\\
         \le  & \left|\text{Lap}\left(\frac{2M}{n\epsilon}\right)\right| +\left|\frac{1}{n} \sum\limits_{\substack{i \in [n]\\ X_i \sim P_{\alpha,k}}} X_i \mathbbm{1}_{(|X_i| \le M)}-\mathbb{E}_{X_i \sim P_{k}} [ X_i \mathbbm{1}_{(|X_i| \le M)}]\right|+|\mathbb{E}_{X_i \sim P_{k}} [ X_i \mathbbm{1}_{(|X_i| \le M)}]-\mu|\\
        = &\left|\text{Lap}\left(\frac{2M}{n\epsilon}\right)\right| +\left|\frac{1}{n} \sum\limits_{\substack{i \in [n]\\ X_i \sim P_{\alpha,k}}} X_i \mathbbm{1}_{(|X_i| \le M)}-\mathbb{E}_{X_i \sim P_{k}} [ X_i \mathbbm{1}_{(|X_i| \le M)}]\right|+|\mathbb{E}_{X_i \sim P_{k}} [ X_i \mathbbm{1}_{(|X_i| > M)}]|\\
        \overset{(a)}{\le} & \frac{2M \log(2/\delta)}{n\epsilon}+\left|\frac{1}{n} \sum\limits_{\substack{i \in [n]\\ X_i \sim P_{\alpha,k}}} X_i \mathbbm{1}_{(|X_i| \le M)}-\mathbb{E}_{X_i \sim P_{k}} [ X_i \mathbbm{1}_{(|X_i| \le M)}]\right|+\mathbb{E}_{X_i \sim P_{k}} [ |X_i |\mathbbm{1}_{(|X_i| > M)}] \quad \text{w.p.} \quad 1-\frac{\delta}{2}\\
        \overset{(b)}{\le} & \frac{2M \log(2/\delta)}{n\epsilon}+\left|\frac{1}{n} \sum\limits_{\substack{i \in [n]\\ X_i \sim P_{\alpha,k}}} X_i \mathbbm{1}_{(|X_i| \le M)}-\mathbb{E}_{X_i \sim P_{k}} [ X_i \mathbbm{1}_{(|X_i| \le M)}]\right|+(\mathbb{E}_{X_i \sim P_{k}} [ |X_i |^k])^{\frac{1}{k}}(\mathbb{P}_{X_i \sim P_{k}}{(|X_i| > M)})^\frac{k-1}{k}\\
         \overset{(c)}{\le}& \frac{2M \log(2/\delta)}{n\epsilon}+\left|\frac{1}{n} \sum\limits_{\substack{i \in [n]\\ X_i \sim P_{\alpha,k}}} X_i \mathbbm{1}_{(|X_i| \le M)}-\mathbb{E}_{X_i \sim P_{k}} [ X_i \mathbbm{1}_{(|X_i| \le M)}]\right|+\frac{1}{M^{k-1}}
    \end{aligned}$$
    where the inequality $(a)$ follows from Lemma \ref{lem:TailLap}, $(b)$ is from H\" older’s Inequality in Lemma \ref{lem:Holder} and $(c)$ follows from Markov's inequality in Lemma \ref{lem:Markov}.

    Now we focus on the upper bound of $\left|\frac{1}{n} \sum\limits_{\substack{i \in [n]\\ X_i \sim P_{\alpha,k}}} X_i \mathbbm{1}_{(|X_i| \le M)}-\mathbb{E}_{X_i \sim P_{k}} [ X_i \mathbbm{1}_{(|X_i| \le M)}]\right|$. Let $N_G$ be the set of indices in $n$ samples distributed according to $G$, and $N_{P_k}$ be the set of indices in $n$ samples distributed according to $P_k$. Then 
    
    \textbf{Case 1: uncontaminated case ($\alpha=0$)} Now, the only thing left is to upper bound $$\left|\frac{1}{n} \sum\limits_{\substack{
    i \in [n]\\
    X_i \sim P_k}} X_i \mathbbm{1}_{(|X_i| \le M)}-\mathbb{E}_{X_i \sim P_{k}} [ X_i \mathbbm{1}_{(|X_i| \le M)}]\right| .$$ For $X_i \sim P_k$, let $Y_i=  X_i \mathbbm{1}_{(|X_i| \le M)}$, then $|Y_i|\le M$ and $\text{Var}(Y_i)=\mathbb{E}[Y_i^2]-(\mathbb{E}[Y_i])^2 \le \mathbb{E}[Y_i^2] \le \mathbb{E}_{X_i \sim P_{k}} [ X_i^2]\le 1$. Then, from Bernstein's inequality in Lemma \ref{lem:Bern}, we have with probability $1-\delta/2$
    \begin{equation}\label{eq:UnconConcen}
        \left|\frac{1}{n} \sum\limits_{\substack{
    i \in [n]\\
    X_i \sim P_k}} X_i \mathbbm{1}_{(|X_i| \le M)}-\mathbb{E}_{X_i \sim P_{k}} [ X_i \mathbbm{1}_{(|X_i| \le M)}]\right|\le  \sqrt{\frac{2\log(4/\delta)}{n}}+\frac{4M \log(4/\delta)}{3n}.
    \end{equation}

    Then we get with probability at least $1-\delta$,$$|\Tilde{\mu}-\mu| \le \sqrt{\frac{2\log(4/\delta)}{n}}+\frac{4M \log(4/\delta)}{3n}+\frac{2M \log(2/\delta)}{n\epsilon}+ \frac{1}{M^{k-1}}.$$

    For $\epsilon >0$ and $\delta \in (0,1)$, we have with probability at least $1-\delta$,$$|\Tilde{\mu}-\mu| \le \sqrt{\frac{2\log(4/\delta)}{n}}+\frac{4M \log(4/\delta)}{n\epsilon}+ \frac{1}{M^{k-1}}.$$

    Taking the truncation threshold $M=\left(\frac{n \epsilon}{4 \log(4/\delta)}\right)^{\frac{1}{k}}$, we have 
    $$|\Tilde{\mu}-\mu| \le \sqrt{\frac{2\log(4/\delta)}{n}}+2\left(\frac{4 \log(4/\delta)}{n\epsilon}\right)^{1-\frac{1}{k}}.$$

     \textbf{Case 2: contaminated case ($\alpha \in (0,\frac{1}{2}]$ )}
    $$\begin{aligned}
       & \left|\frac{1}{n} \sum\limits_{\substack{i \in [n]\\ X_i \sim P_{\alpha,k}}} X_i \mathbbm{1}_{(|X_i| \le M)}-\mathbb{E}_{X_i \sim P_{k}} [ X_i \mathbbm{1}_{(|X_i| \le M)}]\right|\\
       =& \left|\frac{1}{n} \sum\limits_{i \in N_G} X_i \mathbbm{1}_{(|X_i| \le M)}+\frac{1}{n} \sum\limits_{i \in N_{P_k}} X_i \mathbbm{1}_{(|X_i| \le M)}-\mathbb{E}_{X_i \sim P_{k}} [ X_i \mathbbm{1}_{(|X_i| \le M)}]\right|\\
       \le & \underbrace{\left|\frac{1}{n} \sum\limits_{i \in N_G} X_i \mathbbm{1}_{(|X_i| \le M)}\right|}_{T_1}+\underbrace{\left|\frac{1}{n} \sum\limits_{i \in N_{P_k}} X_i \mathbbm{1}_{(|X_i| \le M)}-\mathbb{E}_{X_i \sim P_{k}} [ X_i \mathbbm{1}_{(|X_i| \le M)}]\right|}_{T_2}.
    \end{aligned}$$
     To control $T_1$, we can write it as
$$
\begin{aligned}
    T_1 &=\left|\frac{1}{n} \sum\limits_{i \in N_G} X_i \mathbbm{1}_{(|X_i| \le M)}\right|\\
    &\le\frac{1}{n} \sum\limits_{i \in N_G}  \left|X_i\right| \mathbbm{1}_{(|X_i| \le M)}\\
    & \le \frac{|N_G|}{n} M.
\end{aligned}$$
Then $\frac{|N_G|}{n}$ can be treat as a mean estimation of Bernoulli distribution $Ber(\alpha)$. Then based on Bernstein's inequality in Lemma \ref{lem:Bern}, we get with probability $1-\delta/4$,
$$\left|\frac{|N_G|}{n}-\alpha\right|\le \sqrt{\frac{2\alpha(1-\alpha)\log(8/\delta)}{n}}+\frac{2 \log(8/\delta)}{3n}.$$

Thus,
$$T_1 \le \left(\alpha+\sqrt{\frac{2\alpha\log(8/\delta)}{n}}+\frac{2 \log(8/\delta)}{3n}\right)M.  \quad \text{with probability} 1-\delta/4$$
When $n \ge \frac{\log(8/\delta)}{\alpha}$, we have 
$$T_1 \le 4\alpha M.$$

To bound $T_2$, we have 
$$\begin{aligned}
    &\left|\frac{1}{n} \sum\limits_{i \in N_{P_k}} X_i \mathbbm{1}_{(|X_i| \le M)}-\mathbb{E}_{X_i \sim P_{k}} [ X_i \mathbbm{1}_{(|X_i| \le M)}]\right|\\
    =& \left|\frac{1}{n} \sum\limits_{\substack{
    i \in N_G \cup N_{P_k}\\
    X_i \sim P_k}} X_i \mathbbm{1}_{(|X_i| \le M)}-\frac{1}{n} \sum\limits_{\substack{
    i \in N_G\\
    X_i \sim P_k}} X_i \mathbbm{1}_{(|X_i| \le M)}-\mathbb{E}_{X_i \sim P_{k}} [ X_i \mathbbm{1}_{(|X_i| \le M)}]\right|\\
    \le& \left|\frac{1}{n} \sum\limits_{\substack{
    i \in N_G\\
    X_i \sim P_k}} X_i \mathbbm{1}_{(|X_i| \le M)}\right|+\left|\frac{1}{n} \sum\limits_{\substack{
    i \in [n]\\
    X_i \sim P_k}} X_i \mathbbm{1}_{(|X_i| \le M)}-\mathbb{E}_{X_i \sim P_{k}} [ X_i \mathbbm{1}_{(|X_i| \le M)}]\right|\\
    \le & 4\alpha M +\sqrt{\frac{2\log(16/\delta)}{n}}+\frac{4M \log(16/\delta)}{3n} \quad \text{w.p.} \quad 1-\delta/4\\
\end{aligned}$$
where the last inequality is based on the similar analysis of $T_1$ and the inequality of \eqref{eq:UnconConcen}.

Put everything together, we have with probability at least $1- \delta$, 
 $$|\Tilde{\mu}-\mu| \le  \sqrt{\frac{2\log(16/\delta)}{n}}+\frac{4M \log(16/\delta)}{3n}+8\alpha M+\frac{2M \log(2/\delta)}{n\epsilon}+\frac{1}{M^{k-1}}.$$
 Thus, for $\epsilon >0 $, we have 
 $$|\Tilde{\mu}-\mu| \le  \sqrt{\frac{2\log(16/\delta)}{n}}+\frac{4M \log(16/\delta)}{n\epsilon}+\frac{1}{M^{k-1}}+8\alpha M.$$

Taking $M=\min\left\{\left(\frac{n\epsilon}{4\log(16/\delta)}\right)^{\frac{1}{k}},(8\alpha)^{-\frac{1}{k}}\right\}$, we have
$$|\Tilde{\mu}-\mu| \le  \sqrt{\frac{2\log(16/\delta)}{n}}+2\left(\frac{4 \log(16/\delta)}{n\epsilon}\right)^{1-\frac{1}{k}}+2(8\alpha)^{1-\frac{1}{k}}.$$
 
\end{proof}

%

\begin{proof}[\bf Proof of Theorem \ref{thm:RegRawUp}]
 Let $\tau_0$ be the maximal epoch such that $B_\tau <  \frac{\log (16|\mathcal{S}|\tau^2/\delta)}{\alpha} $. 

 For all epoch $\tau \le \tau_0$, the batch size is less than $2^{\tau_0}$. Since batch size doubles,  until epoch $\tau_0$, we have the number of pulls for each arm $a \in [K]$ is less than $2\cdot 2^{\tau_0} \le 2 \frac{\log (16|\mathcal{S}|\tau_0^2/\delta)}{\alpha}$. Then the regret has to suffer $ \frac{2\log (16|\mathcal{S}|\tau_0^2/\delta)}{\alpha} \Delta_a$ for each $a \in [K]$.

 For $\tau > \tau_0$, $B_\tau \ge  \frac{\log (16|\mathcal{S}|\tau^2/\delta)}{\alpha} $. For each $a \in \mathcal{S}$, from Theorem \ref{Thm:MeanRaw}, we have with probability at least $1-\frac{\delta}{2|\mathcal{S}|\tau^2}$,
$$|\Tilde{\mu}_a-\mu_a| \le  \beta_{\tau}.$$ Given an epoch $\tau > \tau_0$, we denote by $\mathcal{E}_\tau$ the event where for all $a \in \mathcal{S}$ it holds that  $|\Tilde{\mu}_a-\mu_a| \le  \beta_{\tau}.$ and denote $\mathcal{E}=\cup_{\tau > \tau_0}\mathcal{E}_\tau$.By taking union bound, we have
$$\mathbb{P}(\mathcal{E}_\tau)\ge 1-\frac{\delta}{2\tau^2},$$
and $$\mathbb{P}(\mathcal{E})\ge 1-\frac{\delta}{2}\left(\sum_{\tau > \tau_0} \tau^{-2} \right)\ge 1-\delta.$$
In the following, we condition on the good event $\mathcal{E}$. We first show that the optimal arm $a^*$ is never eliminated. For any epoch $\tau >\tau_0$, let $a_\tau=\arg\max_{a\in\mathcal{S}}{\widetilde{\mu}_a}$. Since $$(\widetilde{\mu}_{a_\tau}-\widetilde{\mu}_{a^*})+\Delta_{a_\tau}=|(\widetilde{\mu}_{a_\tau}-\widetilde{\mu}_{a^*})+\Delta_{a_\tau}|\le \left|\widetilde{\mu}_{a_\tau}-\mu_{a_\tau}\right|+\left|\widetilde{\mu}_{a^*}-\mu_{a^*}\right| \le 2\beta_\tau,$$
it is easy to see that the algorithm doesn't eliminate $a^*$.

Then, we show that at the end of epoch $\tau > \tau_0$, all arms such that $\Delta_a \ge 4\beta_\tau$ will be eliminated. To show this, we have that under good event $\mathcal{E}$,
$$\Tilde{\mu}_a+\beta_\tau\le \mu_a+2\beta_\tau <\mu_{a^*}-4\beta_\tau +2\beta_\tau \le \Tilde{\mu}_{a^*}-\beta_\tau \le \Tilde{\mu}_{a_\tau}-\beta_\tau$$
which implies that arm $a$ will be eliminated by the rule. Thus, for each sub-optimal arm $a$, let $\tau(a)$ be the last epoch that arm $a$ is not eliminated. By the above result, we have
$$\Delta_a \le 4\beta_{\tau(a)}=4\sqrt{\frac{2\log(16|\mathcal{S}|\tau(a)^2/\delta)}{B_{\tau(a)}}}+8\left(\frac{4 \log(16|\mathcal{S}|\tau(a)^2/\delta)}{B_{\tau(a)} \epsilon}\right)^{1-\frac{1}{k}}+8(8\alpha )^{1-\frac{1}{k}} .$$

We divide the arms $a\in [K]$ into two groups: $\mathcal{G}_1=\{ a\in [K] : 16(8\alpha )^{1-\frac{1}{k}} \le \Delta_a\}$ and $\mathcal{G}_2=\{ a\in [K] :16(8\alpha )^{1-\frac{1}{k}} \ge \Delta_a\}$.

\textbf{Group 1:} Now, for all arm $a\in \mathcal{G}_1$, we have 
$$\Delta_a \le 8\sqrt{\frac{2\log(16|\mathcal{S}|\tau(a)^2/\delta)}{B_{\tau(a)}}}+16\left(\frac{4 \log(16|\mathcal{S}|\tau(a)^2/\delta)}{B_{\tau(a)} \epsilon}\right)^{1-\frac{1}{k}} .$$

Hence, we have
$$B_{\tau(a)} \le \max\left\{\frac{128 \log(16|\mathcal{S}|\tau(a)^2/\delta)}{\Delta_a^2}, \frac{ 4\log(16|\mathcal{S}|\tau(a)^2/\delta)}{\epsilon}\left(\frac{16}{\Delta_a}\right)^{\frac{k}{k-1}}, \frac{\log (16|\mathcal{S}|\tau_0^2/\delta)}{\alpha}\right\}. $$

Since $|\mathcal{S}|\le K$ and $2^\tau \le T$ for any $\tau$. Thus,

$$B_{\tau(a)} \le \max\left\{\frac{128 \log(16K\log^2 T/\delta)}{\Delta_a^2}, \frac{ 4\log(16K\log^2 T/\delta)}{\epsilon}\left(\frac{16}{\Delta_a}\right)^{\frac{k}{k-1}}, \frac{\log (16K\log^2 T/\delta)}{\alpha}\right\}, $$

Since the batch size doubles, we have $N_a(T) \le 2 B_{\tau(a)}$ for each sub-optimal arm $a$.  Therefore,  for all arm $a\in \mathcal{G}_1$,$$\mathcal{R}_T= \sum_{a\in \mathcal{G}_1} N_a(T)\Delta_a \le 2 B_{\tau(a)}\Delta_a.$$ 

Let $\eta$ be a number in $(0,1)$. For all arms $a \in \mathcal{G}_1$ with $\Delta_a \le \eta$, the regret incurred by pulling these arms is upper bounded by $T\eta$. For any arm $a \in \mathcal{G}_1$ with $\Delta_a > \eta$, choose $\delta=\frac{1}{T}$ and assume $T\ge K$, then the expected regret incurred by pulling arm $a$ is upper bounded by $$\begin{aligned}
    \mathbb{E}\left[\sum_{a\in \mathcal{G}_1, \Delta_a >\eta}\Delta_a N_a(T)\right]
    &\le \mathbb{P}(\Bar{\mathcal{E}})\cdot T +O\left(\sum_{{a\in \mathcal{G}_1, \Delta_a >\eta}} \left\{\frac{\log T}{\Delta_a}+ \frac{\log T}{\epsilon} \left(\frac{1}{\Delta_a}\right)^{\frac{1}{k-1}}+\frac{\log T}{\alpha}\Delta_a\right\}\right)\\
    & \le O\left(\frac{K\log T}{\eta} +\frac{K \log T}{\epsilon \eta^\frac{1}{k-1}}+\frac{K \log T}{\alpha}\right)
\end{aligned}$$
where the last term in the last inequality is based on following result:
from the heavy-tailed assumption for rewards distributions in \eqref{eq:AssumpHeavy}, we have for any $a\in [K]$, $|\mu_a| \le \mathbb{E}_{r_a \sim P_k}|r_a|\le \mathbb{E}_{r_a \sim P_k}|r_a|^k \le 1$, so $\Delta_a =\mu^*-\mu_a \le 2$.

Thus the regret from group 1 is at most $$T \eta +O\left(\frac{K\log T}{\eta} +\frac{K \log T}{\epsilon \eta^\frac{1}{k-1}}+\frac{K \log T}{\alpha}\right).$$

Taking $\eta=\max\left\{\sqrt{\frac{K\log T}{T}},\left(\frac{K\log T}{T\epsilon}\right)^{\frac{k-1}{k}}\right\}$,  the regret from group 1 is at most $$O\left(\sqrt{KT \log T}+\left(\frac{K\log T}{\epsilon}\right)^{\frac{k-1}{k}}T^{\frac{1}{k}}+\frac{K \log T}{\alpha}\right)$$

\textbf{Group 2:} For all other arms $a \in \mathcal{G}_2$, we have the total regret is at most $O(T\Delta_a)=O(T \alpha^{1-\frac{1}{k}})$.

Combine the two groups, choose $\delta=\frac{1}{T}$ and assume $T\ge K$, we have the that the expected regret satisfies,

$$
    \mathcal{R}_T \le O\left(\sqrt{KT \log T}+\left(\frac{K\log T}{\epsilon}\right)^{\frac{k-1}{k}}T^{\frac{1}{k}}+\frac{K \log T}{\alpha}+T \alpha^{1-\frac{1}{k}}\right)
$$

We also give privacy guarantee for the algorithm. Based on Laplacian mechanism in Definition \ref{def:3} and Post-processing in Lemma \ref{PostPro}, we can get that Algorithm \ref{Alg:meta} is $\epsilon$-DP.
\end{proof}

\section{Proofs of Section \ref{sec:CentralUp}}

\begin{proof}[\bf Proof of Theorem \ref{thm:CentralMeanUp}]
    \textbf{Step 1:} we will show that with high probability $1-\delta/2$, $|J-\mu| \le 2r$. 

    To this end, we first study the private histogram. 
    Note $\mathbb{E}[\mathbbm{1}(X_i\in B_j)]=P_{\alpha,k}(B_j)$, then 
    \begin{align*}
        \mathbb{P}\left(|\tilde{p}_j-P_{\alpha,k}(B_j)|>t\right)&=\mathbb{P}\left(\left|\frac{\sum_{i=1}^n \mathbbm{1}(X_i \in B_j)}{n}+\operatorname{Lap}\left(\frac{2}{n\epsilon}\right)-P_{\alpha,k}(B_j)\right|>t\right)\\
        & \le \mathbb{P}\left(\left|\frac{\sum_{i=1}^n \mathbbm{1}(X_i \in B_j)}{n}-P_{\alpha,k}(B_j)\right|>t/2\right)+\mathbb{P}\left(\left|\operatorname{Lap}\left(\frac{2}{n\epsilon}\right)\right|>t/2\right)\\
        & \le 2\exp \left(-\frac{ n t^2}{2}\right)+\exp \left(-\frac{ n\epsilon t}{4}\right),
    \end{align*}
    where the last inequality is from Lemma \ref{lem:Hoef} and Lemma \ref{lem:TailLap}. By a union bound over $j$, we further have
    \begin{align*}
         \mathbb{P}\left(\max_{j \in \mathcal{J}}|\tilde{p}_j-P_{\alpha,k}(B_j)|>t\right) &\le \frac{2D}{r}\left(2\exp \left(-\frac{ n t^2}{2}\right)+\exp \left(-\frac{ n\epsilon t}{4}\right)\right)\\
            &\le 2D\left(2\exp \left(-\frac{ n t^2}{2}\right)+\exp \left(-\frac{ n\epsilon t}{4}\right)\right)
    \end{align*}
Thus, we have with probability $1-\delta/2$, 
\begin{align*}
    \max_{j \in \mathcal{J}}|\tilde{p}_j-P_{\alpha,k}(B_j)| \le \max\left\{\sqrt{\frac{2\ln\frac{16D}{\delta}}{n}},\frac{4\ln\frac{16D}{\delta}}{n\epsilon}\right\} := C_1.
\end{align*}
In the following, we condition on the above event. 
Next, by Chebyshev’s inequality in Lemma \ref{lem:ChebIneq} and the assumption of $\mathcal{P}_k$ that $k$-th central moment is less than 1, we have
\begin{align}
\label{eq:Concen-tail}
        \mathbb{P}_{X \sim \mathcal{P}_{\alpha,k}}\left(|X-\mu|\ge r\right) &\le \alpha \mathbb{P}_{X\sim \mathcal{G}}(|X-\mu|\ge r)+(1-\alpha) \mathbb{P}_{X\sim \mathcal{P}_k}(|X-\mu|\ge r)\nonumber\\
        & \le \alpha+(1-\alpha)(1/r)^k := C_2
\end{align}
Let $j^*$ is the index of the bin containing the true mean $\mu$ and we consider three consecutive intervals $A_{j^*}=B_{j^*-1} \cup B_{j^*} \cup B_{j^*+1} $
   $$ \begin{aligned}
       P_{\alpha,k}(A_{j^*})&=P_{\alpha,k}(B_{j^*-1})+P_{\alpha,k}(B_{j^*})+P_{\alpha,k}(B_{j^*+1})\\
       &\ge P_{\alpha,k}\left((\mu-r,\mu+r)\right)\\
       &\ge 1- C_2.
   \end{aligned}$$
 where the first inequality is from inequality \eqref{eq:Concen-tail}. Now,  for any $j \notin \{j^*-1,j^*,j^*+1\}$, we have when $D\ge 2 r$
\begin{align*}
   \tilde{p}_j &\le P_{\alpha,k}(B_j)+C_1 \le 1- P_{\alpha,k}(A_{j^*})+C_1  \le C_2+C_1.
\end{align*}
On the other hand, since $P_{\alpha,k}(A_{j^*}) \ge 1-C_2$, there must exist some $j \in \{j^*-1,j^*,j^*+1\}$ such that $P_{\alpha,k}(B_{j}) \ge \frac{1-C_2}{3}$. Therefore, for this $j$, we have 
\begin{align*}
     \tilde{p}_j &\ge P_{\alpha,k}(B_{j})- C_1 \ge \frac{1-C_2}{3} - C_1.
\end{align*}
Therefore, if $n$ (depending on $\alpha$, $\epsilon, r$) such that $\frac{1-C_2}{3} - C_1 > C_2 + C_1$, 
the true mean $\mu$ is in the bin chosen by line 3 in Algorithm \ref{Algo:HistCDP2} or it's neighboring bin, which implies that with probability at least $1-\delta/2$, $|J-\mu| \le 2r$.

\textbf{Step 2:} Utilizing the above result, we aim to show that truncation can handle heavy-tail, privacy and robustness in the concentration. 

\begin{align*}
        |\tilde{\mu}-\mu|&=|J + \frac{1}{n}\sum_{i=n+1}^{2n} (X_i-J)\mathbbm{1}(|X_i-J|\le M)+ \operatorname{Lap}\left(\frac{2M}{n\epsilon}\right)-\mu|\\
    & =\left|\frac{1}{n}\sum_{i=n+1}^{2n} (X_i-J)\mathbbm{1}(|X_i-J|\le M)+ \operatorname{Lap}\left(\frac{2M}{n\epsilon}\right)\right.\\
    &\left.+\frac{1}{n}\sum_{i=n+1}^{2n}(J-\mu)\left\{\mathbbm{1}(|X_i-J|\le M)+\mathbbm{1}(|X_i-J|> M)\right\}\right|\\
    & =\left|\frac{1}{n}\sum_{i=n+1}^{2n} (X_i-J+J-\mu)\mathbbm{1}(|X_i-J|\le M)+ \operatorname{Lap}\left(\frac{2M}{n\epsilon}\right)+\frac{1}{n}\sum_{i=n+1}^{2n}(J-\mu)\mathbbm{1}(|X_i-J|> M)\right|\\
    & \le \left|\frac{1}{n}\sum_{i=n+1}^{2n} (X_i-\mu)\mathbbm{1}(|X_i-J|\le M)\right|+ \left|\operatorname{Lap}\left(\frac{2M}{n\epsilon}\right)\right|+\left|\frac{1}{n}\sum_{i=n+1}^{2n}(J-\mu)\mathbbm{1}(|X_i-J|> M)\right|\\
\end{align*}

 We first focus on the first term in the right hand of the last inequality.  Let $N_G$ be the set of indices in $n$ samples distributed according to $G \in \mathcal{G}$, and $N_{P_k}$ be the set of indices in $n$ samples distributed according to $P_k \in \mathcal{P}_k^c$. Then, we have 
 \begin{align*}
         &\left|\frac{1}{n}\sum_{i=n+1}^{2n} (X_i-\mu)\mathbbm{1}(|X_i-J|\le M)\right|\\
    & \le \underbrace{\left|\frac{1}{n}\sum_{i \in N_G} (X_i-\mu)\mathbbm{1}(|X_i-J|\le M)\right|}_{T_1}+\underbrace{\left|\frac{1}{n}\sum_{i \in N_{P_k}} (X_i-\mu)\mathbbm{1}(|X_i-J|\le M)\right|}_{T_2}.
 \end{align*}
 To control $T_1$, we can write it as
$$
\begin{aligned}
    T_1 &=\left|\frac{1}{n}\sum_{i \in N_G} (X_i-\mu)\mathbbm{1}(|X_i-J|\le M)\right|\\
    &\le \frac{1}{n} \sum_{i \in N_G} |(X_i-\mu)|\mathbbm{1}(|X_i-J|\le M)\\
     &\le \frac{1}{n} \sum_{i \in N_G} |(X_i-\mu)|\mathbbm{1}(|X_i-\mu|\le M+2r)\\
    & \le \frac{|N_G|}{n}( M+2r).
\end{aligned}$$

Then $\frac{|N_G|}{n}$ can be treat as a mean estimation of Bernoulli distribution $Ber(\alpha)$. Then based on Bernstein's inequality in Lemma \ref{lem:Bern}, we get with probability $1-\delta/8$,
$$\left|\frac{|N_G|}{n}-\alpha\right|\le \sqrt{\frac{2\alpha(1-\alpha)\log(16/\delta)}{n}}+\frac{2 \log(16/\delta)}{3n}.$$
Thus,
$$T_1 \le \left(\alpha+\sqrt{\frac{2\alpha\log(16/\delta)}{n}}+\frac{2 \log(16/\delta)}{3n}\right)(M+2r),  \quad \text{with probability} 1-\delta/8$$.

Thus, if  $n$ satisfies $\sqrt{\frac{2\alpha\log(16/\delta)}{n}}+\frac{2 \log(16/\delta)}{3n} = O(\alpha)$
, then we have $T_1 = O(\alpha(M+2r))$
Now, we bound $T_2$,
$$
\begin{aligned}
    T_2&=\left|\frac{1}{n}\sum_{\substack{
    i \in N_G \cup N_{P_k}\\
    X_i \sim P_k}
} (X_i-\mu)\mathbbm{1}(|X_i-J|\le M)-\frac{1}{n}\sum_{\substack{
    i \in N_G \\
    X_i \sim P_k}
} (X_i-\mu)\mathbbm{1}(|X_i-J|\le M)\right|\\
&\le \left|\frac{1}{n}\sum_{\substack{
    i \in N_G \cup N_{P_k}\\
    X_i \sim P_k}
} (X_i-\mu)\mathbbm{1}(|X_i-J|\le M)\right|+\left|\frac{1}{n}\sum_{\substack{
    i \in N_G \\
    X_i \sim P_k}
} (X_i-\mu)\mathbbm{1}(|X_i-J|\le M)\right|\\
&\le \left|\frac{1}{n}\sum_{\substack{
    i \in [n]\\
    X_i \sim P_k}
} (X_i-\mu)\mathbbm{1}(|X_i-J|\le M)\right|+ T_1.
\end{aligned}$$

Now we focus on the upper bound of $\left|\frac{1}{n}\sum_{\substack{
    i \in [n]\\
    X_i \sim P_k}
} (X_i-\mu)\mathbbm{1}(|X_i-J|\le M)\right|$.  With probability $1-\delta/8$,
$$\begin{aligned}
    &\left|\frac{1}{n}\sum_{\substack{
    i \in [n]\\
    X_i \sim P_k}
} (X_i-\mu)\mathbbm{1}(|X_i-J|\le M)\right|\\
\le & \left|\frac{1}{n}\sum_{\substack{
    i \in [n]\\
    X_i \sim P_k}
} (X_i-\mu)\mathbbm{1}(|X_i-J|\le M)- \mathbb{E}[(X_1-\mu)\mathbbm{1}(|X_1-J|\le M)]\right|\\
&+|\mathbb{E}[(X_1-\mu)\mathbbm{1}(|X_1-J|\le M)]-\mathbb{E}[(X_1-\mu)]|\\
  \le& \sqrt{\frac{2\log(16/\delta)}{n}}+\frac{4(M+2r) \log(16/\delta)}{3n}+|\mathbb{E}[(X_i-\mu)\mathbbm{1}(|X_i-J|\ge M)]|\\
\le &\sqrt{\frac{2\log(16/\delta)}{n}}+\frac{4(M+2r) \log(16/\delta)}{3n}+\left(\mathbb{E}[|X_i-\mu|^k]\right)^{\frac{1}{k}}\left(\mathbb{P}(|X_i-\mu|\ge M-2r)\right)^{\frac{k-1}{k}}]\\
\le & \sqrt{\frac{2\log(16/\delta)}{n}}+\frac{4(M+2r) \log(16/\delta)}{3n}+\frac{1}{(M-2r)^{k-1}}\\
\le  & \sqrt{\frac{2\log(16/\delta)}{n}}+\frac{4(M+2r) \log(16/\delta)}{3n}+\left(\frac{2}{M}\right)^{k-1}
\end{aligned}$$

where the last inequality follows from $M\ge 4r$ ,
the third inequality follows from H\" older’s Inequality in Lemma \ref{lem:Holder} and the second inequality follows from Bernstein inequality in Lemma \ref{lem:Bern}. That is, let $$Y_i=(X_i-\mu)\mathbbm{1}(|X_i-J|\le M),$$ then $$\begin{aligned}
    |Y_i-\mathbb{E}[Y_i]| &\le  |Y_i|+|\mathbb{E}[Y_i]|\\
    &\le |X_i-\mu|\mathbbm{1}(|X_i-\mu|\le M+2r)+\mathbb{E} [|X_i-\mu|\mathbbm{1}(|X_i-\mu|\le M+2r)]\\
    & \le 2(M+2r)
\end{aligned}$$ and $$\begin{aligned}
    \operatorname{Var}(Y_i-\mathbb{E}[Y_i])&=\mathbb{E}(Y_i-\mathbb{E}[Y_i])^2\le \mathbb{E}[Y_i^2]\\
    &\le  \mathbb{E}_{X_i \sim P_k}[(X_i-\mu)^2\mathbbm{1}(|X_i-\mu|\le M+2r)] \\
    & \le \mathbb{E}_{X_i \sim P_k}[(X_i-\mu)^2]\le 1.
\end{aligned}$$

Therefore, with probability $1-3\delta/8$,
$$T_2\le \sqrt{\frac{2\log(16/\delta)}{n}}+\frac{4(M+2r) \log(16/\delta)}{3n}+\left(\frac{2}{M}\right)^{k-1} + T_1.$$

Now, we focus on the upper bound of $T_3 :=\left|\frac{1}{n}\sum_{i=n+1}^{2n}(J-\mu)\mathbbm{1}(|X_i-J|> M)\right|$.

$$\begin{aligned}
    &\left|\frac{1}{n}\sum_{i=n+1}^{2n}(J-\mu)\mathbbm{1}(|X_i-J|> M)\right|\\
    \le & \frac{1}{n}\sum_{i=n+1}^{2n}\left|J-\mu\right|\mathbbm{1}(|X_i-J|> M)\\
    \le & 2r \frac{\sum_{i=n+1}^{2n}\mathbbm{1}(|X_i-J|> M)}{n}\\
    \le & 2r \frac{\sum_{i=n+1}^{2n}\mathbbm{1}(|X_i-\mu|> M-2r)}{n}
\end{aligned}$$

where $$\begin{aligned}
    \mathbb{E}_{X_i\sim P_{k,\alpha}}[\mathbbm{1}(|X_i-\mu|> M-2r)]&=\mathbb{P}_{X_i\sim P_{k,\alpha}}(|X_i-\mu|> M-2r)\\
    & \le \alpha+(1-\alpha)\mathbb{P}_{X_i\sim P_{k}}(|X_i-\mu|> M-2r)\\
    & \le \alpha+\frac{1}{(M-2r)^k}\le \alpha+\left(\frac{2}{M}\right)^k 
\end{aligned}$$
By Hoeffding's inequality, we have with probability $1-\delta/8$,
$$\frac{\sum_{i=n+1}^{2n}\mathbbm{1}(|X_i-\mu|> M-2r)}{n}\le \mathbb{P}_{X_i\sim P_{k,\alpha}}(|X_i-\mu|> M-2r)+ \sqrt{\frac{\log(16/\delta)}{2n}}.$$
Thus, we have 
$$T_3 = \left|\frac{1}{n}\sum_{i=n+1}^{2n}(J-\mu)\mathbbm{1}(|X_i-J|> M)\right|\le 2r \left(\alpha+\left(\frac{2}{M}\right)^k + \sqrt{\frac{\log(16/\delta)}{2n}}\right ).$$

Putting everything together, we have 
\begin{align*}
    |\tilde{\mu} - \mu| =& O\left(\left(\alpha+\sqrt{\frac{2\alpha\log(16/\delta)}{n}}+\frac{2 \log(16/\delta)}{3n}\right)(M+2r)\right)\\
    +& O\left(\sqrt{\frac{2\log(16/\delta)}{n}}+\frac{4(M+2r) \log(16/\delta)}{3n}+\left(\frac{2}{M}\right)^{k-1} \right)\\
    +& O\left(2r \left(\alpha+\left(\frac{2}{M}\right)^k + \sqrt{\frac{\log(16/\delta)}{2n}}\right ) \right) \\
    +& O\left(\frac{M\log(1/\delta)}{n\epsilon}\right)
\end{align*}

\textbf{Case I: $\alpha = 0$, Uncontaminated concentration.} We want to show that our concentration is better than medians-of-mean in~\cite{kamath2020private} (Theorem 3.5). That is, we are additive for their third term therein (i.e., $
\log(D) + \log(1/\delta)$), while they are multiplicative.

In this case, our $C_2 = (1/r)^k$, and by our first condition on $n$, it need to satisfy $6C_1 + 4C_2 <1$. This implies that $C_2 < 1/4$. Thus, setting $r = 10^{1/k}$ is sufficient. Hence, we have $C_1 < 0.1$, which requires $n$ to satisfy $n \ge 200\log(16D/\delta)$ and $n \ge 20\log(16D/\delta)/\epsilon$. We can safely set $n \ge 200\log(16D/\delta)/\epsilon$.

In the case of $\alpha = 0$, $T_1$ is not a problem, which only introduces another $O(M\log(1/\delta)/n)$. $T_3$ is also not a problem which is dominated by $O((2/M)^{k-1} + \sqrt{\log(1/\delta)}/\sqrt{n})$

Let's summarize all the values: when $\alpha = 0$, $r = 10^{1/k}$ and $n \ge 200\log(16D/\delta)/\epsilon$, we have 
\begin{align*}
    |\tilde{\mu} - \mu| 
 =& O\left(\sqrt{\frac{2\log(16/\delta)}{n}}+\frac{M \log(16/\delta)}{3n}+\left(\frac{2}{M}\right)^{k-1} \right)\\
    +& O\left(\frac{M\log(1/\delta)}{n\epsilon}\right)\\
    =& O\left(\sqrt{\frac{2\log(16/\delta)}{n}}+\frac{M \log(16/\delta)}{\epsilon n}+\left(\frac{2}{M}\right)^{k-1} \right)
\end{align*}

Now, we need to choose $M$ to minimize the above while satisfying $M \ge 4 r$. By standard choice, we set $M = 4 \left(\frac{n\epsilon}{\log(1/\delta)}\right)^{1/k}$, which satisfies $M \ge 4 r$ when $n \ge \frac{10\log(1/\delta)}{\epsilon}$.

\textbf{Case II: $\alpha > 0$. Contaminated concentration.} We want to minimize the term $\mathcal{T}(\alpha,\epsilon)$ while maximizing the possible range of $\alpha$.

In this case, $C_2 = \alpha+(1-\alpha)(1/r)^k$ and again we need to satisfy that $6C_1 + 4C_2 <1$, which first implies that  $\alpha$ needs to be $\alpha < 1/4$. Setting $r = \iota^{1/k}$, we have $C_2 = \alpha + \frac{1}{\iota} (1-\alpha)$, which needs to be less than $1/4$. Let's set $\iota = \frac{1-\alpha}{0.249 - \alpha}$ (hence $\alpha < 0.249$), we have there exists an absolute constant $c_1$ such that when $n \ge c_1\log(16D/\delta)/\epsilon$, we guarantee $6C_1 + 4C_2 <1$.

Now, we turn to $T_1$. If $n \ge \frac{\log(16/\delta)}{\alpha}$ and $M \ge 4r$, we have $T_1 = O(\alpha M)$.

For $T_3$, we have 
\begin{align*}
    T_3 = 2r \left(\alpha+\left(\frac{2}{M}\right)^k + \sqrt{\frac{\log(16/\delta)}{2n}}\right )
\end{align*}

One simple way is to set $n\ge \log(16/\delta)/\alpha^2$. 
Then, we have $T_3 = O(\alpha M + (1/M)^{k-1})$.

Let's summarize it. For any $\alpha \in (0,0.249)$, setting $r = \left(\frac{1-\alpha}{0.249 - \alpha}\right)^{1/k}$. Then, for all $n \ge \max\{c_1\log(16D/\delta)/\epsilon, \log(16/\delta)/\alpha^2\}$, we have
\begin{align*}
   |\tilde{\mu} - \mu|  = O\left(\sqrt{\frac{2\log(16/\delta)}{n}}+\frac{M \log(16/\delta)}{\epsilon n}+\left(\frac{2}{M}\right)^{k-1}  + \alpha M\right)
\end{align*}
Now, we need to choose $M$ to minimize the above while satisfying $M \ge 4 r$. By standard choice, we set $M = \min\{4 \left(\frac{n\epsilon}{\log(1/\delta)}\right)^{1/k}, 4 \alpha^{-1/k}\}$, which satisfies $M \ge 4 r$ when $n$ and $\alpha$ satisfy 
\begin{align*}
    n \ge \frac{\iota\log(1/\delta)}{\epsilon} \quad \text{and} 
    \quad \frac{1}{\alpha} \ge \iota,
\end{align*}
where recall that $\iota = \frac{1-\alpha}{0.249 - \alpha}$. Hence, we only have  a valid concentration for $\alpha \in (0,0.133)$.

\end{proof}

\begin{proof}[\bf Proof of Theorem \ref{thm:central-clean}]
    Let $\tau_0$ be the maximal epoch such that $B_\tau <  \frac{200\log (16 D|\mathcal{S}|\tau^2/\delta)}{\epsilon} $. 

 For all epoch $\tau \le \tau_0$, the batch size is less than $2^{\tau_0}$. Since batch size doubles,  until epoch $\tau_0$, we have the number of pulls for each arm $a \in [K]$ is less than $2\cdot 2^{\tau_0} \le 2 \frac{200\log (16D|\mathcal{S}|\tau_0^2/\delta)}{\epsilon}$. Then the regret has to suffer $ \frac{400\log (16D|\mathcal{S}|\tau_0^2/\delta)}{\epsilon} \Delta_a$ for each $a \in [K]$.

 For $\tau > \tau_0$, $B_\tau \ge  \frac{200\log (16D|\mathcal{S}|\tau^2/\delta)}{\epsilon} $. For each $a \in \mathcal{S}$, from Corollary \ref{cor:clean}, we have with probability at least $1-\frac{\delta}{2|\mathcal{S}|\tau^2}$,
$$|\Tilde{\mu}_a-\mu_a| \le  \beta_{\tau}.$$ Given an epoch $\tau > \tau_0$, we denote by $\mathcal{E}_\tau$ the event where for all $a \in \mathcal{S}$ it holds that  $|\Tilde{\mu}_a-\mu_a| \le  \beta_{\tau}.$ and denote $\mathcal{E}=\cup_{\tau > \tau_0}\mathcal{E}_\tau$.By taking union bound, we have
$$\mathbb{P}(\mathcal{E}_\tau)\ge 1-\frac{\delta}{2\tau^2},$$
and $$\mathbb{P}(\mathcal{E})\ge 1-\frac{\delta}{2}\left(\sum_{\tau > \tau_0} \tau^{-2} \right)\ge 1-\delta.$$
In the following, we condition on the good event $\mathcal{E}$. We first show that the optimal arm $a^*$ is never eliminated. For any epoch $\tau >\tau_0$, let $a_\tau=\arg\max_{a\in\mathcal{S}}{\widetilde{\mu}_a}$. Since $$(\widetilde{\mu}_{a_\tau}-\widetilde{\mu}_{a^*})+\Delta_{a_\tau}=|(\widetilde{\mu}_{a_\tau}-\widetilde{\mu}_{a^*})+\Delta_{a_\tau}|\le \left|\widetilde{\mu}_{a_\tau}-\mu_{a_\tau}\right|+\left|\widetilde{\mu}_{a^*}-\mu_{a^*}\right| \le 2\beta_\tau,$$
it is easy to see that the algorithm doesn't eliminate $a^*$.

Then, we show that at the end of epoch $\tau > \tau_0$, all arms such that $\Delta_a \ge 4\beta_\tau$ will be eliminated. To show this, we have that under good event $\mathcal{E}$,
$$\Tilde{\mu}_a+\beta_\tau\le \mu_a+2\beta_\tau <\mu_{a^*}-4\beta_\tau +2\beta_\tau \le \Tilde{\mu}_{a^*}-\beta_\tau \le \Tilde{\mu}_{a_\tau}-\beta_\tau$$
which implies that arm $a$ will be eliminated by the rule. Thus, for each sub-optimal arm $a$, let $\tau(a)$ be the last epoch that arm $a$ is not eliminated. By the above result, we have
$$\Delta_a \le 4\beta_{\tau(a)}=O\left(\sqrt{\frac{\log(|\mathcal{S}|\tau(a)^2/\delta)}{B_{\tau(a)}}}+\left(\frac{ \log(|\mathcal{S}|\tau(a)^2/\delta)}{B_{\tau(a)} \epsilon}\right)^{1-\frac{1}{k}} \right).$$

Hence, we have
$$B_{\tau(a)} \le O\left(\frac{ \log(|\mathcal{S}|\tau(a)^2/\delta)}{\Delta_a^2}+ \frac{ \log(|\mathcal{S}|\tau(a)^2/\delta)}{\epsilon}\left(\frac{1}{\Delta_a}\right)^{\frac{k}{k-1}}+ \frac{\log (D|\mathcal{S}|\tau_0^2/\delta)}{\epsilon}\right). $$

Since $|\mathcal{S}|\le K$ and $2^\tau \le T$ for any $\tau$. Thus,

$$B_{\tau(a)} \le O\left(\frac{ \log(K\log^2 T/\delta)}{\Delta_a^2}, \frac{ \log(K\log^2 T/\delta)}{\epsilon}\left(\frac{1}{\Delta_a}\right)^{\frac{k}{k-1}}, \frac{\log (D K\log^2 T/\delta)}{\epsilon}\right), $$

Since the batch size doubles, we have $N_a(T) \le 2 B_{\tau(a)}$ for each sub-optimal arm $a$.  Therefore,  for all arm $a\in [K]$,$$\mathcal{R}_T= \sum_{a\in [K]} N_a(T)\Delta_a \le 2 B_{\tau(a)}\Delta_a.$$ 

Let $\eta$ be a number in $(0,1)$. For all arms $a \in [K]$ with $\Delta_a \le \eta$, the regret incurred by pulling these arms is upper bounded by $T\eta$. For any arm $a \in [K]$ with $\Delta_a > \eta$, choose $\delta=\frac{1}{T}$ and assume $T\ge K$, then the expected regret incurred by pulling arm $a$ is upper bounded by $$\begin{aligned}
    \mathbb{E}\left[\sum_{a\in [K], \Delta_a >\eta}\Delta_a N_a(T)\right]
    &\le \mathbb{P}(\Bar{\mathcal{E}})\cdot T +O\left(\sum_{{a\in [K], \Delta_a >\eta}} \left\{\frac{\log T}{\Delta_a}+ \frac{\log T}{\epsilon} \left(\frac{1}{\Delta_a}\right)^{\frac{1}{k-1}}+\frac{\log D T}{\epsilon}\Delta_a\right\}\right)\\
    & \le O\left(\frac{K\log T}{\eta} +\frac{K \log T}{\epsilon \eta^\frac{1}{k-1}}+\frac{KD \log (D T)}{\epsilon}\right)
\end{aligned}$$
where the last term in the last inequality is based on following result:
from the heavy-tailed assumption for rewards distributions in Definition \ref{def:central}, we have for any $a\in [K]$, $\mu_a \in [-D,D]$, so $\Delta_a =\mu^*-\mu_a \le 2D$.

Thus the regret is at most $$T \eta +O\left(\frac{K\log T}{\eta} +\frac{K \log T}{\epsilon \eta^\frac{1}{k-1}}+\frac{KD \log (DT)}{\epsilon}\right).$$

Taking $\eta=\max\left\{\sqrt{\frac{K\log T}{T}},\left(\frac{K\log T}{T\epsilon}\right)^{\frac{k-1}{k}}\right\}$,  the regret is at most $$O\left(\sqrt{KT \log T}+\left(\frac{K\log T}{\epsilon}\right)^{\frac{k-1}{k}}T^{\frac{1}{k}}+\frac{DK \log (DT)}{\epsilon}\right).$$

For privacy guarantee, based on Laplacian mechanism in Definition \ref{def:3}, privacy guarantee for histogram learner in \cite[Lemma 2.3]{karwa2017finite}, parallel composition theorem in Lemma \ref{lem:parallel} and Post-processing in Lemma \ref{PostPro}, we can get the result.

\end{proof}

\begin{proof}[\bf Proof of Theorem \ref{thm:central-cont}]
    The proof of the theorem is similar to the proof of Theorem \ref{thm:RegRawUp}, now the requirement for batch size to start to arm elimination becomes $\max\{\frac{\iota\log(16/\delta)}{\epsilon}, \frac{c_1\log(16D/\delta)}{\epsilon}, \frac{\log(16/\delta)}{\alpha^2}\}$ and the upper bound of $\Delta_a$ for each $a \in [K]$ is $2D$. Then we can get the result of upper bound for regret.

    For privacy guarantee, based on Laplacian mechanism in Definition \ref{def:3}, privacy guarantee for histogram learner in \cite[Lemma 2.3]{karwa2017finite}, parallel composition theorem in Lemma \ref{lem:parallel} and Post-processing in Lemma \ref{PostPro}, we can get the result.
\end{proof}

\end{document}